\documentclass{article}

\usepackage{microtype}
\usepackage{graphicx}
\usepackage{subcaption}
\usepackage{booktabs} % for professional tables
\usepackage{multirow}
\usepackage{enumitem}

\usepackage{hyperref}
\usepackage{xurl}  % Allow long URLs in the bibliography to break anywhere, fixing Underfull \hbox in narrow two-column layout.

\usepackage{xcolor}
\usepackage{xspace}
\usepackage[accepted]{icml2026}

\usepackage{amsmath}
\usepackage{amssymb}
\usepackage{mathtools}
\usepackage{amsthm}
\DeclareMathOperator*{\argmax}{arg\,max}

\usepackage[capitalize,noabbrev]{cleveref}

\usepackage[normalem]{ulem}
\theoremstyle{plain}
\newtheorem{theorem}{Theorem}[section]
\newtheorem{proposition}[theorem]{Proposition}
\newtheorem{lemma}[theorem]{Lemma}
\newtheorem{corollary}[theorem]{Corollary}
\theoremstyle{definition}

\theoremstyle{remark}
\newtheorem{remark}[theorem]{Remark}

\usepackage[disable,textsize=tiny]{todonotes}
\icmltitlerunning{Threshold-Based Exclusive Batching for LLM Inference}

\begin{document}

\twocolumn[
  \icmltitle{Threshold-Based Exclusive Batching for LLM Inference}
  
  % It is OKAY to include author information, even for blind submissions: the
  % style file will automatically remove it for you unless you've provided
  % the [accepted] option to the icml2026 package.

  % List of affiliations: The first argument should be a (short) identifier you
  % will use later to specify author affiliations Academic affiliations
  % should list Department, University, City, Region, Country Industry
  % affiliations should list Company, City, Region, Country

  % You can specify symbols, otherwise they are numbered in order. Ideally, you
  % should not use this facility. Affiliations will be numbered in order of
  % appearance and this is the preferred way.
  \icmlsetsymbol{equal}{*}

  \begin{icmlauthorlist}
    \icmlauthor{Weifang Zhang}{equal,polyu}
    \icmlauthor{Yuzhou Nie}{equal,ucsb}
    \icmlauthor{Bowen Pang}{cas}
    \icmlauthor{Guangrui Ma}{meituan}
    \icmlauthor{Shining Wu}{polyu}
  \end{icmlauthorlist}

  \icmlaffiliation{polyu}{The Hong Kong Polytechnic University}
  \icmlaffiliation{ucsb}{UC Santa Barbara}
  \icmlaffiliation{cas}{Technology and Engineering Center for Space Utilization, Chinese Academy of Sciences}  
  \icmlaffiliation{meituan}{Meituan}

  \icmlcorrespondingauthor{Shining Wu}{sn.wu@polyu.edu.hk}

  % You may provide any keywords that you find helpful for describing your
  % paper; these are used to populate the "keywords" metadata in the PDF but
  % will not be shown in the document
  \icmlkeywords{Machine Learning, ICML}

  \vskip 0.3in
]

% this must go after the closing bracket ] following \twocolumn[ ...

% This command actually creates the footnote in the first column listing the
% affiliations and the copyright notice. The command takes one argument, which
% is text to display at the start of the footnote. The \icmlEqualContribution
% command is standard text for equal contribution. Remove it (just {}) if you
% do not need this facility.

% Use ONE of the following lines. DO NOT remove the command.
% If you have no special notice, KEEP empty braces:
% \printAffiliationsAndNotice{}  % no special notice (required even if empty)
% Or, if applicable, use the standard equal contribution text:
\makeatletter
\stepcounter{Hfootnote}%
\hyper@makecurrent{Hfootnote}%
\global\let\Hy@footnote@currentHref\@currentHref
\makeatother
\printAffiliationsAndNotice{\icmlEqualContribution}

%%%%%%%%%%%%%%%%%%%%%%%%%%%%%%%%%%%%%%%%%%%%%%%%%%%%%%%%%%%%%%%%%%%%%%%%%%%%%%%
% MAIN CONTENT
%%%%%%%%%%%%%%%%%%%%%%%%%%%%%%%%%%%%%%%%%%%%%%%%%%%%%%%%%%%%%%%%%%%%%%%%%%%%%%%

\begin{abstract}

Mixed batching (MB)---interleaving prefill and decode in a single batch---has become the standard scheduling strategy for large language model (LLM) inference due to its efficiency in maximizing compute and memory utilization. However, through controlled experiments, we find that prefill--decode interference inflates MB's per-step marginal cost above that of pure decode. On the high-bandwidth H200 (4.8~TB/s), this occurs only when decode tokens exceed 80\% of the batch; however, on the bandwidth-constrained RTX PRO 6000 (1.792~TB/s), this threshold plummets to just 20\%. Consequently, the optimal choice between MB and exclusive batching (EB) fundamentally depends on GPU memory bandwidth, model size, and workload composition. We derive a closed-form condition for this EB--MB performance crossover, along with asymptotically optimal phase-switching thresholds and memory-safe batch sizing for EB. Optimized EB achieves up to 41.9\% higher throughput on bandwidth-constrained GPUs, while MB retains its advantage on high-bandwidth hardware with larger models. Our hybrid scheduler EB\textsuperscript{+} applies this condition online to dynamically switch between EB and MB without manual intervention. Under non-stationary traffic with distribution or concurrency shifts, EB\textsuperscript{+} attains the highest or near-highest throughput in every setting, outperforming MB by up to 36.4\%.\footnote{Code: \href{https://github.com/weifang231/eb-vllm}{https://github.com/weifang231/eb-vllm}.}

\end{abstract}

\section{Introduction}
\label{sec:intro}

Large language model (LLM) inference consists of two distinct phases with fundamentally different computational characteristics. The \emph{prefill} phase processes input tokens in parallel to populate the key-value (KV) cache, making it compute-bound. The \emph{decode} phase generates tokens autoregressively, requiring repeated memory accesses to the KV cache, making it memory-bandwidth-bound~\cite{kwon2023efficient, pope2023efficiently, wang2025systematic}. This dichotomy creates an inherent inefficiency: during decoding, GPU compute units remain underutilized, while during prefill, memory bandwidth is not fully exploited.

Two dominant scheduling paradigms have emerged. \emph{Mixed batching} (MB)~\cite{agrawal2023sarathi} interleaves prefill and decode operations within the same batch, simultaneously utilizing GPU compute for prefill and memory bandwidth for decode. \emph{Exclusive batching} (EB) processes prefill and decode in separate batches, alternating phases by a scheduling rule. In this work we study a \emph{capacity-triggered} policy that switches to the prefill phase whenever $k$ decode slots become idle, denoting this family EB($k$).
MB has gained widespread adoption, with major inference engines including vLLM~\cite{kwon2023efficient} and SGLang~\cite{zheng2024sglang} transitioning to MB as their default scheduling mode. 
Throughout the paper we use v1 to denote vLLM v1 (as the MB baseline) and v0 to denote the vLLM v0 exclusive-batching scheduler, equivalent to EB($k{=}1$) when saturated.

In the context of single-GPU deployments, an intriguing dichotomy nonetheless persists in practice: while Western inference engines have largely standardized on MB, many large-scale production systems in China continue to favor EB. A plausible contributing factor is hardware---GPUs accessible in the Chinese market operate under tighter memory-bandwidth budgets, owing in part to export restrictions. This raises a concrete question: does memory bandwidth fundamentally alter the EB-vs-MB trade-off, and if so, when should each strategy be preferred?

To probe this question, we examine the marginal cost of processing tokens under different batching disciplines. We model the iteration time for processing $n_{\mathrm{tok}}$ tokens in a single batch as $T_{\text{iter}} = \alpha + \beta \cdot n_{\mathrm{tok}}$, where $\alpha$ is the fixed overhead and $\beta$ the marginal cost per token. Crucially, both $\alpha$ and $\beta$ depend on (1) the hardware profile (e.g., available memory bandwidth) and (2) the batch composition, which we characterize by the \emph{decode ratio} $r := n_{\mathrm{decode}}/n_{\mathrm{tok}}$, the fraction of decode tokens in a batch ($r{=}1$ for pure decode; $0{<}r{<}1$ for a mixed batch).

Controlled experiments on two GPUs with contrasting bandwidth---NVIDIA RTX PRO 6000 (1.792~TB/s) and NVIDIA H200 (4.8~TB/s)---reveal a sharp hardware dependence. The crossover point at which the mixed-batch marginal cost exceeds the pure-decode cost falls at $r\!\approx\!20\%$ on the RTX PRO 6000 but only at $r\!\approx\!80\%$ on the H200 (Figure~\ref{fig:marginal_cost}; averaged over 20 runs, variance negligible). These observations lead to our central hypothesis: EB with an optimized threshold should outperform MB in bandwidth-constrained environments, while MB should retain its advantage on high-bandwidth hardware, motivating both an analysis of the EB--MB crossover and the design of adaptive EB scheduling.

\begin{figure}[htbp]
    \centering
    \includegraphics[width=\columnwidth]{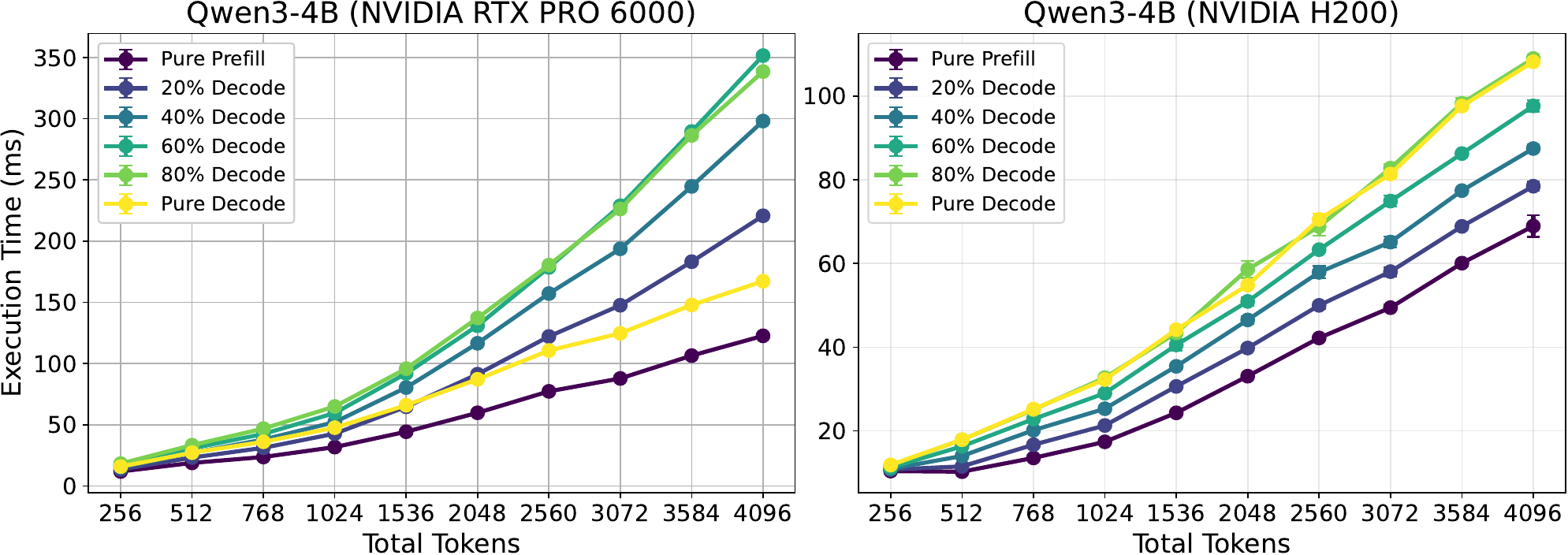}
    \caption{Execution time for mixed batches (Qwen3-4B, ctx=16) on RTX PRO 6000 (left) and H200 (right). %The crossover point where mixed batches exceed pure decode cost: 20\% (RTX PRO 6000) vs.\  80\% (H200).
    }
    \label{fig:marginal_cost}
\end{figure}

Kernel profiling (Figure~\ref{fig:kernel_breakdown}) localizes the source of this gap: for a fixed total token count, GEMM time is largely invariant to $r$, whereas Attention time grows with $r$. Mixed-batch Attention becomes slower than pure-decode Attention at $r\!\approx\!20\%$ on the RTX PRO 6000, but only near $80\%$ on the H200. Consistent patterns hold across model scales (Gemma-3-1B-IT, Qwen3-8B, Qwen3-30B-A3B) and decode context lengths (Appendix~\ref{app:additional_marginal_cost}): both curves shift uniformly without altering their shape or crossover points.

This cross-GPU gap is consistent with bandwidth limitations~\cite{williams2009roofline}: decode attention streams the full KV-cache context per token and is therefore memory-bandwidth-bound. On bandwidth-constrained GPUs, co-locating prefills and decodes intensifies bandwidth contention and disproportionately inflates Attention latency; on high-bandwidth GPUs, this interference is weaker. This also suggests that current FlashAttention kernels~\cite{dao2022flashattention,dao2024flashattention2,shah2024flashattention3} are not fully optimized for mixed-batch inference on bandwidth-limited GPUs, motivating a re-examination of the EB-vs-MB trade-off.

Based on these findings, we derive a closed-form condition characterizing when EB outperforms MB, governed by the marginal-cost gap between mixed and exclusive batches (Section~\ref{sec:model}). To realize optimal EB in practice, we derive an analytical expression for the normalized threshold $\theta^* = k^*/N$ and develop an adaptive scheduler, denoted EB($\hat{k}^*$), that computes $\hat{k}^*$ online from workload characteristics at runtime. We further use the same crossover condition to drive online mode selection between EB and MB.

This paper makes the following contributions:
\begin{itemize}[nosep]
    \item We provide empirical evidence that MB does not universally dominate EB, and derive a closed-form condition for the EB--MB crossover governed by the marginal-cost gap and amortized fixed-cost advantage.
    \item We develop an adaptive EB scheduler with closed-form, asymptotically optimal phase-switching thresholds under stochastic output-length distributions, paired with memory-aware batch sizing that has probabilistic OOM guarantees.
    \item We propose EB\textsuperscript{+}, a hybrid scheduler that applies the crossover condition online to switch between EB and MB at runtime, remaining robust under non-stationary traffic and matching or exceeding PD-disaggregation throughput without extra hardware.
    \item We validate these results on four NVIDIA GPUs spanning 0.9--8.0~TB/s bandwidth: EB($\hat{k}^*$) achieves up to 41.9\% throughput gains on bandwidth-constrained GPUs; on high-bandwidth hardware with larger models, v1 retains its advantage; and EB\textsuperscript{+} delivers the highest joint TTFT/TPOT goodput across traffic regimes.
\end{itemize}

\begin{figure}[htbp]
    \centering
    \includegraphics[width=\columnwidth]{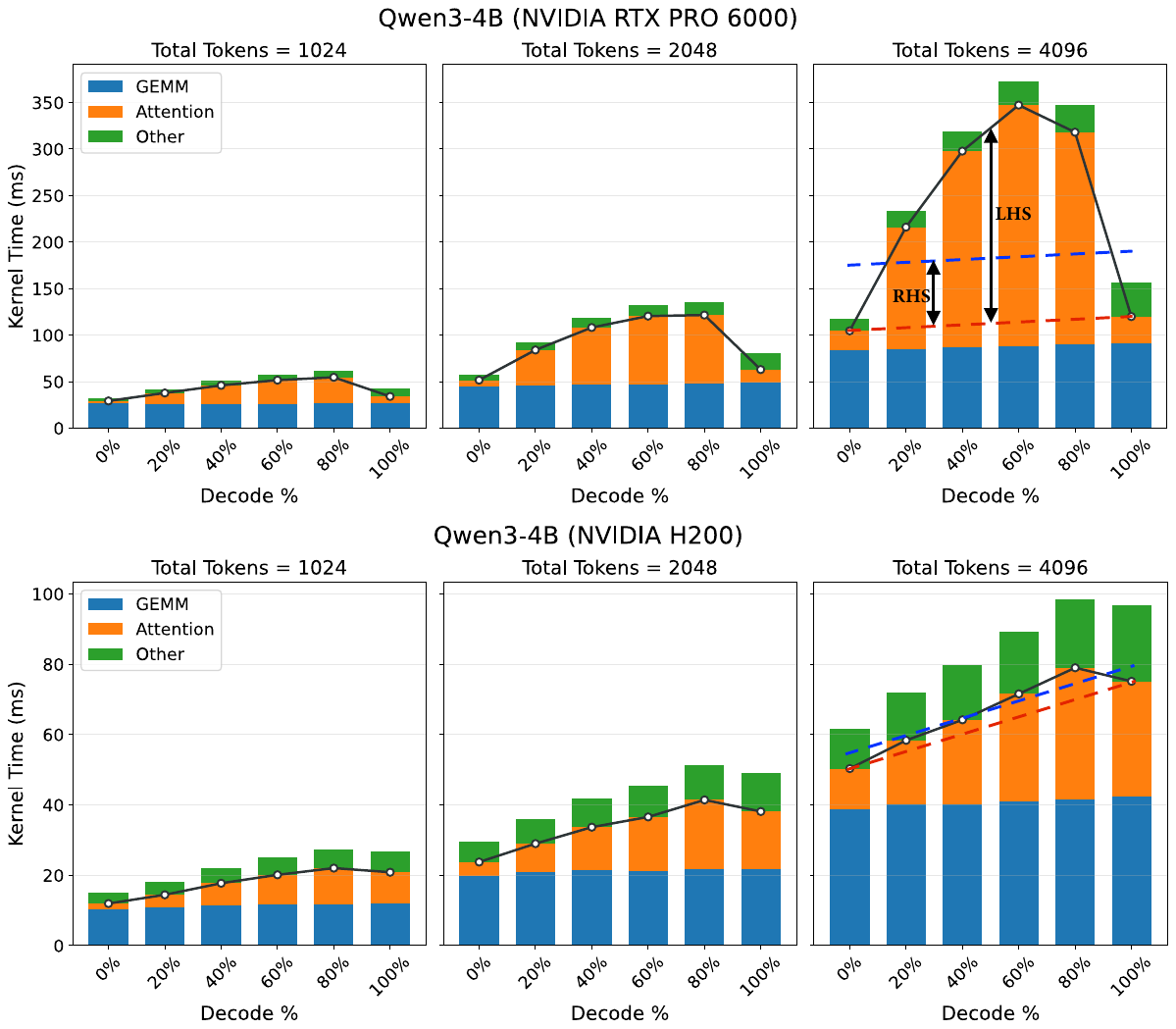}
    \caption{Kernel time breakdown for Qwen3-4B on RTX PRO 6000 (top) and H200 (bottom) vs.\ decode ratio $r$ at three total token counts. Black line: total mixed-batch cost. Right panels add the EB cost (red dashed) and EB-vs-MB threshold (blue dashed, Eq.~\eqref{eq:comparison_condition}); EB is favored when the black line exceeds the blue.}
    \label{fig:kernel_breakdown}
\end{figure}

\section{Related Work}
\label{sec:related}

The decode phase is memory-bandwidth-bound even at large batch sizes, leaving over 50\% of GPU compute idle~\cite{recasens2025mind}.  Its Byte-per-FLOP demand exceeds prefill's by ${\sim}100\times$, and this bottleneck deepens across GPU generations as compute outpaces bandwidth~\cite{li2025fenghuang}. Three scheduling strategies emerged in response. 

\textbf{Exclusive Batching} processes one phase per iteration~\cite{yu2022orca,kwon2023efficient,nvidia2023fastertransformer,aminabadi2022deepspeed}, avoiding intra-iteration prefill--decode interference; \citet{pang2025hybrid} extend this regime with a Lagrangian analysis of prefill-vs-decode insertion.

\textbf{Mixed Batching} co-locates prefill and decode tokens via \emph{chunked prefill}, splitting long prefills under a fixed token budget. Sarathi~\cite{agrawal2023sarathi,agrawal2024sarathiserve} introduced this with decode-maximal batching, with a similar formulation in DeepSpeed-FastGen's Dynamic SplitFuse~\cite{holmes2024deepspeed}; it is now the default in vLLM (v1), SGLang~\cite{zheng2024sglang}, TGI~\cite{tgi2023}, TensorRT-LLM~\cite{tensorrtllm2023}, and LightLLM~\cite{lightllm2023}. Chunking itself is orthogonal to EB/MB: EB can also chunk prefills to bound iteration time. MB pays two costs: co-location induces prefill--decode interference, and the aggressive chunking it requires (forced by sharing the token budget with decode) inflates MoE memory traffic by up to 39\% via redundant expert reloads~\cite{lee2025tokens} and scales long-context KV-cache loads as $O(N^2)$ for $N$-chunk prompts~\cite{zhong2024distserve}.

\textbf{Disaggregated Serving} eliminates inter-phase interference via dedicated P/D GPU pools (DistServe~\cite{zhong2024distserve}, Splitwise~\cite{patel2024splitwise}, Mooncake~\cite{qin2025mooncake}). The cost is twofold: architecturally, KV-cache transfer overhead (severe without NVLink/IB) and a doubled minimum GPU count; operationally, the static P:D split underutilizes decode under imbalance~\cite{shi2025nexus} and cannot adapt to demand shifts~\cite{hong2025semi}.

Orthogonal techniques target other layers of the serving stack: MoE serving~\cite{rajbhandari2022deepspeedmoe,gale2023megablocks} and quantization~\cite{xiao2023smoothquant,lin2024awq} optimize the model, KV cache management~\cite{zhang2023h2o,xiao2024efficient} compresses memory, and speculative decoding~\cite{leviathan2023speculative,miao2024specinfer} reduces decode iterations.

Within the scheduling-policy axis, we optimize EB via a closed-form online controller and combine it with MB into an adaptive hybrid (EB\textsuperscript{+}) that navigates the EB/MB trade-off on a single GPU pool.

\section{Scheduling Model for Exclusive Batching}
\label{sec:model}

We develop a scheduling framework for LLM serving under \textbf{exclusive batching}, where prefill and decode operations cannot be executed concurrently.
We formulate the phase-switching decision as an optimization problem, derive optimal thresholds, and present an online adaptive algorithm.

\subsection{Problem Formulation}
\label{sec:model:formulation}

\textbf{System Model.}
Consider a system with $N$ \emph{slots} (the \emph{maximum batch size}); each slot can hold the state of one in-progress \emph{request} and is either \emph{busy} (currently occupied by a request) or \emph{idle} (vacated upon completion). The system alternates between prefill and decode phases and operates in a saturated regime: the queue is consistently backlogged, so each batch operates at capacity $N$.

During a decode phase, which begins with $N$ busy slots, a slot becomes idle whenever its corresponding request completes (i.e., generates an end-of-sequence token). A \emph{phase switch} occurs when the number of idle slots reaches a predetermined threshold $k$: the system then enters a prefill phase, loading $k$ new requests into the vacated slots.

During this prefill phase, as the $k$ new requests are processed, the $(N-k)$ incomplete requests remain inactive---their KV cache is preserved but no tokens are generated. Once the prefill completes, the system resumes decoding with a full batch of $N$ requests, consisting of the $(N-k)$ ongoing requests and the $k$ newly admitted ones.
This dynamic creates a throughput-latency trade-off: \textbf{large $k$} (late switching) leaves completed slots idle longer and inflates queued requests' wait time, while \textbf{small $k$} (early switching) amortizes the fixed prefill overhead $\alpha_p$ over fewer requests and reduces efficiency.

\textbf{Timing Model.}
Each request has a random input length $L$ (i.i.d.\ with mean $\mu_L$) and output length $O$ drawn from a general distribution $F$ with hazard rate $h(t)$, mean $\mu_O$, and standard deviation $\sigma_O$.
Assuming linear iteration time and modeling each prefill phase as a single batch, the prefill time for a batch of $k$ requests (i.e., the length of a prefill phase) is $T_p(k) = \alpha_p + \beta_p \textstyle\sum_{i=1}^{k} L_i$, where $\alpha_p$ is the fixed prefill overhead and $\beta_p$ the per-token prefill cost.
The decode iteration time at a batch size $n$ is $t_d(n) = \alpha_d + \beta_d n$, and the total decode time $T_d(k;N)$ is the sum of iteration times in a decode phase, depending on both the starting batch size $N$ and the number $k$ of requests that complete before the phase ends.

\textbf{Throughput Optimization Problem.}
We define throughput as the average number of request completions per unit time and seek the threshold $k$ and batch size $N$ that maximize it subject to a KV-cache memory constraint. Let $C$ denote the KV-cache capacity and $X_{\max}(k, N)$ the peak instantaneous memory footprint of the unconstrained execution. The problem is formulated as follows:
\begingroup
\setlength{\abovedisplayskip}{6pt}
\setlength{\belowdisplayskip}{6pt}
\begin{align}
\max_{k, N} \quad & \mathrm{TP}(k, N) := \frac{k}{\mathbb{E}[T_d(k; N)] + \alpha_p + \beta_p k \mu_L}, \label{eq:opt-problem} \\
\text{s.t.} \quad & k \in \{1, \ldots, N\}, \nonumber \\
& \Pr(X_{\max}(k, N) > C) \leq \epsilon, \label{eq:capacity_constraint}
\end{align}
\endgroup

\textbf{Approximate Solution Approach.}
We adopt a \emph{decoupled approximation} that decomposes the joint optimization into two sequential steps. First, we derive the optimal normalized threshold $\theta^* = k^*/N$ in an asymptotic regime as $N \to \infty$, where $k^*$ is the optimal solution to~\eqref{eq:opt-problem} without the memory constraint~\eqref{eq:capacity_constraint} for a fixed $N$. We show that $\theta^*$ converges to a limit $\theta_0$ that depends exclusively on the output distribution and is independent of $N$ (Section~\ref{sec:model:threshold}). Second, we fix $k = \lfloor \theta_0 N \rfloor$ and determine the maximum batch size $N^*$ that satisfies \eqref{eq:capacity_constraint} (Section~\ref{sec:model:memory}).

While this decoupled method solves an approximation to the original problem~\eqref{eq:opt-problem}---meaning the threshold $\theta_0$ is optimal asymptotically and may not perfectly coincide with the exact finite-$N$ solution---it provides significant analytical advantages. Specifically, the approximation yields tractable closed-form expressions, revealing important structural insights into LLM inference scheduling and enabling efficient online adaptation. We empirically validate its near-optimal performance in Section~\ref{sec:validation}.

Furthermore, to derive a closed-form expression for the expected duration of a decode phase, $\mathbb{E}[T_d(k;N)]$, we apply \emph{fluid approximation}. Here, the stochastic decode completion process is approximated by its deterministic limit as the system scale approaches infinity. This approximation is highly accurate in the large-batch regime typical of modern LLM serving.

\subsection{Optimal Switching Threshold}
\label{sec:model:threshold}

Intuitively, balancing the amortization of the fixed prefill cost against the waste of idle decode slots depends fundamentally on how likely additional completions are in the near future. Thus, the \emph{hazard rate} $h(t) = f(t)/\bar{F}(t)$ of the output-length distribution plays a key role: a higher hazard rate means completions arrive faster (favoring delayed switching), while a lower rate makes waiting costly (favoring earlier switching).

We first analyze the constant failure rate (CFR) case, which admits a closed-form solution, and then extend the analysis to the increasing failure rate (IFR) case, which better captures real LLM workloads where longer-running requests become progressively more likely to complete.

\subsubsection{Base Threshold Under CFR}

Assume that the decode length distribution exhibits a constant hazard rate $h(t) = p_0$ (i.e., geometric output lengths), implying $\mu_O = 1/p_0$.
Under the fluid approximation, the scaled number of decode requests decreases according to the differential equation $\dot n(t) = -p_0\, n(t)$ during a decode phase, yielding the expected duration:
\[
\mathbb{E}[T_d(k;N)] = \left[\beta_d N \theta - \alpha_d \ln(1-\theta)\right] p_0^{-1},
\]
where $\theta=\frac{k}{N}$. Substituting $\mathbb{E}[T_d(k;N)]$ into \eqref{eq:opt-problem}, we obtain the throughput under EB:
\begin{equation*}
    \mathrm{TP}_{\mathrm{EB}}(k,N) = \left[\tfrac{\alpha_p - \alpha_d \mu_O \ln(1-\theta)}{k} + \beta_{\mathrm{EB}}^w(\mu_L+\mu_O)\right]^{-1},
\end{equation*}
where $\beta_{\mathrm{EB}}^w = \frac{\beta_p \mu_L + \beta_d \mu_O}{\mu_L + \mu_O}$ represents the workload-weighted average of the prefill and decode marginal costs. Solving $k^* = \argmax_k \mathrm{TP}_{\mathrm{EB}}( k, N )$ yields the optimal switching threshold for the unconstrained problem~\eqref{eq:opt-problem}. The following proposition establishes the limiting behavior of $k^*$ as the batch size $N \to \infty$.
\begin{proposition}[Base Threshold under CFR]
\label{prop:threshold_cfr}
Under a constant hazard rate $h(t) = p_0$ and the fluid approximation of the decode phase duration, the limiting optimal normalized threshold $\theta_0 := \lim\limits_{N\to\infty} \theta^* = \lim\limits_{N\to\infty} k^*/N$ is the unique solution to
\begin{equation}\label{eq:theta_base}
    \theta_0 (1-\theta_0)^{-1} + \ln(1-\theta_0) = p_0 \alpha_p \alpha_d^{-1},
\end{equation}
which depends only on the single ratio $p_0\alpha_p/\alpha_d$. In particular, it is independent of $N$, $\mu_L$, and the per-token costs $\beta_p, \beta_d$. We define $\zeta \triangleq -\ln(1-\theta_0) > 0$ for brevity in what follows. (Proof in Appendix~\ref{app:proof:threshold_cfr}.)
\end{proposition}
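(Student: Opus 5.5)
We will reduce the throughput maximization to a one-dimensional problem in $\theta=k/N$ and extract its first-order condition.

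Since $\mathrm{TP}_{\mathrm{EB}}(k,N)$ is the reciprocal of a positive quantity, and the term $\beta^w_{\mathrm{EB}}(\mu_L+\mu_O)$ does not depend on $k$, maximizing $\mathrm{TP}_{\mathrm{EB}}$ over $k$ is equivalent to minimizing
\[
g_N(k) := \frac{\alpha_p - \alpha_d\mu_O \ln(1-k/N)}{k}.
\]
Writing $k=\theta N$ gives $g_N = \frac{1}{N}\,G_N(\theta)$ with $G_N(\theta) = \frac{\alpha_p - \alpha_d\mu_O\ln(1-\theta)}{\theta}$. The factor $1/N$ is irrelevant to the argmin. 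This already explains the independence from $\mu_L$, $\beta_p$ and $\beta_d$: they appear only in the additive constant.

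We would first relax $\theta$ to the continuum $(0,1)$. Differentiating the numerator of $G_N'$ gives
\[
\theta\cdot\frac{\alpha_d\mu_O}{1-\theta} - \bigl(\alpha_p - \alpha_d\mu_O\ln(1-\theta)\bigr).
\]
Setting this to zero and dividing by $\alpha_d\mu_O = \alpha_d/p_0$ yields exactly $\theta/(1-\theta)+\ln(1-\theta)=p_0\alpha_p/\alpha_d$. The only parameters entering are those in this ratio, and $N$ does not appear.

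Uniqueness follows by studying $\phi(\theta)=\theta/(1-\theta)+\ln(1-\theta)$. We have $\phi(0)=0$, $\phi(\theta)\to\infty$ as $\theta\to1$, and
\[
\phi'(\theta)=\frac{1}{(1-\theta)^2}-\frac{1}{1-\theta}=\frac{\theta}{(1-\theta)^2}>0
\]
on $(0,1)$. So $\phi$ is a strictly increasing bijection from $(0,1)$ onto $(0,\infty)$, and since $p_0\alpha_p/\alpha_d>0$ there is exactly one root $\theta_0$. The sign of the derivative numerator is that of $\alpha_d\mu_O\bigl(\phi(\theta)-p_0\alpha_p/\alpha_d\bigr)$, which is negative below $\theta_0$ and positive above it. Hence $G$ is strictly unimodal with its unique minimizer at $\theta_0$, and $\zeta=-\ln(1-\theta_0)>0$.

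The main obstacle is the integrality step: $k^*$ is an integer in $\{1,\dots,N\}$, and we must show $k^*/N\to\theta_0$. Under the fluid approximation the objective is exactly $G(\theta)$ evaluated on the grid $\{1/N,\dots,1\}$, where $G(1)=\infty$ and $G$ is unimodal. The discrete minimizer must therefore be one of the two grid points adjacent to $\theta_0$, giving $|k^*/N-\theta_0|\le 1/N\to0$. If one insists on a fixed-$N$ form with lower-order corrections, a standard argmin-continuity argument would replace this step: $G$ is continuous, has a unique minimizer, and diverges at the endpoints, so uniform convergence on compacts gives convergence of the minimizers.
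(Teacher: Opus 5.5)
Your proposal is correct and follows the paper's route. Minimizing $\mathcal{T}(\theta)/\theta$, with the $\beta$ terms split off as an additive constant, gives the same first-order condition $\mathcal{T}=\theta\mathcal{T}'$ that the paper derives. Your sign argument via the monotonicity of $\phi$ is exactly the paper's $\Psi$/$\Phi$ argument, since $\mathcal{T}-\theta\mathcal{T}' = -(\alpha_d/p_0)\bigl(\phi(\theta)-p_0\alpha_p/\alpha_d\bigr)$. The paper treats $\theta$ as continuous and only lists integer rounding as an $O(1/N)$ error source, so your grid argument giving $|k^*/N-\theta_0|\le 1/N$ is a useful addition that makes the stated $N\to\infty$ limit explicit.
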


This proposition implies that $\theta^*$ can be accurately approximated by $\theta_0$ in the large-batch regime. To facilitate solving the capacity-constrained problem in the second step, we use the approximated value $k^*_0 := \lfloor \theta_0 N \rfloor$ as our practical solution to the unconstrained first-step problem. The simplicity of this parameter dependence makes the threshold highly practical to deploy: it is computed once from the easily measured tuple $(\alpha_p, \alpha_d, p_0)$.

The optimal throughput for the first-step problem is then given by:
\begin{equation}\label{eq:throughput_EB}
    \mathrm{TP}_{\mathrm{EB}}( k^*_0, N ) =
    \left[
        \frac{\alpha_p + \alpha_d \zeta \mu_O}{k^*_0} + \beta_{\mathrm{EB}}^w ( \mu_L + \mu_O )
    \right]^{-1},
\end{equation}

\subsubsection{Extension to IFR}

Real LLM workloads exhibit IFR: as a request generates more tokens, it becomes progressively more likely to complete in the near future. Within a decode phase, this accelerating completion rate reduces the additional wall-clock cost of waiting for one more idle slot. Consequently, the opportunity cost of delaying the phase switch is lower than under CFR, allowing the system to profitably accumulate more completions before switching. In short, IFR workloads support \emph{higher} optimal switching thresholds than the CFR baseline.

Because the strength of IFR varies across workloads, the exact optimal limit $\theta^*$ can differ substantially from the base CFR limit $\theta_0$, motivating an analytical correction. We model IFR using a linear hazard rate $h(t)=p_0+\eta t$ with $\eta>0$.

\begin{theorem}[IFR Threshold Correction]
\label{thm:threshold_ifr}
Under a linear hazard rate $h(t) = p_0 + \eta t$ with $\eta > 0$, the optimal threshold admits the expansion
    $\theta^* = \theta_0 + \Delta\theta + O(\eta^2)$,
where $\theta_0$ is the CFR base threshold from Proposition~\ref{prop:threshold_cfr} and
\begin{equation}\label{eq:delta_theta}
    \Delta\theta = \frac{\eta(1-\theta_0)^2}{p_0^2 \, \theta_0}
    \bigg[ 
        \underbrace{\zeta\left(\frac{\theta_0}{1-\theta_0} - \frac{\zeta}{2}\right)}_{\text{duration effect}}
        + \underbrace{\frac{\beta_d N}{\alpha_d}(\zeta - \theta_0)}_{\text{per-token cost effect}}
    \bigg],
\end{equation}
with $\zeta = -\ln(1-\theta_0)$. The correction satisfies $\Delta\theta > 0$
for all $\eta > 0$. (Proof in Appendix~\ref{app:proof:threshold_ifr}; empirical validation in Appendix~\ref{app:delta-positive}.)
\end{theorem}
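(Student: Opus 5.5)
The plan is to reduce the claim to a one-dimensional first-order condition in $\theta=k/N$ and perturb it in $\eta$ around the CFR solution of Proposition~\ref{prop:threshold_cfr}, using the implicit function theorem.

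\textbf{Step 1: fluid model under the linear hazard.} Following the CFR analysis, the fluid approximation treats the $N$ requests at the start of a decode phase as a cohort. Their surviving fraction is
\[
S(t)=\exp\!\left(-p_0 t-\tfrac{\eta}{2}t^2\right),
\]
with $t$ counted in decode iterations. The phase ends at the time $\tau(\theta)$ with $S(\tau)=1-\theta$, i.e.
\[
p_0\tau+\tfrac{\eta}{2}\tau^2=\zeta(\theta):=-\ln(1-\theta).
\]
Solving perturbatively gives
\[
\tau=\frac{\zeta}{p_0}-\frac{\eta\,\zeta^2}{2p_0^3}+O(\eta^2).
\]
Each iteration costs $\alpha_d+\beta_d N S(t)$, so
\[
\mathbb{E}[T_d]=\alpha_d\tau+\beta_d N\int_0^{\tau}S(t)\,dt.
\]
I will expand the integral to first order in $\eta$:
\[
\int_0^\tau S\,dt=\frac{\theta}{p_0}-\frac{\eta}{p_0^3}\Big[\theta-(1-\theta)\zeta\Big]+O(\eta^2),
\]
obtained by integrating $e^{-p_0t}(1-\eta t^2/2)$ up to $\tau$. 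The precise bracket must be checked; it is built to vanish at $\theta=0$ and to produce the factor $\zeta-\theta$ after differentiation.

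\textbf{Step 2: first-order condition.} Maximizing $\mathrm{TP}$ is equivalent to minimizing the per-request cost
\[
J(\theta,\eta)=\frac{\alpha_p+\alpha_d\tau(\theta,\eta)+\beta_d N\int_0^{\tau}S\,dt}{N\theta}.
\]
The prefill per-token term is a constant and drops out. At $\eta=0$ the $\beta_d$ contribution equals $\beta_d/p_0$, independent of $\theta$. Hence $\partial_\theta J=0$ reduces exactly to Eq.~\eqref{eq:theta_base}, which confirms $\theta_0$ as the base point.

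Define $F(\theta,\eta):=N\theta^2\,\partial_\theta J$ (clearing the denominator is harmless at a stationary point). Then the implicit function theorem gives
\[
\Delta\theta=-\eta\,\frac{\partial_\eta F}{\partial_\theta F}\bigg|_{(\theta_0,0)}.
\]

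\textbf{Step 3: the two partial derivatives.} For $\partial_\theta F$ at $\eta=0$: since
\[
F=\alpha_d\big(\theta\zeta'(\theta)-\zeta\big)/p_0-\alpha_p,
\qquad \zeta'=\frac{1}{1-\theta},
\]
we get
\[
\partial_\theta F=\frac{\alpha_d\,\theta_0}{p_0(1-\theta_0)^2}.
\]
This produces the prefactor $(1-\theta_0)^2/\theta_0$ in Eq.~\eqref{eq:delta_theta}.

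For $\partial_\eta F$, I will differentiate the $O(\eta)$ corrections of $\tau$ and of $\int S$ in $\theta$. Then I combine them through the map $g\mapsto \theta g'-g$ and use Eq.~\eqref{eq:theta_base} to eliminate $\alpha_p$ where it appears. This should split the result into the two bracketed pieces:
\begin{itemize}[nosep]
\item the $\alpha_d$-part, giving $\zeta\big(\tfrac{\theta_0}{1-\theta_0}-\tfrac{\zeta}{2}\big)$;
\item the $\beta_d N$-part, giving $\tfrac{\beta_d N}{\alpha_d}(\zeta-\theta_0)$;
\end{itemize}
both with the common factor $1/p_0^2$ after dividing by $\partial_\theta F$.

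\textbf{Step 4: positivity.} For $\theta\in(0,1)$ we have
\[
\zeta=-\ln(1-\theta)>\theta
\quad\text{and}\quad
\frac{\theta}{1-\theta}>\zeta,
\]
the latter because $\tfrac{\theta}{1-\theta}-\zeta$ vanishes at $0$ and has derivative $\tfrac{\theta}{(1-\theta)^2}>0$. Hence $\tfrac{\theta_0}{1-\theta_0}-\tfrac{\zeta}{2}>\tfrac{\zeta}{2}>0$. Both brackets are then positive, and all prefactors are positive, so $\Delta\theta>0$ for all $\eta>0$.

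\textbf{Main obstacle.} I expect the hard part to be the bookkeeping in Step 3. One must be careful about the order of operations: differentiate the $\eta$-corrections in $\theta$ (including through $\zeta(\theta)$), then evaluate at $\theta_0$. Only at that point should the base equation be used to remove $\alpha_p$, so that the duration term collapses to the stated form. I would first verify the small-$\theta$ limit and the signs numerically before committing to the simplification. The $O(\eta^2)$ remainder is routine, since $F$ is smooth in $(\theta,\eta)$ near $(\theta_0,0)$ and $\partial_\theta F\neq0$ there.
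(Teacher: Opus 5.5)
Your proposal is correct and matches the paper's proof essentially step for step. It uses the same fluid expansions of $\tau$ and $\int_0^\tau S\,dt$, and your $F=N\theta^2\partial_\theta J=\theta\mathcal{T}'-\mathcal{T}$ is exactly the negative of the paper's residual $\Phi$, so the implicit-function computation and the value of $\partial_\theta F$ coincide. One simplification: $\alpha_p$ is $\eta$-independent, so it never appears in $\partial_\eta F$, and the duration term comes out in the stated form without the base equation. The paper uses that equation only to rewrite the duration term as $\zeta(\gamma+\zeta/2)$ for positivity, and your inequality $\theta/(1-\theta)>\zeta$ does the same job.
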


The IFR correction reveals two structural features absent in the base case. First, unlike the $N$-independent $\theta_0$, $\Delta\theta$ depends explicitly on $\rho = \beta_d N/\alpha_d$, so IFR effects amplify for large batches with high per-token overhead ($\rho \gg 1$). Second, the prefactor $\eta/p_0^2$ automatically rescales the correction across workloads with varying degrees of IFR.

\subsection{Memory-Constrained Batch Sizing}
\label{sec:model:memory}

Given $\theta_0$ and setting $k=k^*_0 = \lfloor \theta_0 N \rfloor$, we now determine the maximum batch size $N^*$ such that the memory constraint~\eqref{eq:capacity_constraint} is satisfied. Memory evolves dynamically during decode---increasing as tokens are generated and dropping abruptly upon request completion---creating a sawtooth pattern whose peak determines feasibility.

\begin{proposition}[Memory-Safe Batch Size]
\label{prop:memory}
Under the CFR model with threshold $\theta_0$ (i.e., $k = \lfloor \theta_0 N \rfloor$), the maximum batch size satisfying $\Pr(X_{\max}(k, N) > C) \leq \epsilon$ is
\begin{equation}\label{eq:Nstar}
    N^* = \left\lfloor \frac{C - \ln(1/\epsilon)/(p_0^2\mu_L)}
    {\mu_L + \frac{1-\theta_0}{\theta_0\, p_0}\ln\frac{1}{1-\theta_0}} \right\rfloor.
\end{equation}
(Proof in Appendix~\ref{app:proof:memory}.)
\end{proposition}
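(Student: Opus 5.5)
We reverse-engineer the formula. The denominator suggests that the expected peak memory per slot is $\mu_L$ (prompt KV) plus the expected decoded tokens at the peak. The peak is attained just before a phase switch, when $N-k$ requests are still active and each carries $L_i$ plus its tokens decoded so far. In the fluid CFR steady state we can compute this directly. A decode phase lasts $\tau=\zeta/p_0$ iterations, since $n(\tau)=N(1-\theta_0)$ gives $e^{-p_0\tau}=1-\theta_0$. We then track the age distribution of survivors under memorylessness.

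The plan has four steps.

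\textbf{Step 1: stationary age profile.} In steady state, every phase starts with $N$ requests. A fraction $\theta_0$ are fresh, and the others survived one or more previous phases. A request admitted $j$ phases ago has age $j\tau$ at the start of the current phase and survives each phase with probability $1-\theta_0$. The composition is therefore geometric: a fraction $\theta_0(1-\theta_0)^j$ has age $j\tau$.

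\textbf{Step 2: decoded-token mass at the peak.} At the end of a phase, the survivors number $N(1-\theta_0)$ and each has age $(j+1)\tau$. Summing gives
\[
\sum_{j\ge0}\theta_0(1-\theta_0)^{j+1}(j+1)\tau \;=\; \frac{1-\theta_0}{\theta_0}\,\tau\cdot\theta_0/\theta_0 .
\]
After normalizing, the expected decoded tokens per slot should come out to $\frac{1-\theta_0}{\theta_0 p_0}\ln\frac{1}{1-\theta_0}$. I would verify that this matches the denominator exactly, possibly via the alternative accounting that counts peak memory against all $N$ slots immediately after prefill. Pinning down which instant is the peak, and the accounting of completed-slot memory, is the main obstacle. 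I would check both candidate instants and take the one whose expression matches $N\mu_L + N\frac{1-\theta_0}{\theta_0p_0}\zeta$. Adding the prompts gives the mean peak $\mathbb{E}[X_{\max}]\approx N\big(\mu_L+\frac{1-\theta_0}{\theta_0p_0}\zeta\big)$.

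\textbf{Step 3: tail bound.} We bound the fluctuation by a Chernoff argument. The excess comes from a sum of roughly geometric/exponential-tailed contributions whose rate parameter scales like $p_0$ per token. Combined with an input-length scale $\mu_L$, this gives a moment-generating-function bound of the form
\[
\Pr\big(X_{\max}>\mathbb{E}X_{\max}+s\big)\le \exp\!\big(-p_0^2\mu_L s\big).
\]
I expect to need an assumption such as exponential-tailed $L$ or a heuristic sub-exponential treatment to get exactly the constant $p_0^2\mu_L$. Setting the right-hand side equal to $\epsilon$ gives the safety margin $s=\ln(1/\epsilon)/(p_0^2\mu_L)$, which is independent of $N$ to leading order.

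\textbf{Step 4: solve for $N$.} The requirement $N\big(\mu_L+\frac{1-\theta_0}{\theta_0p_0}\zeta\big)+s\le C$ rearranges to a bound on $N$. Taking the largest integer that satisfies it yields \eqref{eq:Nstar}.
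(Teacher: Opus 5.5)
Your Steps 1, 2 and 4 match the paper: the geometric age profile, the per-slot decoded mass $\frac{1-\theta_0}{\theta_0}\tau=\frac{1-\theta_0}{\theta_0 p_0}\zeta$ (the paper gets it as $(1-\theta)\cdot\mathbb{E}[A-1\mid A\ge 2]\cdot t^*$ with $\mathbb{E}[A-1\mid A\ge 2]=1/\theta$), and the final rearrangement. The paper settles the anchoring question you leave open by not looking for a deterministic peak instant at all. Its reference level is $X_0$, the memory at the start of a decode phase right after prefill, when all $N$ slots hold a prompt. This gives $\mathbb{E}[X_0]=N\mu_L+\frac{N(1-\theta)}{\theta p_0}\zeta$, and the peak is written as $X_0+\sup_{t\ge 0}Y_t$ with $Y_t=X_t-X_0$. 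Your first candidate (just before the switch) is the wrong instant. By then the $k$ completed requests have released their prompts, which is exactly why that candidate misses the $\mu_L$ term.

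The genuine gap is Step 3. The margin $\ln(1/\epsilon)/(p_0^2\mu_L)$ does not come from a Chernoff bound on a sum of per-request contributions. Such a bound over $N$ contributions of scale $1/p_0$ controls deviations of order $\sqrt{N}/p_0$, which grow with $N$, so it cannot produce an $N$-independent rate $p_0^2\mu_L$. Your guess that $\mu_L$ enters through an exponential tail of $L$ is also off target.

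The missing idea is a drift/variance computation for the within-phase memory process. In an iteration with $n$ active requests, memory grows by $n$ tokens, while $J\sim\mathrm{Binomial}(n,p_0)$ completions each release $S=L+O$ with $\mathbb{E}[S]=\mu_L+1/p_0$. The $1/p_0$ part cancels the $n$ new tokens exactly, leaving drift $d_n=-np_0\mu_L$. The law of total variance gives $v_n\approx 2n/p_0$ for $p_0\ll 1$ and $\mu_L\ll 1/p_0$. Now treat $Y_t$ as a random walk with these increments, conservatively at $n=N$, and apply the exponential supremum bound $\Pr(\sup_t Y_t>x)\le\exp(-2|d_N|x/v_N)$. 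This gives rate $2Np_0\mu_L/(2N/p_0)=p_0^2\mu_L$, which is independent of $N$ because drift and variance both scale linearly in $N$. Setting the bound equal to $\epsilon$ yields the margin, so $\mu_L$ enters through the mean net drift, not through any tail property of $L$.

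Also, the margin must sit above $\mathbb{E}[X_0]$, not above $\mathbb{E}[X_{\max}]$ as in your display. Since $\mathbb{E}[\sup_t Y_t]=1/(p_0^2\mu_L)>0$, centering at $\mathbb{E}[X_{\max}]$ would subtract an extra $1/(p_0^2\mu_L)$ in the numerator and would not reproduce \eqref{eq:Nstar}.
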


Here, we utilize the geometric model for analytical tractability. The solution \eqref{eq:Nstar} also serves as a conservative bound for IFR scenarios, as IFR workloads yield lower peak memory due to more predictable completion patterns.

\subsection{Online Adaptive Algorithm}
\label{sec:model:adaptive}

We design an online controller that jointly adapts $(\hat k^*, \hat N^*)$ by estimating workload parameters $(\hat p_0,\hat\eta,\hat\mu_L)$ from recent requests and evaluating the closed-form expressions of Sections~\ref{sec:model:threshold}--\ref{sec:model:memory} at the current estimates. Here, the hat notation ($\hat{\cdot}$) denotes the empirically estimated values of unknown parameters.

\textbf{Online estimation.}
The system maintains sliding windows of the most recent output lengths ($\mathcal{W}_O$) and input lengths ($\mathcal{W}_L$). From $\mathcal{W}_O$, we estimate the empirical hazard rate at step $t$ as the fraction of completions at exactly $t$ among those still active at $t$: $\hat{h}(t) = \#\{O \in \mathcal{W}_O: O{=}t\}/\#\{O \in \mathcal{W}_O: O{\ge}t\}$, and fit $\hat{h}(t)=\hat{p}_0+\hat{\eta}t$ via weighted least squares over $t\in[1,t_{95}]$, where $t_{95}$ is the 95th percentile of recent output lengths. From $\mathcal{W}_L$, we estimate $\hat{\mu}_L$ via the sample mean.

\textbf{Threshold and batch-size update.}
Given $(\hat{p}_0,\hat{\eta})$, we compute $\hat\theta_0$ by solving~\eqref{eq:theta_base} and apply the IFR correction~\eqref{eq:delta_theta} to obtain $\hat{\theta}^*=\hat\theta_0+\widehat{\Delta\theta}$ (clipped to a practical range $[\theta_{\min},\theta_{\max}]$). Because $\widehat{\Delta\theta}$ depends on $N$ through $\rho=\beta_d N/\alpha_d$, we evaluate~\eqref{eq:delta_theta} using the previous cycle's $\hat{N}^*$, yielding a one-step fixed-point update that we empirically observe to converge within a few cycles.
We then compute a memory-safe batch size by applying Proposition~\ref{prop:memory} with $\theta=\hat{\theta}^*$ and $(p_0,\mu_L)=(\hat{p}_0,\hat{\mu}_L)$, yielding $\hat{N}^* = N^*(\hat{\theta}^*, \hat{p}_0, \hat{\mu}_L; C,\epsilon)$. Finally, we set $\hat{k}^*=\lfloor \hat{\theta}^* \hat{N}^* \rfloor$ and apply the updated tuple $(\hat{k}^*,\hat{N}^*)$. Independent of this periodic update cycle, a runtime KV-aware gate (Appendix~\ref{app:kv-aware-gate}) monitors memory and defers the decode$\to$prefill transition whenever instantaneous KV occupancy leaves insufficient headroom. This absorbs any random memory excursions that $\hat{N}^*$ may underestimate under high-$\sigma_O$ workloads. Pseudo-code is given in Algorithm~\ref{alg:adaptive_joint} (Appendix~\ref{app:adaptive-algo}).

\subsection{Adaptive Mode Selection: EB\textsuperscript{+}}
\label{sec:model:eb_plus}

Having derived \eqref{eq:throughput_EB} to approximate the throughput of EB($\hat{k}^*$), we now formulate the throughput of MB. By comparing the two, we deduce the condition (referred to as the \emph{crossover condition}) under which one strategy outperforms the other. Building on this analytical foundation, we then design \textbf{EB\textsuperscript{+}}, a hybrid scheduler that adaptively switches between EB and MB at runtime to globally maximize efficiency.

\textbf{MB Versus EB.}
We first approximate the throughput of MB in steady-state. Let $N_p$ be the average number of requests finishing prefill per iteration. By Little's Law, the average number of tokens decoded in each iteration is $N_p \mu_O$. Because the MB discipline generates exactly one decoded token per request per iteration, the average number of decoding requests in a batch is directly $N_d = N_p \mu_O$. Consequently, the average effective batch size per iteration is $N = N_p + N_d = N_p(1+\mu_O)$.

The system throughput under MB is thus formulated as:
\begin{align*}
    \mathrm{TP}_{\mathrm{MB}}(N)
    &= N_p\left[\alpha_{\mathrm{MB}} + \beta_{\mathrm{MB}}^e (N_p \mu_L + N_d)\right]^{-1} \\
    &= \left[\alpha_{\mathrm{MB}}(1+\mu_O) N^{-1} + \beta_{\mathrm{MB}}^e(\mu_L+\mu_O)\right]^{-1},
\end{align*}
where $\alpha_{\mathrm{MB}}$ is the fixed overhead for launching a mixed batch, and $\beta_{\mathrm{MB}}^e$ is the empirically measured effective per-token marginal cost.

When standardizing $N$ to be both the maximum batch size for EB and the effective average batch size for MB, we can directly compare their throughputs.

\begin{proposition}\label{prop:comparison}
Under the steady-state approximations above, $\mathrm{TP}_{\mathrm{MB}}(N) > \mathrm{TP}_{\mathrm{EB}}(k^*_0, N)$ if and only if
\begin{equation}\label{eq:comparison_condition}
    \beta_{\mathrm{MB}}^e - \beta_{\mathrm{EB}}^w < \frac{1}{\mu_L+\mu_O}
    \!\left[\frac{\alpha_p+\alpha_d\zeta\mu_O}{k^*_0}-\frac{\alpha_{\mathrm{MB}}(1{+}\mu_O)}{N}\right]\!.
\end{equation}
\end{proposition}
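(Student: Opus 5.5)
The plan is to compare the reciprocals of the two throughput expressions directly. Both $\mathrm{TP}_{\mathrm{MB}}(N)$ and $\mathrm{TP}_{\mathrm{EB}}(k^*_0,N)$ are written as $[\,\cdot\,]^{-1}$ of a strictly positive quantity. For MB this quantity is $A_{\mathrm{MB}} := \alpha_{\mathrm{MB}}(1+\mu_O)N^{-1} + \beta_{\mathrm{MB}}^e(\mu_L+\mu_O)$. For EB it is $A_{\mathrm{EB}} := (\alpha_p+\alpha_d\zeta\mu_O)/k^*_0 + \beta_{\mathrm{EB}}^w(\mu_L+\mu_O)$, taken from \eqref{eq:throughput_EB}. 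Since $x\mapsto x^{-1}$ is strictly decreasing on $(0,\infty)$, the inequality $\mathrm{TP}_{\mathrm{MB}}(N) > \mathrm{TP}_{\mathrm{EB}}(k^*_0,N)$ holds if and only if $A_{\mathrm{MB}} < A_{\mathrm{EB}}$.

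First, I will record the positivity facts that justify this monotonicity step. All of $\alpha_{\mathrm{MB}}, \alpha_p, \alpha_d, \beta$'s, $\mu_L, \mu_O$ are positive, $N \ge 1$, and $k^*_0 = \lfloor \theta_0 N\rfloor \ge 1$; the last requires $N$ large enough that $\theta_0 N \ge 1$, which is the regime of Proposition~\ref{prop:threshold_cfr}. Also $\zeta = -\ln(1-\theta_0) > 0$ because $\theta_0\in(0,1)$. Together these give $A_{\mathrm{MB}}, A_{\mathrm{EB}} > 0$.

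Second, I will rearrange $A_{\mathrm{MB}} < A_{\mathrm{EB}}$ by grouping the per-token terms on the left and the fixed-cost terms on the right:
\[
(\beta_{\mathrm{MB}}^e - \beta_{\mathrm{EB}}^w)(\mu_L+\mu_O) < \frac{\alpha_p+\alpha_d\zeta\mu_O}{k^*_0} - \frac{\alpha_{\mathrm{MB}}(1+\mu_O)}{N}.
\]
Dividing by $\mu_L+\mu_O>0$ preserves the direction of the inequality and yields exactly \eqref{eq:comparison_condition}. Every step is an equivalence, so both directions of the ``if and only if'' follow at once.

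There is essentially no analytic obstacle. The only point needing care is making explicit that the comparison is taken at the same $N$: $N$ is the EB cap and also the MB effective average batch size, as stipulated before the proposition. It must also be made explicit that \eqref{eq:throughput_EB} already incorporates the fluid CFR approximation with $k=k^*_0$. The ``steady-state approximations'' hypothesis is exactly what licenses substituting these closed forms, so I will state this upfront rather than re-derive it.
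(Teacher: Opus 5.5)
Your proposal is correct and matches the paper's argument. The paper gives no separate proof: the proposition follows, exactly as you do it, by taking the two closed-form throughputs $\mathrm{TP}_{\mathrm{MB}}(N)$ and \eqref{eq:throughput_EB} at the common $N$, comparing their positive reciprocals, and dividing by $\mu_L+\mu_O$. Your explicit remark that $k^*_0\ge 1$ is needed for the EB expression to be well defined is a small but worthwhile addition.
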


In Proposition~\ref{prop:comparison}, the LHS represents the \emph{marginal-cost gap} incurred by co-locating prefill and decode tokens in the same forward pass. The RHS captures the \emph{amortized fixed-cost advantage} of MB, which requires fewer distinct kernel launches. Substituting $k^*_0 = \theta_0 N$, both terms inside the bracket of the RHS scale as $1/N$. Therefore, the RHS is $O(1/N)$ and naturally vanishes at high utilization. In saturated regimes, the comparison is entirely governed by the marginal-cost gap $\beta_{\mathrm{MB}}^e-\beta_{\mathrm{EB}}^w$. On bandwidth-constrained GPUs, prefill--decode interference drastically inflates this gap. Figure~\ref{fig:kernel_breakdown} confirms this empirically: on the H200, the mixed-batch marginal cost remains low across most decode ratios, whereas on the RTX PRO 6000, bandwidth constraints inflate the cost above the threshold over intermediate ratios, heavily favoring EB.

\textbf{Online switching criterion.}
At runtime, EB\textsuperscript{+} instantiates~\eqref{eq:comparison_condition} using online system estimates and the EMA-smoothed active batch occupancy $N_{\text{obs}}$. To avoid oversensitivity to the noisy $\hat\eta$ estimate, we substitute $\hat k^*_0 = \hat\theta_0 N_{\text{obs}}$. (Because the RHS scales as $1/N_{\text{obs}}$ and the gap $\hat\theta^*-\hat\theta_0=O(\eta)$ is bounded, the resulting estimation bias is $O(1/N_{\text{obs}})$ and vanishes under saturation.) By reversing the inequality from Proposition~\ref{prop:comparison}, we formally define the EB\textsuperscript{+} control rule---the system executes an exclusive batch (EB) when:
\begin{equation}\label{eq:eb_plus_switch}
    \hat{\beta}_{\mathrm{MB}}^e(\hat{r}) - \beta_{\mathrm{EB}}^w >
    \frac{(\alpha_p+\alpha_d\hat\zeta\hat\mu_O)/\hat\theta_0 - \alpha_{\mathrm{MB}}(1{+}\hat\mu_O)}{N_{\text{obs}}(\hat\mu_L+\hat\mu_O)} + \delta,
\end{equation}
and defaults to MB otherwise. Here, the parameter $\hat{\beta}_{\mathrm{MB}}^e(\hat r)$ at the estimated decode ratio $\hat r=\hat\mu_O/(\hat\mu_L+\hat\mu_O)$ is retrieved from a one-shot, per-hardware kernel-time profile (see the calibration procedure in Appendix~\ref{app:eb-plus-calibration}). The scalar $\delta$ serves as a tunable priority margin: setting $\delta>0$ lowers the threshold for MB (favoring lower TTFT), while $\delta<0$ biases the system toward EB (favoring higher token throughput).

\textbf{Traffic- and workload-awareness.}
Equation~\eqref{eq:eb_plus_switch} ensures adaptation along two distinct axes. First, the RHS scales inversely with the observed occupancy $N_{\text{obs}}$: under light traffic loads, the RHS is large, causing EB\textsuperscript{+} to safely default to MB. As traffic load increases and $N_{\text{obs}}$ grows, the RHS shrinks; once resource contention dominates, the system dynamically shifts to EB. Independently, $\hat{\beta}_{\mathrm{MB}}^e(\hat r)$ continuously tracks workload drift via the shifting decode ratio. Through these mechanisms, EB\textsuperscript{+} cleanly subsumes both EB($\hat{k}^*$) and MB as theoretical endpoints, seamlessly recovering the superior scheduling discipline regime-by-regime without the need for manual intervention.

% aggregated into section five.

\section{Evaluation}
\subsection{Experimental Setup}
\textbf{Hardware.} We evaluate on four GPU platforms (Table~\ref{tab:hardware}); H200 and RTX PRO 6000 serve as primary high-bandwidth and bandwidth-constrained baselines, with B300 and L40S used for scalability.

\begin{table}[htbp]
    \centering
    \caption{Hardware specifications of evaluated GPUs.}
    \label{tab:hardware}
    \small
    \begin{tabular}{@{}lccc@{}}
    \toprule
    GPU & BW (TB/s) & FP16 (TFLOPS) & FLOP/B \\
    \midrule
    B300 & 8.0 & 4,500 & 562 \\
    H200 & 4.8 & 1,979 & 412 \\
    % H100 & 3.35 & 1,979 & 591 \\
    RTX PRO 6000 & 1.792 & 252 & 141 \\
    L40S & 0.864 & 362.5 & 419 \\
    \bottomrule
    \end{tabular}
\end{table}

\textbf{Implementation.} We implement EB($\hat{k}^*$) by extending vLLM~\citep{kwon2023efficient} with exclusive batching and online $(\hat{k}^*,\hat{N}^*)$ updates, leaving PagedAttention unchanged. We compare three strategies---v0, v1, and EB($\hat{k}^*$)---on the same vLLM build in BF16.

\textbf{Models.} Our primary models are Qwen3-8B and Qwen3-30B-A3B (MoE)~\cite{yang2025qwen3}, with Gemma-3-1B-IT~\cite{gemma3} used for supplementary results.

\textbf{Scheduling strategies.} We compare three strategies. \textbf{v0}: the vLLM v0 scheduler, an exclusive-batching variant that switches to prefill whenever a slot opens; under our saturated regime (batches always full at $N$) it reduces to EB($k{=}1$). \textbf{v1}: the vLLM v1 scheduler, implementing mixed batching with decode-maximal allocation. \textbf{EB($\hat{k}^*$)}: exclusive batching with our online controller from Section~\ref{sec:model:adaptive}, which adapts the phase-switching threshold $\hat{k}^*$ and the memory-safe batch size $\hat{N}^*$ from estimated workload statistics.

\textbf{Workloads.} We evaluate both synthetic and real-world workloads to span the prefill--decode mix. \emph{Synthetic workloads} fix the mean input and output token counts, with per-request lengths drawn uniformly between $0.5\times$ and $1.5\times$ the corresponding mean: Decode-heavy (128 input / 1024 output), Balanced (512/512), and Prefill-heavy (1024/128). \emph{Real-world workloads}: ShareGPT~\cite{sharegpt2023} (conversational; 105 input / $\leq$500 output), LongBench~\cite{bai2024longbench} (long-context; 1{,}000--4{,}000 / 20), NuminaMath~\cite{numina_math_datasets} (chain-of-thought reasoning; 122 / 800--4{,}000), and WildChat~\cite{zhao2024wildchat} (multi-turn dialogue; $\geq$6 turns). Output lengths are truncated during serving; detailed statistics in Appendix~\ref{app:workload-dist}.

\textbf{Metrics and traffic.} We report throughput in tokens/s (prefill and decode) and requests/s (RPS), and latency as TTFT (s), TPOT (ms), and ITL (s). Unless otherwise noted, experiments operate in a saturated regime (concurrency $c{=}2{,}048$) with $4{,}000$ requests per configuration (WildChat: 3{,}000 multi-turn, ${\sim}27{,}900$ requests).

\subsection{Model Validation}
\label{sec:validation}

We validate the analytical predictions from Section~\ref{sec:model} on H200 with Qwen3-8B and synthetic geometric workloads.

\textbf{Baseline threshold properties.}
We test the structural properties of the CFR baseline threshold $\theta_0$ from Proposition~\ref{prop:threshold_cfr}: (P1) scale invariance in $N$, (P2) insensitivity to mean input length $\mu_L$, and (P3) monotone decrease in mean output length $\mu_O$. All three are empirically confirmed (Appendix~\ref{app:baseline-validation}).

\textbf{Validation of decoupled optimization.}
We validate the accuracy of our closed-form, online adaptive controller on H200 using synthetic geometric output workloads. We compare (i) exhaustive sweeps over fixed switching thresholds $k$ at a fixed batch size $N=1024$ (vLLM default), and (ii) our online controller from Section~\ref{sec:model:adaptive}, which estimates $(p_0,\eta)$ online and updates $(\hat{k}^*,\hat{N}^*)$ accordingly.

Figure~\ref{fig:validation} plots throughput and time per output token (TPOT) as functions of $k$ for three workloads. The closed-form choice achieves throughput comparable to, and in our experiments higher than, the best fixed $k$ in the sweep, improving throughput by up to \textbf{8.0\%} (decode-heavy), \textbf{3.6\%} (balanced), and \textbf{0.6\%} (prefill-heavy) over the best fixed-$k$ point at $N=1024$. The gain is consistent with EB($\hat{k}^*$) co-adapting both $\hat\theta^*$ and $\hat N^*$, while the sweep is constrained to a single batch size. TPOT remains competitive: it is close to the best-throughput setting in the decode-heavy case and is substantially lower in the balanced and prefill-heavy cases. Overall, these results support that the decoupled, closed-form selection eliminates the need for threshold search while retaining near-optimal performance.

\begin{figure}[htbp]
    \centering
    \includegraphics[width=\columnwidth]{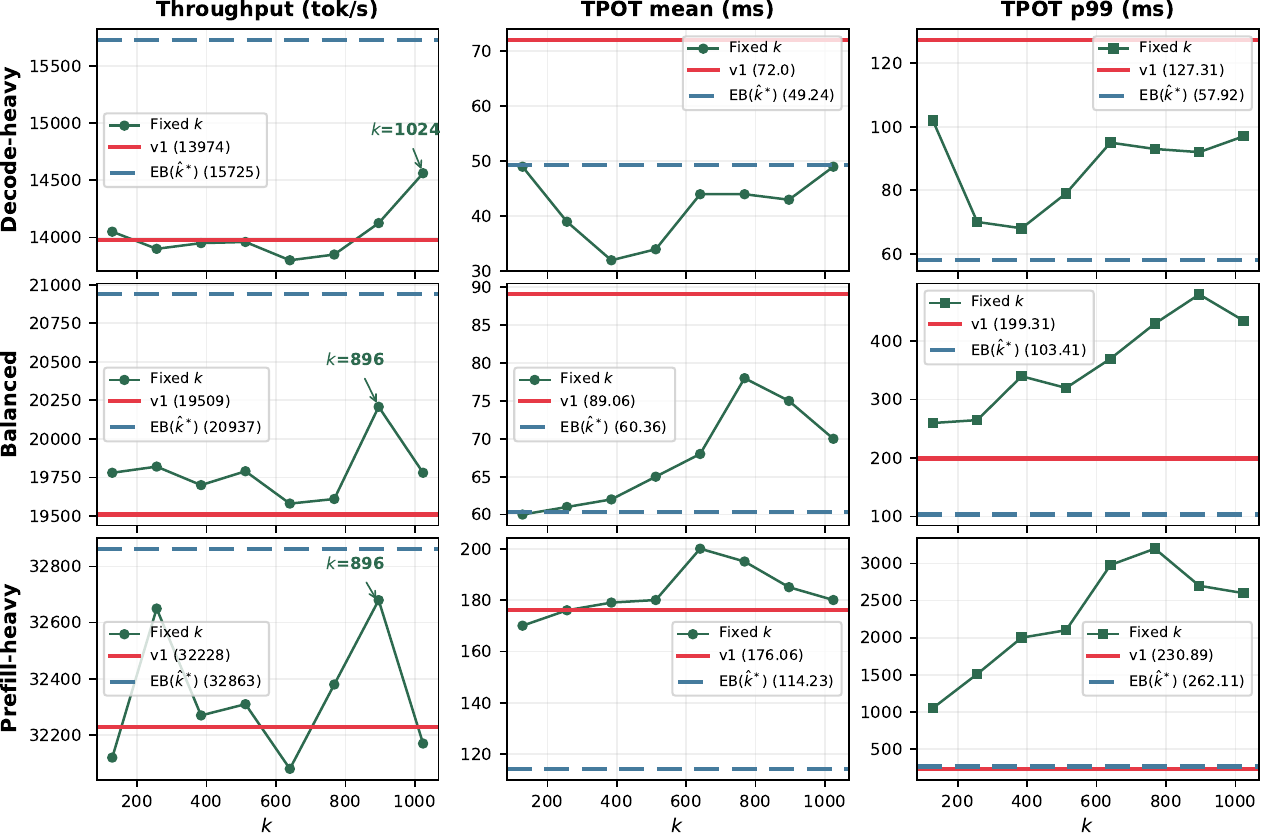}
    \caption{Validation on H200 ($N{=}1024$, geometric outputs): EB($\hat{k}^*$) matches the best fixed-$k$ sweep without manual tuning.}
    \label{fig:validation}
\end{figure}

\subsection{End-to-End Performance}
\label{sec:exp:e2e}
We evaluate on RTX PRO 6000 and H200, sweeping the token budget $B\in\{4096,\dots,18432\}$ and max batch size $N\in\{256,\dots,2048\}$ via grid search and reporting the best-throughput configuration per scheduler (per-workload optima in Appendix~\ref{app:optimal-config}). For EB($\hat{k}^*$), the adaptive controller computes a memory-safe batch size $\hat{N}^*$ online (Proposition~\ref{prop:memory}); the effective batch size is $\min(\hat{N}^*, N)$.

\subsubsection{Synthetic Workloads}
\label{sec:exp:synthetic}

We first evaluate on synthetic workloads with controlled input/output length ratios to isolate how workload composition affects scheduling performance. 

Figure~\ref{fig:e2e-simulated} compares v0, v1, and EB($\hat{k}^*$) across three input/output regimes on RTX PRO 6000 and H200. v1 consistently reduces ITL relative to v0 via phase overlapping, but often sacrifices throughput under extreme imbalance~\cite{sun2024llumnix}. EB($\hat{k}^*$) improves throughput across all regimes on RTX PRO 6000 with competitive latency; relative to the best of v0/v1, its gains peak in balanced and decode-heavy workloads (v0 already excels on prefill-heavy, narrowing the margin there). On H200 the throughput gap to v0/v1 narrows substantially. This confirms EB is particularly effective on bandwidth-constrained GPUs, where avoiding prefill--decode contention yields clearer throughput benefits.

\begin{figure}[htbp]
    \centering
    \includegraphics[width=\columnwidth]{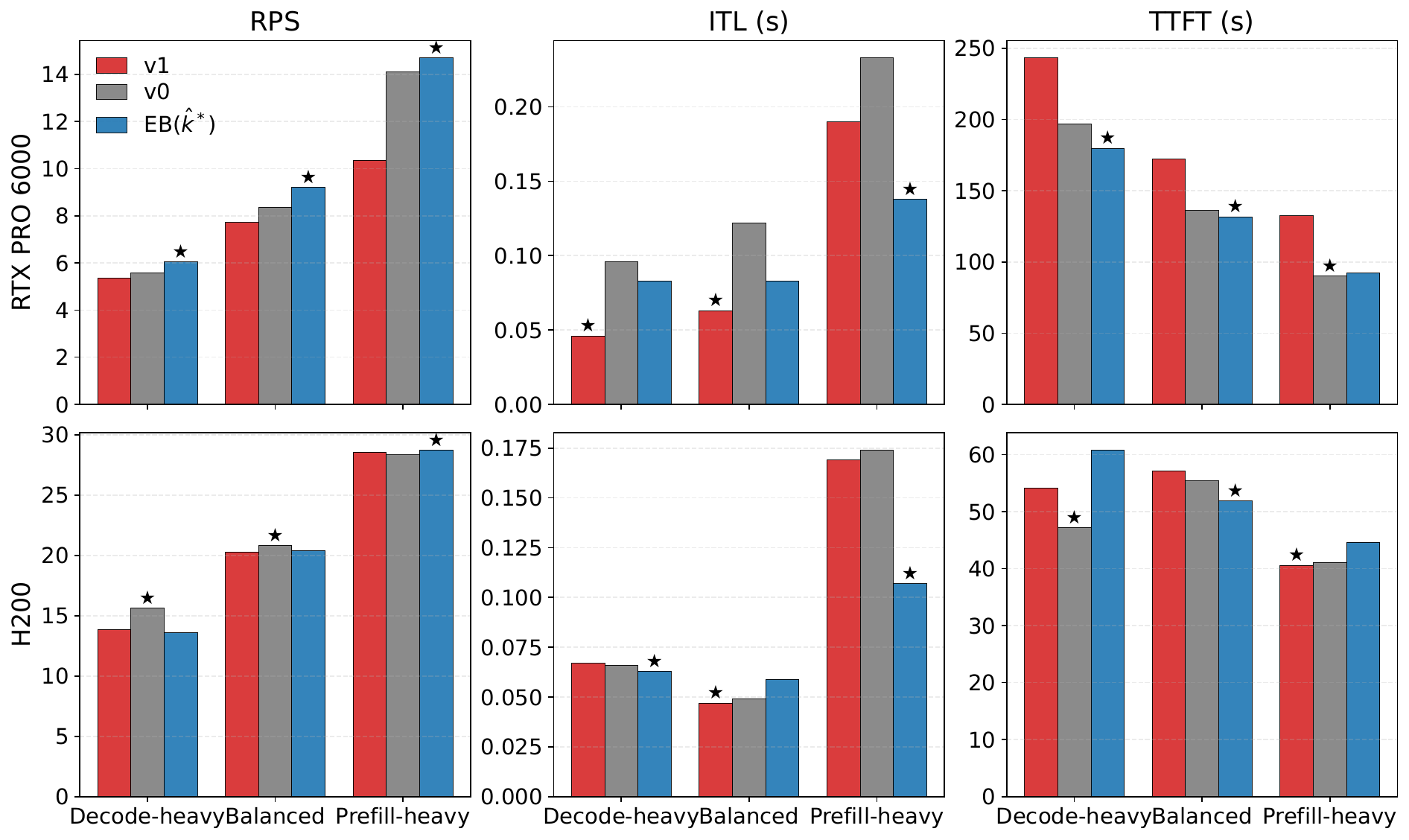}
    \caption{Synthetic workloads (Qwen3-8B): end-to-end performance on RTX PRO 6000 and H200. 
    % Bars compare v0, v1, and EB($\hat{k}^*$) across decode-heavy, balanced, and prefill-heavy regimes.
    Stars mark the best strategy per workload/metric.}
    \label{fig:e2e-simulated}
\end{figure}

\subsubsection{Real-World Workloads}
We further evaluate on four real-world workloads that represent diverse serving scenarios. Table~\ref{tab:e2e-real} summarizes results for both Qwen models on RTX PRO 6000 and H200; Gemma-3-1B-IT results are deferred to Appendix~\ref{app:gemma_e2e}.

On the bandwidth-constrained RTX PRO 6000, EB($\hat{k}^*$) outperforms v1 on every workload for Qwen3-8B (average $+$\textbf{7.9\%}); on Qwen3-30B-A3B the gain shrinks to $+$\textbf{1.4\%} on average, with two workloads slightly negative (see analysis below). On the high-bandwidth H200, the performance gap narrows considerably and v1 matches or exceeds EB($\hat{k}^*$) on several configurations: consistent with our analysis, abundant memory bandwidth mitigates prefill--decode contention and erodes EB's advantage. These results corroborate our central hypothesis that the EB--MB crossover is governed by memory bandwidth; we analyze the per-workload spread under the \textbf{Decode-ratio sweet spot} below.

\begin{table}[htbp]
    \centering
    \caption{Throughput (RPS) on real-world workloads. \% Diff.\ denotes the relative improvement of EB($\hat{k}^*$) over v1, computed as $(\mathrm{EB}(\hat{k}^*) - \mathrm{v1}) / \mathrm{v1} \times 100\%$.}
    \label{tab:e2e-real}
    \resizebox{\linewidth}{!}{
    \setlength{\tabcolsep}{3pt}
    \begin{tabular}{ll cccc cccc}
    \toprule
    & & \multicolumn{4}{c}{Qwen3-8B} & \multicolumn{4}{c}{Qwen3-30B-A3B} \\
    \cmidrule(lr){3-6} \cmidrule(lr){7-10}
     & Workload & v0 & v1 & EB($\hat{k}^*$) & \% Diff. & v0 & v1 & EB($\hat{k}^*$) & \% Diff. \\
    \midrule
    \multirow{5}{*}{\rotatebox{90}{RTX 6000}}
    & ShareGPT    & 15.45 & 17.07 & \textbf{19.68} & +15.3\% & 12.38 & \textbf{15.98} & 15.48 & $-$3.1\% \\
    & LongBench   & 8.17  & 8.35  & \textbf{8.66}  & +3.7\%  & 11.48 & 11.13 & \textbf{11.67} & +4.8\% \\
    & WildChat    & 10.39 & 12.75 & \textbf{14.19} & +11.3\% & 6.91  & 9.39  & \textbf{10.48} & +11.6\% \\
    & Numina  & 0.68  & 0.73  & \textbf{0.74}  & +1.4\%  & 0.42  & \textbf{0.52}  & 0.48  & $-$7.7\% \\
    & \textit{Average} & & & & \textit{+7.9\%} & & & & \textit{+1.4\%} \\
    \midrule
    \multirow{5}{*}{\rotatebox{90}{H200}}
    & ShareGPT    & 19.36 & 41.93 & \textbf{42.88} & +2.3\%  & 17.68 & \textbf{48.96} & 47.72 & $-$2.5\% \\
    & LongBench   & 14.22 & 15.77 & \textbf{15.81} & +0.3\%  & 21.31 & \textbf{24.43} & 24.03 & $-$1.6\% \\
    & WildChat    & 17.58 & 20.87 & \textbf{21.50} & +3.0\%  & 17.59 & 26.46 & \textbf{26.66} & +0.8\% \\
    & Numina  & 1.49  & 1.95  & \textbf{1.96}  & +0.5\%  & 1.42  & \textbf{1.83}  & 1.68  & $-$8.2\% \\
    & \textit{Average} & & & & \textit{+1.5\%} & & & & \textit{$-$2.9\%} \\
    \bottomrule
    \end{tabular}
    }
\end{table}

\textbf{Effect of model size.}
Eq.~\eqref{eq:comparison_condition} predicts that larger models, which incur higher per-iteration fixed costs $\alpha_p,\alpha_d,\alpha_{\mathrm{MB}}$, inflating the RHS, make v1 (MB) more competitive. Table~\ref{tab:e2e-real} aligns with this prediction: as we scale from Qwen3-8B to Qwen3-30B-A3B, EB's average gain over v1 drops from \textbf{7.9\%} to \textbf{1.4\%} on RTX PRO 6000, and from \textbf{1.5\%} to \textbf{$-$2.9\%} on H200. We attribute this diminishing gap to the $\alpha$ scaling implied by Eq.~\eqref{eq:comparison_condition}; the underlying interference-convexity analysis is in Appendix~\ref{app:convexity}.

\textbf{Decode-ratio sweet spot.}
A convexity analysis (Appendix~\ref{app:convexity}) predicts EB's largest gains at intermediate decode ratios $r$, diminishing at the extremes ($r\to0$ or $r\to1$). Our workloads span this range: LongBench ($r\!\approx\!0.004$), ShareGPT/WildChat ($r\!\approx\!0.5$--$0.7$), and NuminaMath ($r\!>\!0.85$). On RTX PRO 6000 with Qwen3-8B, EB's gains over v1 indeed peak at moderate $r$ (ShareGPT +15.3\%, WildChat +11.3\%) and shrink at the extremes (LongBench +3.7\%, NuminaMath +1.4\%); on Qwen3-30B-A3B, the high-$r$ extreme (NuminaMath, $-7.7\%$) is consistent with the two effects compounding, while mid-$r$ workloads split between convexity's prediction (WildChat $+11.6\%$) and $\alpha$-scaling (ShareGPT $-3.1\%$); LongBench remains positive ($+4.8\%$).

\subsubsection{Latency Analysis}

\textbf{TTFT} (Fig.~\ref{fig:ttft_tpot}\textit{a}).\ v1 generally achieves lower TTFT than EB($\hat{k}^*$) on most workloads by interleaving prefill with decode to reduce queuing delays. However, on prefill-heavy workloads (LongBench), EB($\hat{k}^*$) achieves lower TTFT across both GPUs (e.g., 169.64\,s vs.\ 175.56\,s on RTX PRO 6000 with Qwen3-8B), as EB avoids fine-grained chunk scheduling overhead.

\textbf{TPOT} (Fig.~\ref{fig:ttft_tpot}\textit{b}).\ On RTX PRO 6000, EB($\hat{k}^*$) achieves substantially lower TPOT than v1 on most workloads: 65\% reduction on ShareGPT (93.68\,ms vs.\ 268.97\,ms, Qwen3-8B), 35\% on WildChat (113.22\,ms vs.\ 173.33\,ms), and 20\% on NuminaMath (65.22\,ms vs.\ 81.75\,ms). The exception is LongBench, where v1 achieves lower TPOT due to its prefill-heavy nature. On H200, the differences are smaller, though EB($\hat{k}^*$) remains competitive on decode-heavy workloads such as WildChat (79.04\,ms vs.\ 122.97\,ms) and NuminaMath (26.87\,ms vs.\ 31.03\,ms).

\begin{figure}[htbp]
    \centering
    \begin{subfigure}[t]{\columnwidth}
        \centering
        \includegraphics[width=\columnwidth]{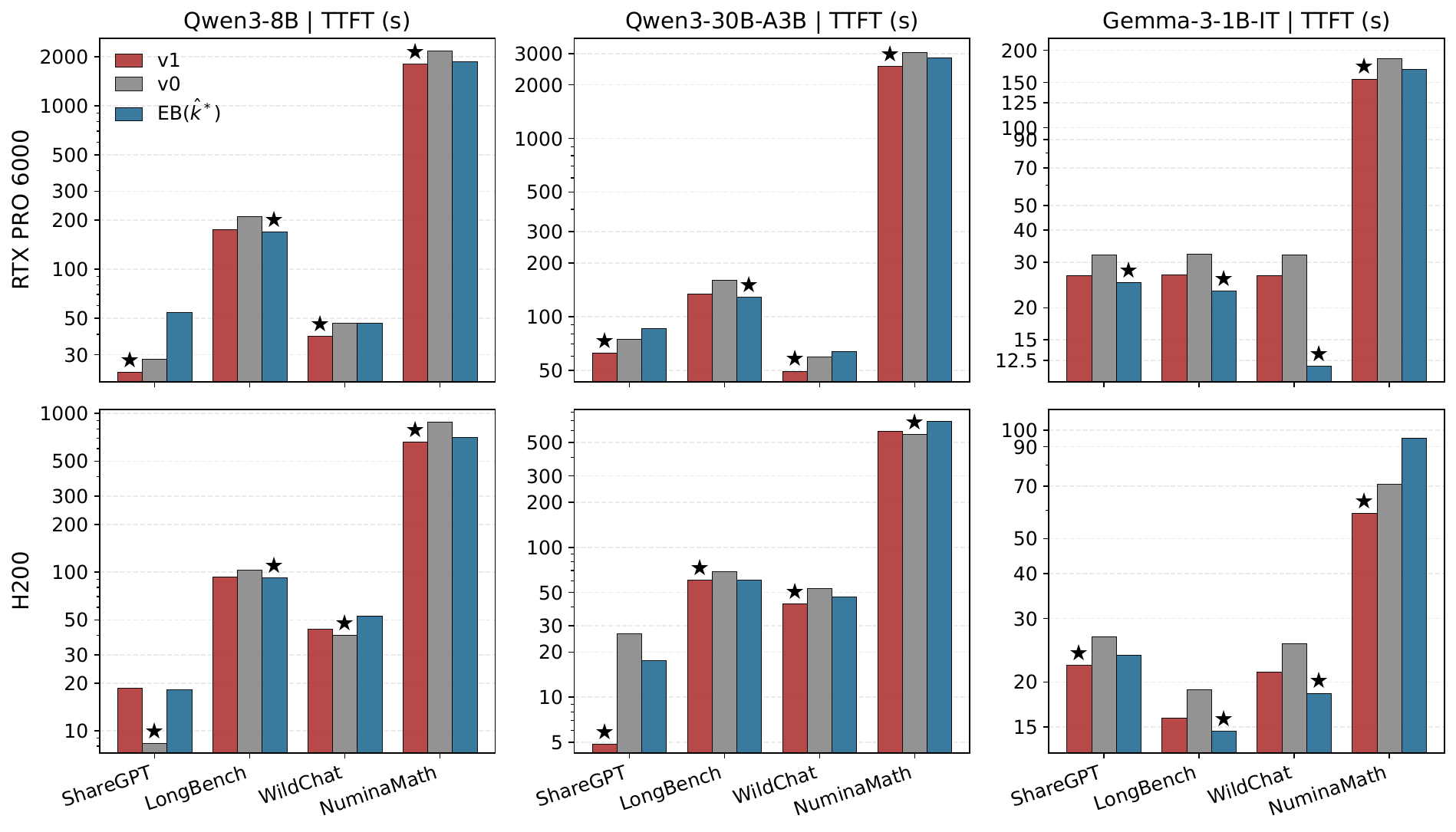}
        \vspace{-15pt}
        \caption{TTFT (s)}
        \label{fig:ttft}
    \end{subfigure}\\[2pt]
    \begin{subfigure}[t]{\columnwidth}
        \centering
        \includegraphics[width=\columnwidth]{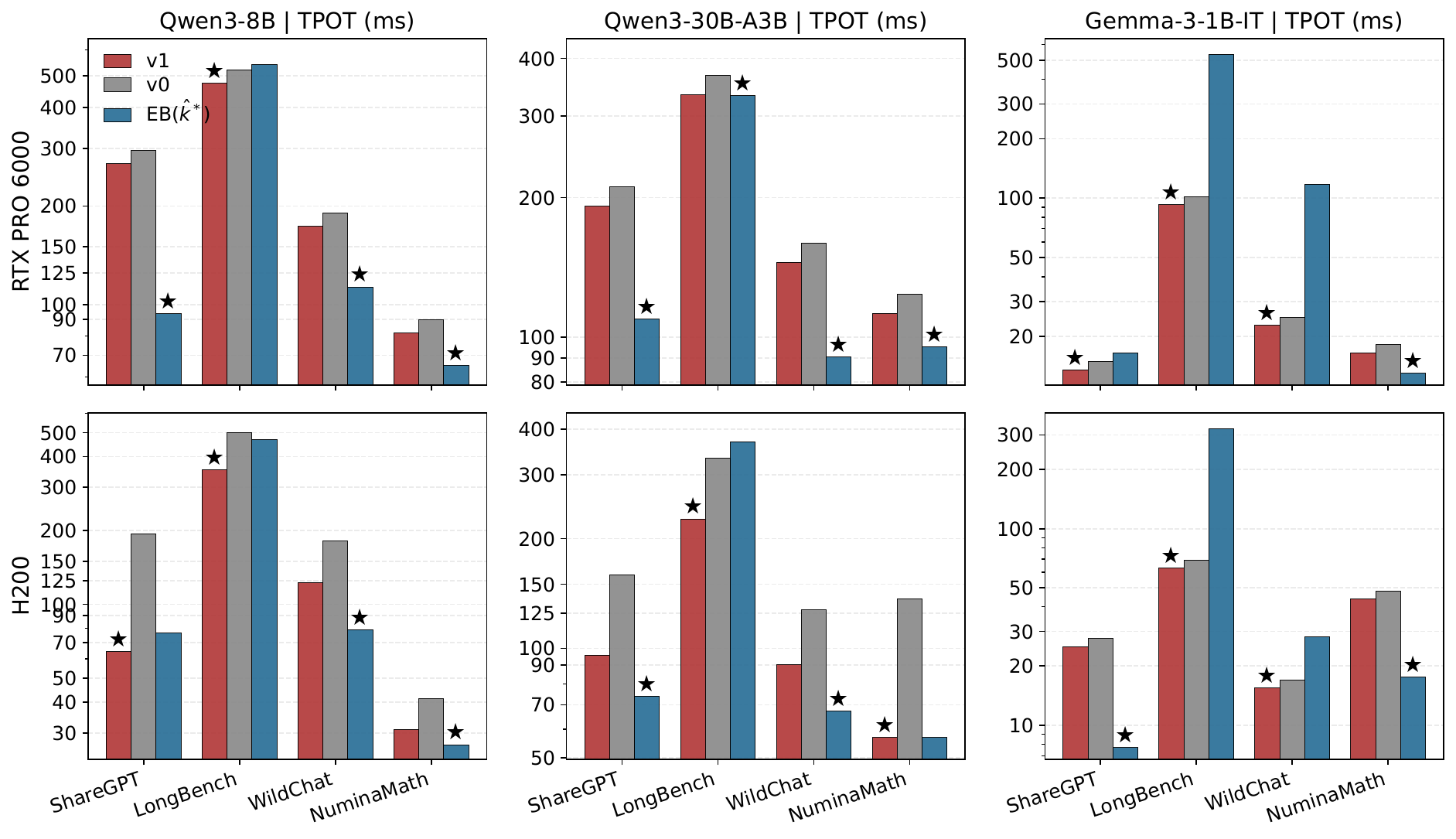}
        \vspace{-15pt}
        \caption{TPOT (ms)}
        \label{fig:tpot}
    \end{subfigure}
    \vspace{-5pt}
    \caption{TTFT (top) and TPOT (bottom) on real-world workloads.}
    \label{fig:ttft_tpot}
\end{figure}

These results reflect a fundamental trade-off: v1 favors TTFT through phase overlapping, while EB favors TPOT by avoiding bandwidth contention.

\subsection{Adaptive Mode Selection: EB\textsuperscript{+}}
\label{sec:exp:eb_plus}

The results above commit to a single mode (v1 or EB($\hat{k}^*$)). We now evaluate the hybrid scheduler EB\textsuperscript{+} (Section~\ref{sec:model:eb_plus}), which switches online via the crossover criterion~\eqref{eq:eb_plus_switch}.

\textbf{Traffic-level sensitivity.}
Table~\ref{tab:eb_plus} (top) sweeps concurrency $c\in\{32,512,2048\}$. The two hardware columns realize different segments of the crossover. On the bandwidth-constrained RTX PRO 6000, EB($\hat{k}^*$) is TTFT-handicapped at low load (${\sim}1.5\times$ v1's TTFT), but EB\textsuperscript{+} selects MB at $c{=}32$ and recovers v1's TTFT to within 1\,ms; at moderate load it improves throughput by $+62\%$ over v1 with $1.8\times$ lower TPOT; and at high load it commits to EB, attaining the best throughput ($+50\%$ over v1) with TTFT competitive to EB($\hat{k}^*$). On H200, where v1 is already strong, EB\textsuperscript{+} reproduces v1's throughput and TTFT at $c{=}32, 512$ via its MB choice and matches EB($\hat{k}^*$)'s throughput at $c{=}2048$.

\textbf{Non-stationary workloads.}
We test EB\textsuperscript{+} under (i)~a \emph{distribution shift} ($\mu_L\colon 1024{\to}512{\to}128$, $\mu_O\colon 128{\to}512{\to}1024$; 2k requests/phase, $c=2048$) and (ii)~a \emph{concurrency shift} ($c\colon 32{\to}512{\to}1024{\to}256{\to}2048$). EB\textsuperscript{+} attains throughput within $\sim$1\% of the better of v1 and EB($\hat{k}^*$) in all four (hardware$\times$scenario) cells (Table~\ref{tab:eb_plus}, bottom): distribution shift gives $+36.4\%$ over v1 on RTX PRO 6000 and $+3.9\%$ on H200, and concurrency shift gives $+22.6\%$ and $+0.6\%$ respectively. The EMA-smoothed $N_{\text{obs}}$ enables rapid adaptation without manual re-tuning.

\begin{table}[htbp]
\centering
\caption{EB\textsuperscript{+} under stationary traffic (top three blocks; concurrency $c$, $\mu_L{=}512,\mu_O{=}256$) and non-stationary workloads (bottom two blocks). Qwen3-8B. TP in tok/s, TTFT in s, TPOT in ms.}
\label{tab:eb_plus}
\resizebox{\linewidth}{!}{
\setlength{\tabcolsep}{4pt}
\begin{tabular}{ll ccc ccc}
\toprule
& & \multicolumn{3}{c}{RTX PRO 6000} & \multicolumn{3}{c}{H200} \\
\cmidrule(lr){3-5}\cmidrule(lr){6-8}
Scenario & Metric & v1 & EB($\hat{k}^*$) & EB\textsuperscript{+} & v1 & EB($\hat{k}^*$) & EB\textsuperscript{+} \\
\midrule
\multirow{3}{*}{$c{=}32$}    & TP         & 4{,}928 & 4{,}728 & \textbf{4{,}930} & \textbf{11{,}584} & 10{,}779 & 11{,}586 \\
                       & TTFT       & \textbf{0.061}     & 0.090 & 0.061     & \textbf{0.041}      & 0.062      & 0.042      \\
                       & TPOT       & 19.4             & \textbf{16.5} & 19.5 & 8.2              & \textbf{7.1} & 8.2 \\
\cmidrule(lr){2-8}
\multirow{3}{*}{$c{=}512$}   & TP         & 8{,}330 & \textbf{13{,}549} & 13{,}510 & \textbf{27{,}464} & 27{,}207 & 27{,}460 \\
                       & TTFT       & \textbf{1.2} & 8.3 & 5.1            & \textbf{0.7}     & 4.6      & 0.7      \\
                       & TPOT       & 164.2 & \textbf{77.6} & 90.4         & 52.7             & \textbf{36.8} & 52.7  \\
\cmidrule(lr){2-8}
\multirow{3}{*}{$c{=}2048$}  & TP         & 8{,}830 & 13{,}179 & \textbf{13{,}214} & 26{,}368 & \textbf{27{,}198} & 27{,}043 \\
                       & TTFT       & 83.8    & 70.8    & \textbf{68.2}    & \textbf{24.4} & 34.6 & 33.7 \\
                       & TPOT       & 207.3   & \textbf{82.7} & 101.6      & 128.1 & \textbf{72.8} & 79.2 \\
\midrule
\multirow{3}{*}{\shortstack[l]{Distrib.\\ shift}}    & TP         & 7{,}025 & \textbf{9{,}647} & 9{,}582 & 20{,}396 & 20{,}762 & \textbf{21{,}182} \\
                                   & TTFT       & 190.7   & 157.5   & \textbf{152.1}   & \textbf{41.2} & 66.5 & 69.7 \\
                                   & TPOT       & 178.5   & 105.0   & \textbf{102.2}   & 101.0  & \textbf{42.3} & 63.1 \\
\cmidrule(lr){2-8}
\multirow{3}{*}{\shortstack[l]{Concur.\\ shift}}    & TP         & 8{,}364 & \textbf{10{,}350} & 10{,}250 & 24{,}251 & 23{,}374 & \textbf{24{,}397} \\
                                   & TTFT       & \textbf{24.0} & 29.8 & 26.6 & \textbf{7.7} & 13.7 & 9.4 \\
                                   & TPOT       & 142.4   & \textbf{51.6} & 77.5   & 77.9 & \textbf{45.6} & 66.8 \\
\bottomrule
\end{tabular}
}
\end{table}

% \textbf{Goodput and PD-disaggregation comparison.}
% Two additional probes on RTX PRO 6000 corroborate EB\textsuperscript{+}'s practical utility. At $c=512$ under a relaxed SLO (TTFT$<\!10$\,s, TPOT$<\!100$\,ms), EB\textsuperscript{+} attains 80.3\% joint goodput versus 5.8\% (v1) and 77.3\% (EB($\hat{k}^*$)). Against vLLM's PD-disaggregation scheduler, EB\textsuperscript{+} is best or near-best in 7 of 9 workload$\times c$ cells on $4\times$ RTX PRO 6000, with 2P+2D excelling on prefill-heavy at $c=128$ and $c=256$, and matches or exceeds v1 on the 2-GPU setup, all with no P:D tuning and no OOMs. Full tables in Appendices~\ref{app:eb_plus_slo} and~\ref{app:eb_plus_disagg}.
\textbf{Goodput and PD-disaggregation.}
At $c{=}512$ with SLO (TTFT$<\!10$\,s, TPOT$<\!100$\,ms), EB\textsuperscript{+} attains 80.3\% joint goodput versus 77.3\% (EB($\hat{k}^*$)) and 5.8\% (v1). On $4\times$ RTX PRO 6000 it is best or near-best in 7 of 9 workload$\times c$ cells against vLLM's PD-disaggregation scheduler, and matches or exceeds v1 on the 2-GPU setup---without P:D tuning or OOMs. See Appendices~\ref{app:eb_plus_slo},~\ref{app:eb_plus_disagg}.

\subsection{Scalability}

Due to the high cost of exhaustive evaluation, we assess scalability using the \textbf{WildChat} workload with the same grid search procedure as in Section~\ref{sec:exp:e2e}.

\subsubsection{Across GPU Platforms}

Table~\ref{tab:scalability-gpu} extends the GPU comparison to L40S (bandwidth-constrained) and B300 (highest bandwidth).
On L40S, EB($\hat{k}^*$) achieves \textbf{41.9\%} higher RPS than v1 (14.70 vs.\ 10.36) with lower TTFT and TPOT.
On B300, EB($\hat{k}^*$) is on par with v1 (52.34 vs.\ 50.47 RPS) with comparable TTFT and TPOT---consistent with the bandwidth-driven crossover established in Section~\ref{sec:exp:e2e}.

\begin{table}[!t]
\centering
\caption{Scalability across GPU platforms (Qwen3-8B, WildChat). RPS in requests/s; TTFT/TPOT in s and ms respectively.}
\label{tab:scalability-gpu}
\setlength{\tabcolsep}{3pt}
\resizebox{\linewidth}{!}{
\begin{tabular}{l@{\hspace{6pt}}ccc@{\hspace{4pt}}ccc@{\hspace{4pt}}ccc}
\toprule
& \multicolumn{3}{c}{v0} & \multicolumn{3}{c}{v1} & \multicolumn{3}{c}{EB($\hat{k}^*$)} \\
GPU & RPS & TTFT & TPOT & RPS & TTFT & TPOT & RPS & TTFT & TPOT \\
\midrule
B300\textsuperscript{*} & -- & -- & -- & 50.47 & 2.08 & 24.00 & 52.34 & 2.87 & 22.12 \\
L40S & 4.14 & 157.21 & 171.02 & 10.36 & 132.47 & 189.70 & 14.70 & 92.16 & 138.16 \\
\bottomrule
\end{tabular}
}
\raggedright\scriptsize\textsuperscript{*}v0 does not support PyTorch 2.9 and CUDA 13.0, which are required for B300.
\end{table}

\subsubsection{Across Model Architectures}
\label{sec:scalability:model}

On RTX PRO~6000 with WildChat, Figure~\ref{fig:scalability-model} evaluates generalization across four model families: Llama-3.1-8B-Instruct~\cite{grattafiori2024llama3}, Mathstral-7B-v0.1~\cite{mathstral7b}, Qwen2.5-Coder-7B~\cite{hui2024qwen25coder}, and DeepSeek-R1-Distill-Qwen-7B~\cite{guo2025deepseekr1}. EB($\hat{k}^*$) consistently achieves the highest throughput and lowest TPOT across all models. Compared to v1, EB($\hat{k}^*$) improves RPS by 8--17\% and reduces TPOT by 17--47\%, with the largest gains on Llama-3.1-8B (47\% TPOT reduction). On this bandwidth-constrained GPU, EB dominates v1 across all tested architectures.

\begin{figure}[htbp]
    \centering
    \includegraphics[width=1\columnwidth]{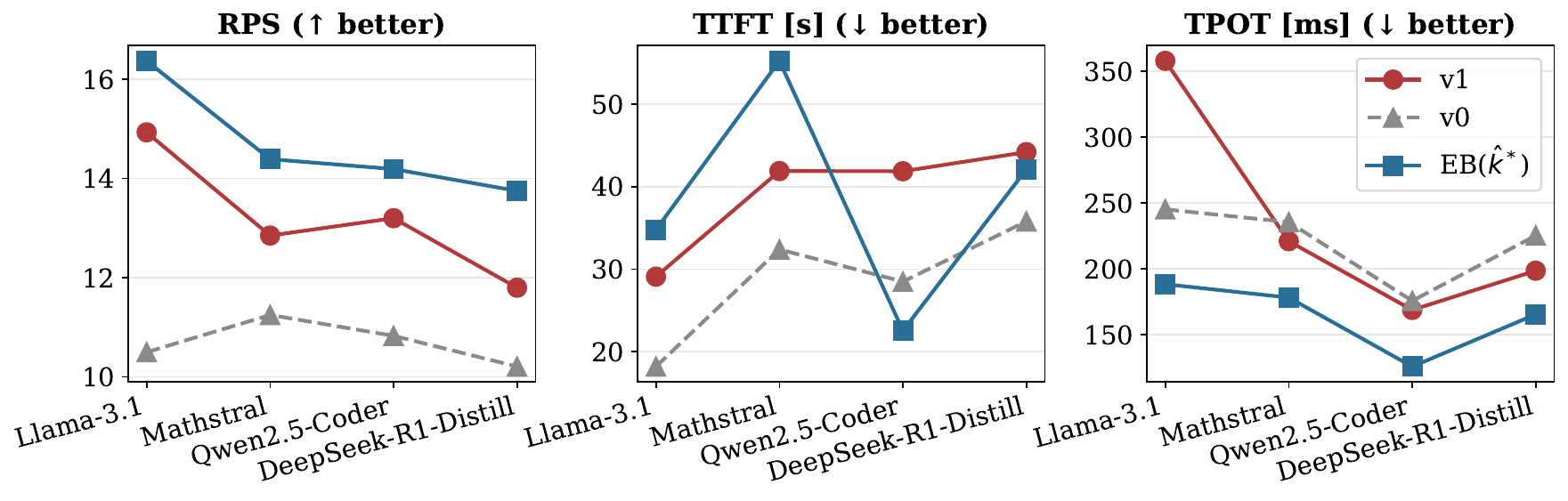}
    \caption{Scalability across models on RTX PRO 6000.}
    \label{fig:scalability-model}
\end{figure}
\vspace{-5pt}

\subsubsection{Robustness to Configuration Parameters}
\label{sec:scalability:robustness}

A configuration sweep over $(B, N)$ further shows that all three schedulers exhibit comparable robustness, so EB($\hat{k}^*$) imposes no additional tuning burden (Appendix~\ref{app:sensitivity}).

\section{Conclusion}
This paper shows that neither mixed batching (MB) nor exclusive batching (EB) universally dominates LLM serving---the optimal choice is governed by a closed-form crossover condition driven by GPU bandwidth, model size, and workload composition. Our adaptive EB scheduler, with asymptotically optimal phase-switching thresholds and memory-safe batch sizing, achieves up to 41.9\% throughput improvement on bandwidth-constrained GPUs. The hybrid EB\textsuperscript{+} scheduler instantiates this crossover condition online, matching or exceeding both baselines across stationary and non-stationary regimes without manual tuning, and rivaling disaggregated serving without extra hardware. Future directions include tighter SLO-aware switching and joint scheduling with sequence/pipeline parallelism.

\section*{Acknowledgments}
The authors sincerely thank the anonymous reviewers for their insightful comments and constructive suggestions, which significantly improved the quality of this work. This research was supported in part by the Hong Kong Research Grants Council under Grants 15508021 and 15511424.

\section*{Impact Statement}

This paper advances the field of machine learning by improving the efficiency of LLM inference scheduling. Our work is a systems-level optimization: it makes existing models cheaper to serve rather than enabling new capabilities, and the scheduling strategies are agnostic to generated content. A positive consequence is that EB's gains on bandwidth-constrained accelerators broaden the range of hardware on which LLMs can be deployed efficiently, lowering the cost of access. We do not foresee negative societal consequences beyond those well-established for LLM deployment.

\bibliography{ref}
\bibliographystyle{icml2026}

%%%%%%%%%%%%%%%%%%%%%%%%%%%%%%%%%%%%%%%%%%%%%%%%%%%%%%%%%%%%%%%%%%%%%%%%%%%%%%%
% APPENDIX
%%%%%%%%%%%%%%%%%%%%%%%%%%%%%%%%%%%%%%%%%%%%%%%%%%%%%%%%%%%%%%%%%%%%%%%%%%%%%%%
\newpage
\appendix
\onecolumn

\appendix

%%%%%%%%%%%%%%%%%%%%%%%%%%%%%%%%%%%%%%%%%%%%%%%%%%%%%%%%%%%%%%%%%%%%%%%%%%%%%%%
\section{Cost Model: Empirical Validation and Extensions}

\subsection{Empirical Verification of the Linear Iteration-Time Model}
\label{app:linear-model}

Our theoretical derivations (Propositions~\ref{prop:threshold_cfr},~\ref{prop:memory} and Theorem~\ref{thm:threshold_ifr}) build on the linear iteration-time model $T_p(k)=\alpha_p+\beta_p\sum_i L_i$ in Section~\ref{sec:model:formulation}. Although self-attention has theoretically quadratic complexity in context length, we show empirically that the quadratic term is negligible in the regime where our scheduler actually operates.

\textbf{Scope: the budget that matters is the per-iteration token budget.}
Both MB (vLLM v1) and our EB scheduler operate under \emph{chunked prefilling}: a long input is split across multiple iterations, with each iteration processing at most $B = \texttt{max\_num\_batched\_tokens}$ tokens (typically 2K--8K, up to 16K on high-end GPUs). The linear model is therefore applied \emph{per iteration} on at most $B$ tokens; it does not need to hold over an entire long input. All experiments in Section~\ref{sec:exp:e2e} respect this budget.

\textbf{Per-iteration validation within the practical regime.}
We profile prefill iteration time on RTX PRO 6000 Blackwell (96~GB) across seven models spanning four architectures and hidden dimensions $d\in[2048,5120]$: Qwen3-\{4B, 8B, 14B, 30B-A3B (MoE)\}, Llama-\{3.2-1B, 3.1-8B\}, and Mistral-Nemo-12B. For each model we fit $T=\alpha+\beta\,n_{\text{tok}}$ to measured prefill times and report $R^2$ within three budget ranges (Table~\ref{tab:linear-r2-practical}). Within the practical range $B\le 8\text{K}$, every model achieves $R^2 > 0.986$; extending to $B\le 16\text{K}$, $R^2$ stays $> 0.979$. As a cross-GPU check, repeating the protocol with Qwen3-4B on H200 also yields $R^2=0.998$, with the quadratic correction improving $R^2$ by only $0.001$.

\begin{table}[h]
\centering
\caption{Linear-model $R^2$ within practical token-budget ranges on RTX PRO 6000 Blackwell.}
\label{tab:linear-r2-practical}
\small
\setlength{\tabcolsep}{3pt}
\resizebox{\linewidth}{!}{
\begin{tabular}{l ccccccc}
\toprule
& Qwen3-4B & Qwen3-8B & Qwen3-14B & Qwen3-30B-A3B & Llama-3.2-1B & Llama-3.1-8B & Mistral-Nemo-12B \\
$B$ range & ($d{=}2560$) & ($d{=}4096$) & ($d{=}5120$) & ($d{=}2048$) & ($d{=}2048$) & ($d{=}4096$) & ($d{=}5120$) \\
\midrule
$\le 4\text{K}$  & 0.9993 & \textbf{0.9999} & 0.9998 & 0.9939 & 0.9993 & 0.9996 & 0.9988 \\
$\le 8\text{K}$  & 0.9957 & \textbf{0.9994} & 0.9977 & 0.9863 & 0.9971 & \textbf{0.9996} & 0.9990 \\
$\le 16\text{K}$ & 0.9870 & 0.9907 & \textbf{0.9960} & 0.9790 & 0.9913 & 0.9918 & 0.9930 \\
\bottomrule
\end{tabular}
}
\end{table}

The robustness of the linear fit is consistent with the FLOP decomposition of prefill: GEMM dominates, contributing $24d^2$ FLOPs per token versus $4Ld$ for attention. Even at $L=4096$ with $d=2560$, the attention fraction is $L/(6d)\approx 27\%$, and GQA further reduces this share, so the quadratic-in-$L$ component is a small fraction of overall prefill cost in the operating regime of interest.

\textbf{End-to-end serving at 128K context.}
To verify that per-iteration linearity translates to serving-level gains at long context, we run end-to-end benchmarks at 128K input / 64 output tokens on the same GPU under vLLM's default $B=8192$. Each request's prefill is chunked into ${\sim}16$ iterations, all within the validated linear regime ($R^2>0.99$ at $B\le 8\text{K}$). Concurrency is capped at the maximum value that avoids OOM on a single 96~GB GPU, since the KV cache for a 128K-token request is large. Even at this low-concurrency regime---where MB's scheduling overhead is amortized and prefill--decode contention is minimal---EB($\hat{k}^*$) achieves $+1.4\%$ to $+4.0\%$ higher throughput than v1 (MB) across all seven models (Table~\ref{tab:e2e-128k}), confirming that exclusive batching provides consistent gains at long context.

\begin{table}[h]
\centering
\caption{End-to-end throughput (tok/s) at 128K input / 64 output tokens on RTX PRO 6000 (96~GB). Concurrency is the maximum that fits in GPU memory.}
\label{tab:e2e-128k}
\small
\setlength{\tabcolsep}{4pt}
\begin{tabular}{l c r r r}
\toprule
Model & Concurrency & v1 (MB) & EB($\hat{k}^*$) & $\Delta$ \\
\midrule
Llama-3.2-1B     & 4 & 26{,}664 & \textbf{27{,}265} & $+2.3\%$ \\
Qwen3-30B-A3B    & 4 & 4{,}826  & \textbf{5{,}021}  & $+4.0\%$ \\
Qwen3-4B         & 4 & 6{,}134  & \textbf{6{,}314}  & $+2.9\%$ \\
Qwen3-8B         & 3 & 5{,}621  & \textbf{5{,}764}  & $+2.5\%$ \\
Llama-3.1-8B     & 3 & 6{,}331  & \textbf{6{,}482}  & $+2.4\%$ \\
Qwen3-14B        & 3 & 3{,}830  & \textbf{3{,}883}  & $+1.4\%$ \\
Mistral-Nemo-12B & 3 & 4{,}774  & \textbf{4{,}877}  & $+2.2\%$ \\
\bottomrule
\end{tabular}
\end{table}

\textbf{Stress test beyond practical settings.}
For completeness, we additionally probe how the linear model behaves outside its intended operating regime by forcing $B=131{,}072$ so that a single iteration processes the entire 128K input without chunking. This far exceeds typical deployment configurations, yet even here the linear fit retains $R^2\ge 0.943$ across all seven models (Table~\ref{tab:linear-r2-stress}), indicating that the approximation degrades gracefully rather than breaking down. Figure~\ref{fig:prefill-linearity} provides a visual summary spanning both regimes: the left panel shows the measured prefill time curves, and the right panel traces how $R^2$ degrades smoothly as the fit range is extended from the practical budget to the full 128K range.

\begin{table}[h]
\centering
\caption{Linear-model $R^2$ under the stress setting $B=131{,}072$ (one iteration on the full 128K input).}
\label{tab:linear-r2-stress}
\small
\setlength{\tabcolsep}{3pt}
\resizebox{\linewidth}{!}{
\begin{tabular}{l ccccccc}
\toprule
$B$ range & Qwen3-4B & Qwen3-8B & Qwen3-14B & Qwen3-30B-A3B & Llama-3.2-1B & Llama-3.1-8B & Mistral-Nemo-12B \\
\midrule
$\le 32\text{K}$  & 0.9731 & 0.9852 & \textbf{0.9906} & 0.9675 & 0.9762 & 0.9872 & \textbf{0.9899} \\
$\le 64\text{K}$  & 0.9597 & 0.9753 & \textbf{0.9831} & 0.9549 & 0.9625 & 0.9777 & \textbf{0.9829} \\
$\le 128\text{K}$ & 0.9471 & 0.9607 & \textbf{0.9664} & 0.9434 & 0.9481 & 0.9631 & \textbf{0.9669} \\
\bottomrule
\end{tabular}
}
\end{table}

\begin{figure}[h]
    \centering
    \includegraphics[width=0.65\linewidth]{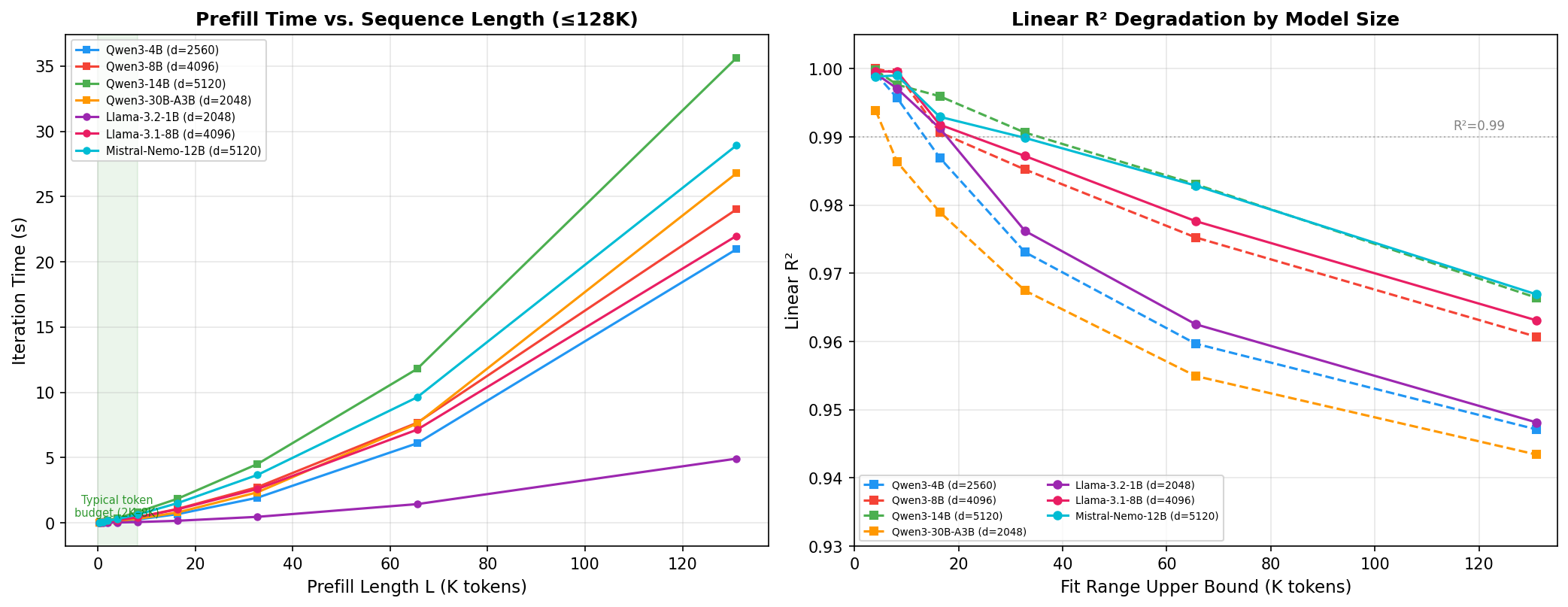}
    \caption{Linear-model validation across seven models on RTX PRO 6000 Blackwell.
    \textbf{(Left)} Measured prefill iteration time vs.\ context length $L$ (up to 128K tokens); the shaded band marks the typical per-iteration token budget ($\le 8\text{K}$).
    \textbf{(Right)} Linear-fit $R^2$ as a function of the fit-range upper bound. Within the practical budget ($\le 16\text{K}$) all models retain $R^2 > 0.97$; even when stretched to 128K the approximation degrades gracefully, staying $R^2 \ge 0.94$.}
    \label{fig:prefill-linearity}
\end{figure}

\subsection{Additional Marginal Cost and Kernel Breakdown Results}
\label{app:additional_marginal_cost}

Section~\ref{sec:intro} presents marginal cost and kernel breakdown analyses using Qwen3-4B. Here we extend these to Gemma-3-1B-IT, Qwen3-8B, and Qwen3-30B-A3B (a Mixture-of-Experts variant) on the same two-GPU setup (RTX PRO 6000 and H200), examining how model scale and architecture affect the prefill--decode interference phenomenon.

\begin{figure}[htbp]
    \centering
    \includegraphics[width=1\linewidth]{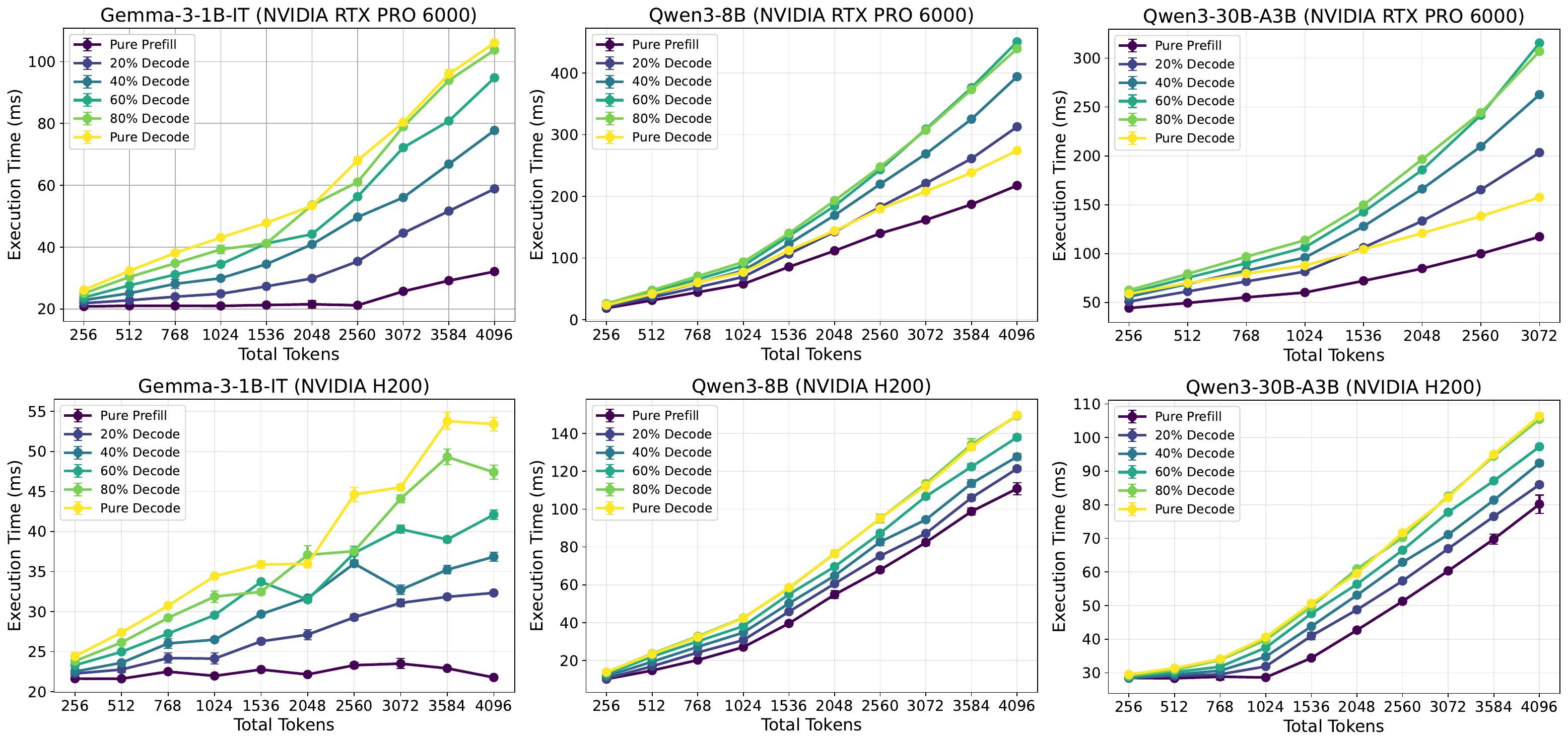}
    \caption{Execution time vs.\ total token count at various decode ratios for Gemma-3-1B-IT (left), Qwen3-8B (middle), and Qwen3-30B-A3B (right) on RTX PRO 6000 (top) and H200 (bottom).}
    \label{fig:execution_time_gpus}
\end{figure}

\textbf{Marginal cost.}
Figure~\ref{fig:execution_time_gpus} reports execution time vs.\ total token count at various decode ratios for each model on both GPUs. For the two Qwen models, RTX PRO 6000 results are consistent with the Qwen3-4B findings in the main text: mixed-batch execution exceeds pure decode at moderate decode ratios. On H200, the mixed-batch curves remain between the pure-prefill and pure-decode baselines across most operating points, with crossover only at high decode ratios. The interference effect intensifies with model size---Qwen3-30B-A3B exhibits a wider gap between mixed-batch and pure-decode costs than Qwen3-8B on RTX PRO 6000, reflecting increased memory-bandwidth pressure from larger (or more numerous) weight matrices.

For Gemma-3-1B-IT, however, the mixed-batch curves remain strictly between the pure-prefill and pure-decode baselines on both GPUs, and no crossover is observed. We attribute this to its lightweight architecture: a 1B-parameter dense model has fewer weight matrices and attention heads, so even the RTX PRO 6000's 1.792~TB/s bandwidth is sufficient to serve mixed batches without severe contention. This indicates that the interference effect is \emph{model-architecture-dependent}---it emerges when the model's memory footprint is large enough to saturate the available bandwidth.

\textbf{Kernel breakdown.}
Figure~\ref{fig:kernel_breakdown_all}\,((a)--(c)) decomposes the iteration time into kernel-level components. We highlight three observations:

\emph{(i) GEMM time is insensitive to batch composition.} Across all three models and both GPUs, GEMM time remains largely invariant to decode ratio, confirming that the linear projection cost depends primarily on total token count rather than the prefill--decode composition.

\emph{(ii) Attention is the primary source of composition sensitivity for Qwen models.} For both Qwen3-8B and Qwen3-30B-A3B, as well as Qwen3-4B in the main text, the Attention component accounts for the majority of the latency increase as decode ratio grows, and exhibits the same GPU-dependent crossover behavior: mixed-batch Attention becomes slower than pure-decode Attention at a much lower decode ratio on the RTX PRO 6000 than on the H200.

\emph{(iii) Model-specific kernel characteristics.} For Qwen3-30B-A3B, MoE routing and expert computation dominate kernel time but remain insensitive to batch composition, behaving similarly to GEMM. For Gemma-3-1B-IT, the ``Other'' category (elementwise operations, normalization, memory management) grows noticeably with decode ratio on RTX PRO 6000: GEMM and Attention complete quickly for a 1B-parameter model, so per-token overheads (KV-cache management, kernel launch latency, synchronization)---which scale with decode tokens---become proportionally significant.

Taken together, these results show that the prefill--decode interference identified in Section~\ref{sec:intro} is robust across the Qwen model family at different scales, including MoE architectures. The absence of interference for Gemma-3-1B-IT further supports our bandwidth-based explanation: the effect manifests when the model's memory footprint is sufficient to create bandwidth contention, making it most relevant for the medium-to-large models that dominate production deployments.

\begin{figure}[htbp]
    \centering
    \begin{subfigure}{0.32\linewidth}
        \centering
        \includegraphics[width=\linewidth]{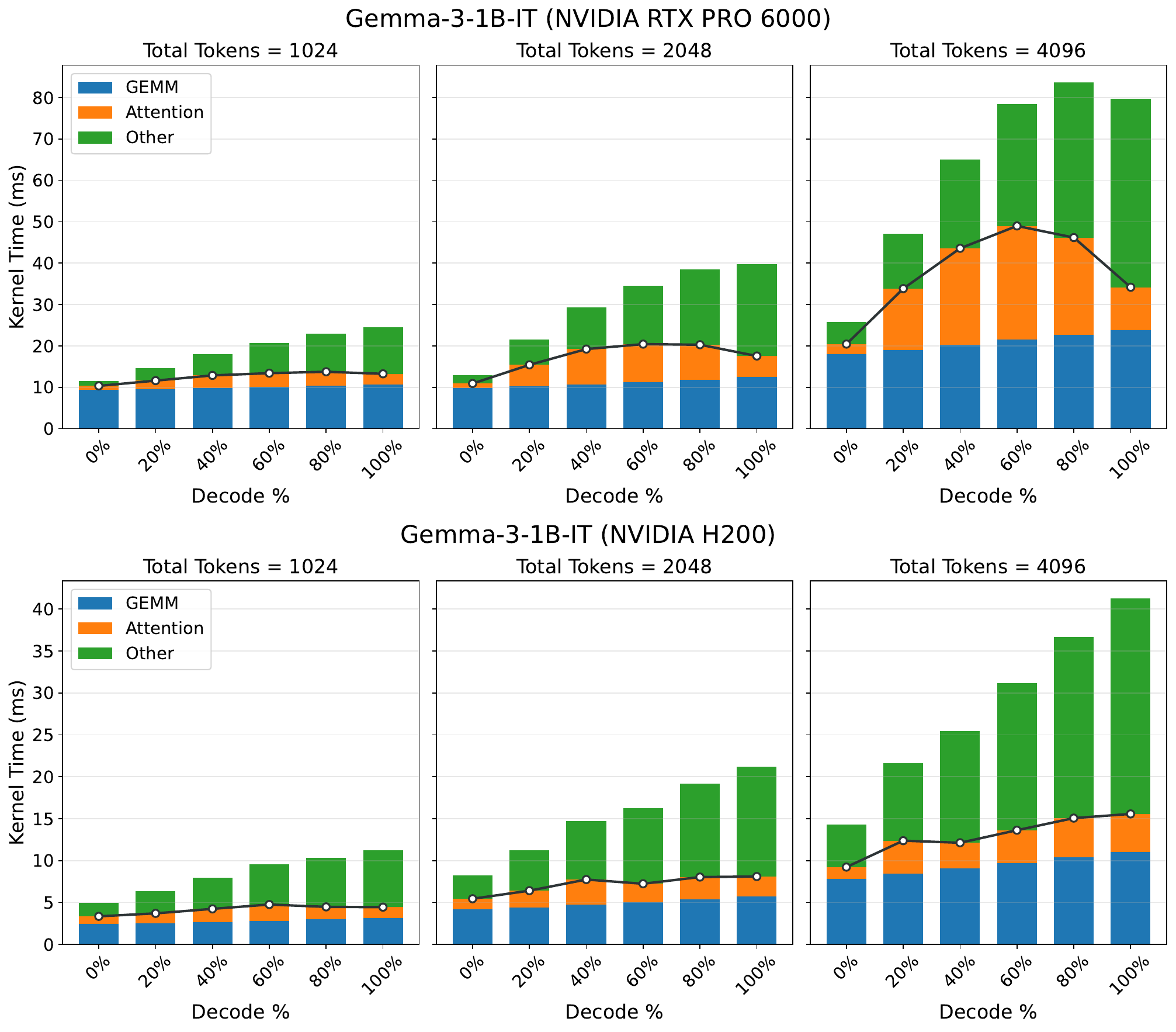}
        \caption{Gemma-3-1B-IT}
        \label{fig:kernel_breakdown_gemma}
    \end{subfigure}
    \hfill
    \begin{subfigure}{0.32\linewidth}
        \centering
        \includegraphics[width=\linewidth]{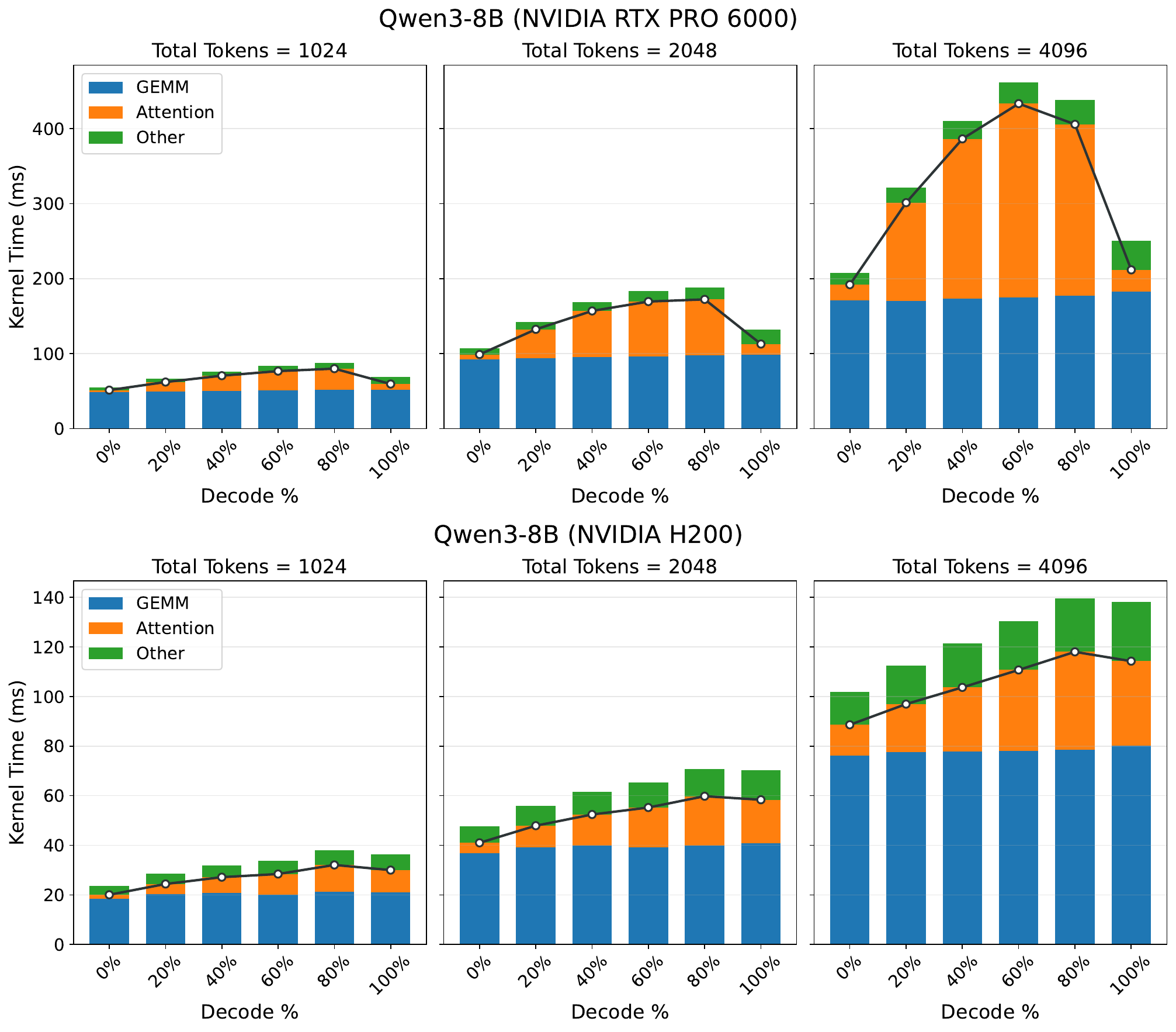}
        \caption{Qwen3-8B}
        \label{fig:kernel_breakdown_qwen3-8b}
    \end{subfigure}
    \hfill
    \begin{subfigure}{0.32\linewidth}
        \centering
        \includegraphics[width=\linewidth]{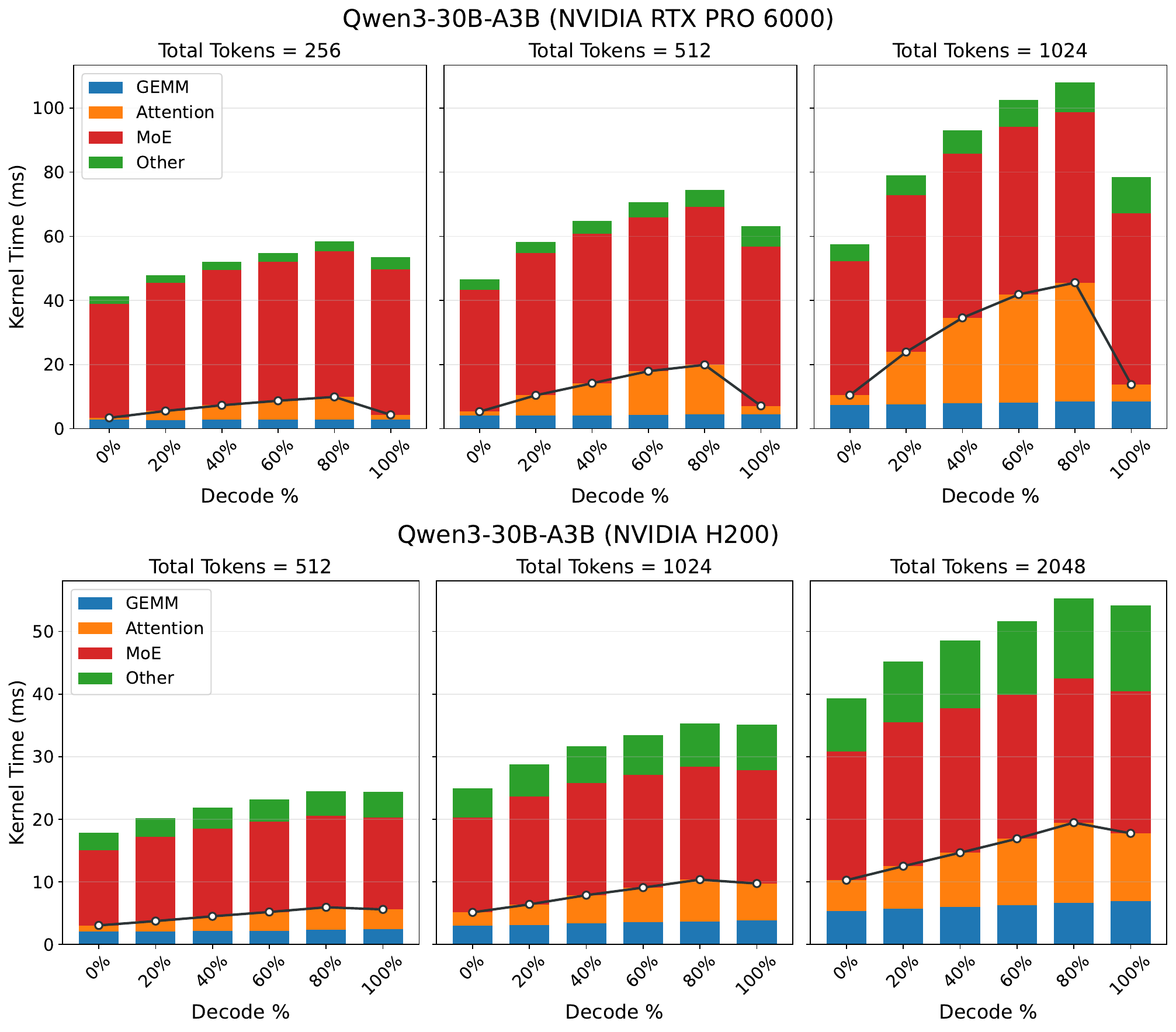}
        \caption{Qwen3-30B-A3B}
        \label{fig:kernel_breakdown_qwen3-30b}
    \end{subfigure}
    \caption{Kernel time breakdown by decode ratio on RTX PRO 6000 (top) and H200 (bottom) for (a) Gemma-3-1B-IT, (b) Qwen3-8B, and (c) Qwen3-30B-A3B.}
    \label{fig:kernel_breakdown_all}
\end{figure}

\begin{figure}[htbp]
    \centering
    \includegraphics[width=1\linewidth]{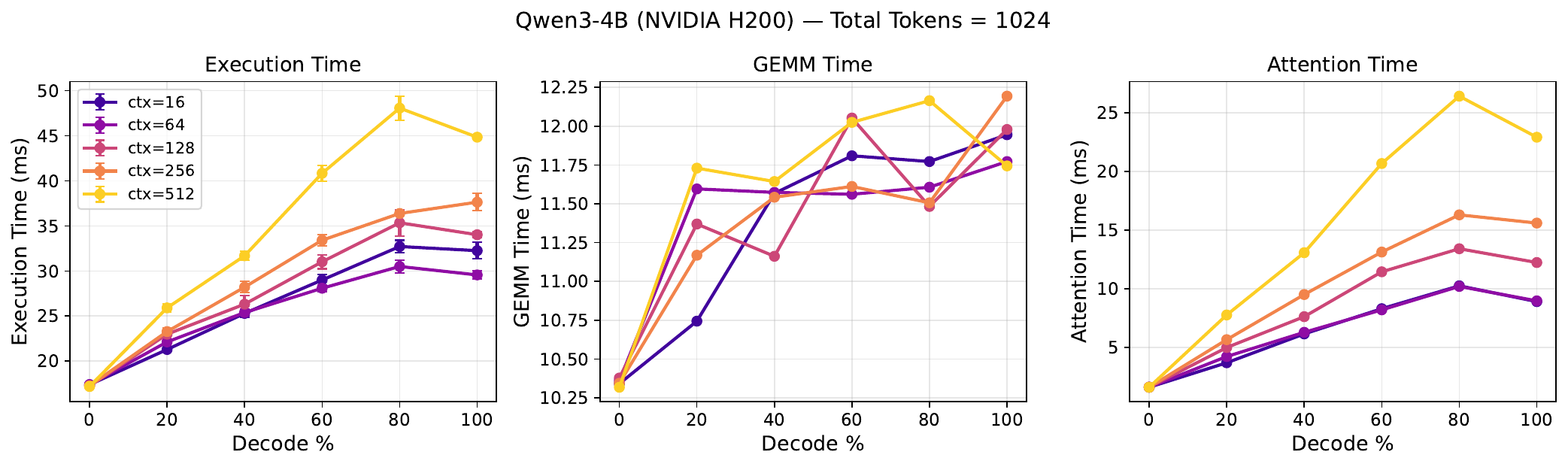}
    \caption{Effect of decode context length on (left) iteration time, (middle) GEMM time, and (right) attention time for Qwen3-4B on NVIDIA H200 (total batch = 1024 tokens).}
    \label{fig:ctx_comparison}
\end{figure}

\textbf{Effect of decode context length.}
Figures~\ref{fig:marginal_cost} and~\ref{fig:kernel_breakdown} in the main text use ctx$=16$.
Figure~\ref{fig:ctx_comparison} extends those results by varying ctx to examine how KV cache length affects iteration time for Qwen3-4B on H200 with a fixed total batch size of 1024 tokens.
Here, \emph{decode context length} (\texttt{ctx}) denotes the number of tokens already accumulated in each decode request's KV cache: at ctx$=c$, every decode token must stream $c$ key--value pairs from GPU memory during attention.

Three patterns emerge. First, Execution time (left) increases with ctx at all decode ratios, with the five curves ordered by ctx and diverging as decode percentage grows; at 0\% decode (pure prefill) all curves converge because no KV cache is accessed. Second, GEMM time (middle) is nearly invariant to ctx---the curves remain tightly clustered across the full decode range---confirming that linear-projection cost depends only on the number of tokens, not on KV cache size. Third, attention time (right) is the sole driver of context-length sensitivity: it grows monotonically with ctx and accounts for almost all of the vertical spread visible in the execution-time panel, with ctx$=512$ reaching $\sim$26~ms at 80\% decode versus $\sim$10~ms for ctx$=16$.

Taken together, increasing ctx uniformly lifts the execution-time and attention-time curves without changing their shape or crossover structure. This validates the claim in Section~\ref{sec:intro} that the qualitative conclusions of Figures~\ref{fig:marginal_cost}--\ref{fig:kernel_breakdown} generalize across context lengths.

\subsection{Convexity Analysis of Mixed-Batch Cost}
\label{app:convexity}

This appendix derives the \emph{interference convexity index} $\kappa$
referenced in Section~\ref{sec:model:threshold} and shows that two
qualitative predictions used in the evaluation---(i)~more negative $\kappa$
favors EB, and (ii)~EB's advantage peaks at intermediate decode
ratios---follow as direct algebraic consequences of the crossover
condition~\eqref{eq:comparison_condition}.

\textbf{Setup.}
Let $\mathcal{K}(r)$ denote the per-iteration kernel time of a mixed batch of fixed
total token count, expressed as a function of the decode ratio
$r \in [0,1]$. Let
\begin{equation*}
    \bar{\mathcal{K}}(r) \;=\; (1-r)\,\mathcal{K}(0) + r\,\mathcal{K}(1)
\end{equation*}
be the linear interpolation between pure-prefill ($r=0$) and pure-decode
($r=1$) costs. The marginal-cost gap on the LHS of
\eqref{eq:comparison_condition} satisfies
\begin{equation}\label{eq:gap_proportional}
    \beta_{\mathrm{MB}}^e(r) - \beta_{\mathrm{EB}}^w(r) \;\propto\; \mathcal{K}(r) - \bar{\mathcal{K}}(r),
\end{equation}
because $\beta_{\mathrm{MB}}^e(r)$ is the per-token cost of the mixed batch
at ratio $r$ (proportional to $\mathcal{K}(r)$), while $\beta_{\mathrm{EB}}^w(r)$ is
the workload-weighted average of the exclusive prefill and decode
per-token costs (proportional to $\bar{\mathcal{K}}(r)$).

\textbf{Quadratic fit and $\kappa$.}
Empirically, $\mathcal{K}(r)$ is well approximated by a quadratic
$\mathcal{K}(r) = c_0 + c_1 r + c_2 r^2$. Direct calculation gives
\begin{equation}\label{eq:fL_diff}
    \mathcal{K}(r) - \bar{\mathcal{K}}(r) \;=\; c_2 r^2 - c_2 r \;=\; -c_2\,r(1-r).
\end{equation}
The deviation $\mathcal{K}(r)-\bar{\mathcal{K}}(r)$ is therefore controlled by the single
coefficient $c_2$ (the second derivative $\mathcal{K}''/2$). To obtain a
dimensionless, hardware-comparable scalar, we normalize $c_2$ by the
pure-decode cost $\mathcal{K}(1) = c_0 + c_1 + c_2$ and define the \emph{interference
convexity index}
\begin{equation}\label{eq:convexity_index}
    \kappa \;=\; \frac{2c_2}{c_0 + c_1 + c_2}.
\end{equation}

\textbf{Consequences.}
Combining \eqref{eq:gap_proportional}, \eqref{eq:fL_diff}, and
\eqref{eq:convexity_index} (rearranged as $c_2 = \kappa \mathcal{K}(1)/2$) yields
\begin{equation*}
    \beta_{\mathrm{MB}}^e - \beta_{\mathrm{EB}}^w \;\propto\;
    -\,\frac{\kappa\, \mathcal{K}(1)}{2}\, r(1-r),
\end{equation*}
from which two predictions follow:

\emph{(i) More negative $\kappa$ favors EB.}
Since $\mathcal{K}(1) > 0$ and $r(1-r) \ge 0$ on $[0,1]$, a more negative $\kappa$
enlarges the LHS of~\eqref{eq:comparison_condition}, so the inequality
fails at smaller mismatches and EB outperforms MB more easily.
Empirically, $\kappa$ is negative on most GPUs (the kernel-time curve is
concave); its magnitude varies substantially across hardware. On Qwen3-4B
with 4096 tokens we measure $\kappa = -11.6$ on the RTX PRO 6000 versus
$\kappa = -0.69$ on the H200, confirming that bandwidth-constrained
hardware exhibits substantially stronger prefill--decode interference.

\emph{(ii) EB's advantage peaks at intermediate decode ratios.}
The factor $r(1-r)$ in~\eqref{eq:fL_diff} is maximized at $r = 1/2$ and
vanishes at the extremes $r \in \{0, 1\}$. Consequently, even on hardware
with strongly negative $\kappa$, EB yields its largest gains over MB on
workloads operating at moderate decode ratios; prefill-heavy
($r \to 0$) and decode-heavy ($r \to 1$) workloads see diminishing
returns. This prediction is verified empirically in
Section~\ref{sec:exp:e2e}, where on RTX PRO 6000 with Qwen3-8B,
EB($\hat{k}^*$) improves over v1 by +15.3\% on ShareGPT and +11.3\% on
WildChat (both $r \approx 0.5$--$0.7$) but only +3.7\% on LongBench
($r \approx 0.004$) and +1.4\% on NuminaMath ($r > 0.85$).

\subsection{Empirical Calibration of $\beta_{\mathrm{MB}}^e(r)$ for EB\textsuperscript{+}}
\label{app:eb-plus-calibration}

EB\textsuperscript{+}'s online switching rule in
Eq.~\eqref{eq:eb_plus_switch} requires runtime evaluation of
$\beta_{\mathrm{MB}}^e(\hat r)$, the mixed-batch per-token cost as a
function of decode ratio. Unlike $\beta_{\mathrm{EB}}^w$, which derives
analytically from the prefill/decode kernel coefficients
$(\alpha_p, \beta_p, \alpha_d, \beta_d)$ already calibrated for
EB($\hat{k}^*$), $\beta_{\mathrm{MB}}^e(r)$ captures
prefill$\leftrightarrow$decode interference inside a mixed batch and
must be measured empirically.

When the interference convexity $\kappa$ has been profiled
(Appendix~\ref{app:convexity}), the coefficients
$(c_0,c_1,c_2)$ of $\beta_{\mathrm{MB}}^e(r) = c_0 + c_1 r + c_2 r^2$ follow directly
from $c_2 = \kappa\, \mathcal{K}(1)/2$. When $\kappa$ is not available, $(c_0,c_1,c_2)$ can
be obtained from a small set of synthetic v1 runs at varying decode
ratios: fit $1/\text{throughput}_{v1}(r)$ to a quadratic in $r$. On
H200 with Qwen3-8B, fitting v1 grid data at
$r \in \{0.11, 0.50, 0.89\}$ yields coefficients in the
$10^{-5}\ \mathrm{s/tok}$ range.
The priority margin $\delta$ in Eq.~\eqref{eq:eb_plus_switch} should match
the observed $|\mathrm{LHS} - \mathrm{RHS}|$ magnitude. For the
calibration above, typical $|\mathrm{LHS} - \mathrm{RHS}|$ at
$\hat N_{\mathrm{obs}} \in [200, 2000]$ is in the
$10^{-6}\!-\!10^{-5}$ range; we set $\delta = 10^{-5}$, yielding stable
MB$\leftrightarrow$EB switching across workload drift (verified on
non-stationary workloads in Table~\ref{tab:eb_plus}, bottom). With this
calibration, EB\textsuperscript{+} on Qwen3-8B distribution-shift
correctly switches MB$\to$EB during decode-rich phases and back to MB
as concurrency drops near experiment end.

%%%%%%%%%%%%%%%%%%%%%%%%%%%%%%%%%%%%%%%%%%%%%%%%%%%%%%%%%%%%%%%%%%%%%%%%%%%%%%%
\section{Proof of Proposition~\ref{prop:threshold_cfr}: Baseline Threshold under CFR}
\label{app:proof:threshold_cfr}

We derive the optimal switching threshold under constant failure rate (CFR), where the hazard rate is $h(t) = p_0$. This corresponds to geometric output length distributions, with mean output length $\mu_O = 1/p_0$.

\subsection{Stochastic Model and Fluid Approximation}

\textbf{Exact stochastic dynamics.}
Let $N_t$ denote the number of requests still decoding after $t$ iterations.
The process $\{N_t\}_{t \geq 0}$ is a pure death process: at each iteration,
each of the $N_t$ active requests independently completes with probability $p_0$. Thus,
\begin{equation*}
    N_{t+1} = N_t - D_t, \quad D_t \mid N_t \sim \mathrm{Binomial}(N_t, p_0),
\end{equation*}
where $D_t$ is the number of completions in iteration $t$.

\textbf{Fluid approximation.}
For large batch sizes, we approximate the discrete stochastic process by a continuous deterministic flow.
Define the scaled process $\bar{N}_t^{(N)} = N_t / N$.
By the law of large numbers for density-dependent population processes~\citep{ethier2009markov,kurtz1970solutions},
as $N \to \infty$, the scaled process converges uniformly in probability to the solution of the ordinary differential equation:
\begin{equation}\label{eq:fluid_ode}
    \frac{dn}{dt} = -p_0 \cdot n(t), \quad n(0) = 1,
\end{equation}
where $t$ denotes the \emph{iteration index} (not wall-clock time), consistent with the hazard-rate argument used in the main text.
Specifically, for any fixed $T > 0$,
\begin{equation*}
    \sup_{0 \leq t \leq T} \left| \bar{N}_t^{(N)} - n(t) \right| \xrightarrow{P} 0 \quad \text{as } N \to \infty.
\end{equation*}
The approximation error is $O(1/\sqrt{N})$, arising from the central limit theorem for the cumulative martingale differences.

The ODE~\eqref{eq:fluid_ode} has solution:
\begin{equation*}
    n(t) = e^{-p_0 t}.
\end{equation*}
The exact discrete trajectory is $(1-p_0)^t$; the continuous-time fluid limit $e^{-p_0 t}$ used here introduces an additional $O(p_0)$ relative error (the gap between $-p_0$ and $\ln(1-p_0)$), enumerated as Source~4 in the error analysis below.

\subsection{Decode Phase Duration}

A phase switch occurs when $k = \theta N$ requests have completed, i.e., when the batch size drops to $N(1-\theta)$.
Under the fluid approximation, this occurs at iteration index $t^*$ satisfying $n(t^*) = 1 - \theta$:
\begin{equation*}
    e^{-p_0 t^*} = 1 - \theta \quad \Longrightarrow \quad t^* = \frac{-\ln(1-\theta)}{p_0} = \frac{\zeta(\theta)}{p_0},
\end{equation*}
where we define $\zeta(\theta) \triangleq -\ln(1-\theta)$ for notational convenience.

\textbf{Wall-clock decode time.}
The wall-clock time for the decode phase is the sum of iteration times.
Each iteration with batch size $n$ takes time $t_d(n) = \alpha_d + \beta_d n$,
where $\alpha_d$ is the fixed overhead and $\beta_d$ is the per-request cost.
Under the fluid approximation, the expected decode time is:
\begin{equation}\label{eq:Td_integral}
    \mathbb{E}[T_d] = \int_0^{t^*} t_d(N \cdot n(t)) \, dt
    = \int_0^{t^*} \left( \alpha_d + \beta_d N e^{-p_0 t} \right) dt.
\end{equation}

Evaluating the integral:
\begin{align*}
    \mathbb{E}[T_d] &= \alpha_d t^* + \beta_d N \int_0^{t^*} e^{-p_0 t} \, dt \\
    &= \alpha_d \cdot \frac{\zeta}{p_0} + \beta_d N \cdot \frac{1 - e^{-p_0 t^*}}{p_0} \\
    &= \frac{\alpha_d \zeta}{p_0} + \frac{\beta_d N \theta}{p_0},
\end{align*}
where the last equality uses $1 - e^{-p_0 t^*} = \theta$.

\begin{remark}[Interpretation of the integral]
The integral~\eqref{eq:Td_integral} sums the wall-clock time over iterations.
The variable $t$ is the iteration index, and $dt = 1$ represents one iteration.
This is the fluid-limit analog of the discrete sum $\sum_{i=1}^{M} t_d(N_i)$,
where $M$ is the (random) number of iterations until $k$ completions.
\end{remark}

\subsection{Approximation Error Analysis}
\label{app:fluid_error}

The fluid approximation introduces several sources of error relative to the exact stochastic model. We characterize each below and argue that they do not materially affect the optimal threshold.

\textbf{Source 1: Stochastic fluctuation.}
The fluid model replaces the random completions $D_t \sim \mathrm{Binomial}(N_t, p_0)$ with their conditional expectation $p_0 N_t$.
By Kurtz's theorem~\citep{kurtz1970solutions}, the scaled batch size $N_t/N$ converges uniformly to the ODE solution $n(t) = e^{-p_0 t}$, with fluctuations of order $O(1/\sqrt{N})$.
Since practical LLM serving systems operate at batch sizes $N \geq 64$, this contributes at most a few percent relative error to the decode time estimate.

\textbf{Source 2: Discretization and stopping-time mismatch.}
The fluid model assumes the batch size decreases continuously and stops exactly
when $k$ requests have completed. In the actual discrete system, this cannot happen
for two reasons:
(i)~iterations are discrete, so the system can only stop at integer iteration
indices $M = \lceil t^* \rceil$, not at the continuous stopping point $t^*$;
and (ii)~multiple requests may complete within a single iteration, causing the
cumulative completion count to \emph{overshoot} the threshold $k$ rather than
hitting it exactly.
Both effects share the same root cause---the discrete system cannot stop at
an arbitrary point---and produce a bounded relative error that does not vanish
with $N$.
The bias is more pronounced at small $\theta$, because the batch size at the
switching point is still large (${\approx}\,N(1{-}\theta)$), leading to larger
per-iteration completion variance and thus larger overshoot relative to~$k$.

\textbf{Source 3: Integer rounding.}
The continuous threshold $\theta N$ is rounded to $k = \lfloor \theta N \rfloor$ in practice. This contributes an $O(1/N)$ relative error and is negligible for $N \geq 32$.

\textbf{Source 4: Continuous-time fluid limit.}
The ODE $\dot n = -p_0 n$ has solution $n(t) = e^{-p_0 t}$, whereas the exact discrete trajectory is $(1-p_0)^t = e^{t\ln(1-p_0)}$. The gap $-p_0 - \ln(1-p_0) = -p_0^2/2 + O(p_0^3)$ contributes an additional $O(p_0)$ relative error to $\mathbb{E}[T_d]$ and to $\theta_0$. This source is independent of $N$, deterministic, and would be absorbed exactly by reparametrizing $p_0 \to -\ln(1-p_0)$ in~\eqref{eq:cfr_condition_final}. For our operating regime $p_0 \in [0.003, 0.01]$, the numerical impact on $\theta_0$ is below $1\%$ and is dominated by the $O(1/\sqrt N)$ stochastic fluctuation (Source~1).

\begin{table}[h]
\centering
\caption{Summary of approximation error sources.}
\label{tab:error_sources}
\begin{tabular}{@{}llcc@{}}
\toprule
Source & Origin & Relative error & Vanishes as $N \to \infty$? \\
\midrule
Stochastic fluctuation & Binomial $\to$ expectation & $O(1/\sqrt{N})$ & Yes \\
Discretization \& overshoot & Discrete stopping & Bounded & No \\
Integer rounding & $\lfloor \theta N \rfloor$ & $O(1/N)$ & Yes \\
Continuous-time fluid limit & $e^{-p_0 t}$ vs $(1-p_0)^t$ & $O(p_0)$ & Yes (as $p_0\to 0$) \\
\bottomrule
\end{tabular}
\end{table}

\subsection{Expected Prefill Time}

For the prefill phase, each of the $k = \theta N$ completed requests generates a new request. The prefill time is:
\begin{equation*}
    \mathbb{E}[T_p] = \alpha_p + \beta_p \cdot (\theta N) \cdot \mu_L = \alpha_p + \beta_p N \mu_L \theta,
\end{equation*}
where $\mu_L$ is the mean input length.

\subsection{Complete Cycle Time}

The total cycle time combines decode and prefill phases:
\begin{equation*}
    \mathcal{T}(\theta) = \mathbb{E}[T_d] + \mathbb{E}[T_p] = \frac{\alpha_d \zeta}{p_0} + \frac{\beta_d N \theta}{p_0} + \alpha_p + \beta_p N \mu_L \theta.
\end{equation*}

Grouping terms:
\begin{equation*}
    \mathcal{T}(\theta) = \frac{\alpha_d \zeta(\theta)}{p_0} + \alpha_p + \left(\frac{\beta_d N}{p_0} + \beta_p N \mu_L\right)\theta.
\end{equation*}

\subsection{Throughput Optimization}

The normalized throughput is:
\begin{equation*}
    R(\theta) = \frac{\theta}{\mathcal{T}(\theta)}.
\end{equation*}

To maximize $R$, we take the derivative and set it to zero:
\begin{equation*}
    \frac{dR}{d\theta} = \frac{\mathcal{T}(\theta) - \theta \mathcal{T}'(\theta)}{\mathcal{T}(\theta)^2} = 0.
\end{equation*}

This yields the first-order optimality condition:
\begin{equation}\label{eq:foc_cfr}
    \mathcal{T}(\theta) = \theta \mathcal{T}'(\theta).
\end{equation}

\textbf{Computing $\mathcal{T}'(\theta)$.} Since $\zeta(\theta) = -\ln(1-\theta)$, we have $\zeta'(\theta) = \frac{1}{1-\theta}$. Thus:
\begin{equation*}
    \mathcal{T}'(\theta) = \frac{\alpha_d}{p_0(1-\theta)} + \frac{\beta_d N}{p_0} + \beta_p N \mu_L.
\end{equation*}

\textbf{Applying the optimality condition.} Substituting into~\eqref{eq:foc_cfr}:
\begin{equation*}
    \frac{\alpha_d \zeta}{p_0} + \frac{\beta_d N \theta}{p_0} + \alpha_p + \beta_p N \mu_L \theta = \theta \left[\frac{\alpha_d}{p_0(1-\theta)} + \frac{\beta_d N}{p_0} + \beta_p N \mu_L\right].
\end{equation*}

Expanding the right-hand side:
\begin{equation*}
    \frac{\alpha_d \zeta}{p_0} + \frac{\beta_d N \theta}{p_0} + \alpha_p + \beta_p N \mu_L \theta = \frac{\alpha_d \theta}{p_0(1-\theta)} + \frac{\beta_d N \theta}{p_0} + \beta_p N \mu_L \theta.
\end{equation*}

The terms $\frac{\beta_d N \theta}{p_0}$ and $\beta_p N \mu_L \theta$ appear on both sides and cancel:
\begin{equation*}
    \frac{\alpha_d \zeta}{p_0} + \alpha_p = \frac{\alpha_d \theta}{p_0(1-\theta)}.
\end{equation*}

Multiplying both sides by $\frac{p_0}{\alpha_d}$:
\begin{equation*}
    \zeta + \frac{p_0 \alpha_p}{\alpha_d} = \frac{\theta}{1-\theta}.
\end{equation*}

Rearranging and substituting $\zeta = -\ln(1-\theta)$:
\begin{equation}\label{eq:cfr_condition_final}
    \boxed{\frac{\theta_0}{1-\theta_0} + \ln(1-\theta_0) = \frac{p_0 \alpha_p}{\alpha_d}}
\end{equation}

\subsection{Existence and Uniqueness}

\begin{lemma}
Equation~\eqref{eq:cfr_condition_final} has a unique solution $\theta_0 \in (0,1)$ for any $\gamma = p_0 \alpha_p / \alpha_d > 0$.
\end{lemma}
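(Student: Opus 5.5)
Define $g(\theta) := \frac{\theta}{1-\theta} + \ln(1-\theta)$ on $[0,1)$. Then equation~\eqref{eq:cfr_condition_final} reads $g(\theta_0) = \gamma$ with $\gamma = p_0\alpha_p/\alpha_d > 0$. The plan is a standard intermediate-value plus strict-monotonicity argument.

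\textbf{Step 1: boundary values.} Observe that $g$ is continuous on $[0,1)$ and $g(0) = 0$.

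\textbf{Step 2: strict monotonicity.} Differentiate to obtain
\[
g'(\theta) = \frac{1}{(1-\theta)^2} - \frac{1}{1-\theta} = \frac{\theta}{(1-\theta)^2},
\]
which is strictly positive on $(0,1)$. Hence $g$ is strictly increasing on $[0,1)$, and $g(\theta) > 0$ for every $\theta \in (0,1)$.

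\textbf{Step 3: divergence at $\theta \to 1^-$.} This is the only step needing a little care, since the two terms of $g$ diverge in opposite directions. Substitute $s = 1-\theta \to 0^+$, giving
\[
g = \frac{1}{s} - 1 + \ln s .
\]
Because $s\ln s \to 0$, we have $\frac{1}{s} + \ln s = \frac{1}{s}\,(1 + s\ln s) \to +\infty$. Therefore $g(\theta) \to +\infty$ as $\theta \to 1^-$.

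\textbf{Step 4: conclusion.} By continuity, $g$ maps $(0,1)$ onto $(0,\infty)$. The intermediate value theorem gives existence of some $\theta_0 \in (0,1)$ with $g(\theta_0) = \gamma$ for any $\gamma > 0$. Strict monotonicity from Step 2 gives uniqueness.

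As a byproduct, $\theta_0$ is an increasing function of $\gamma$. The same computation also shows that $\theta_0$ is indeed a maximizer of $R(\theta)$ and not merely a stationary point. Writing $\mathcal{T} - \theta\mathcal{T}' = \frac{\alpha_d}{p_0}\bigl(\gamma - g(\theta)\bigr)$, this quantity is positive for $\theta < \theta_0$ and negative for $\theta > \theta_0$. So $dR/d\theta$ changes sign from $+$ to $-$ at $\theta_0$, which confirms the interior maximum claimed in Proposition~\ref{prop:threshold_cfr}.
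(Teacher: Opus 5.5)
Your proof is correct and follows the paper's argument essentially step for step: the same function $\frac{\theta}{1-\theta}+\ln(1-\theta)$ with derivative $\theta/(1-\theta)^2$, the same boundary values, and the intermediate value theorem combined with strict monotonicity. Your substitution $s=1-\theta$ with $s\ln s\to 0$ makes the $\theta\to 1^-$ limit more rigorous than the paper's ``first term dominates'' remark. Your identity $\mathcal{T}-\theta\mathcal{T}'=\frac{\alpha_d}{p_0}\left(\gamma-g(\theta)\right)$ also yields the paper's separate second-order lemma directly, without computing $\mathcal{T}''$.
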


\begin{proof}
Define $\Psi(\theta) = \frac{\theta}{1-\theta} + \ln(1-\theta)$.

\textbf{Step 1: Monotonicity.} Computing the derivative:
\begin{equation*}
    \Psi'(\theta) = \frac{(1-\theta) - \theta \cdot (-1)}{(1-\theta)^2} - \frac{1}{1-\theta} = \frac{1}{(1-\theta)^2} - \frac{1}{1-\theta} = \frac{1 - (1-\theta)}{(1-\theta)^2} = \frac{\theta}{(1-\theta)^2}.
\end{equation*}

Since $\theta > 0$ and $(1-\theta)^2 > 0$ for $\theta \in (0,1)$, we have $\Psi'(\theta) > 0$, so $\Psi$ is strictly increasing.

\textbf{Step 2: Boundary behavior.}
\begin{itemize}[nosep]
    \item As $\theta \to 0^+$: $\frac{\theta}{1-\theta} \to 0$ and $\ln(1-\theta) \to 0$, so $\lim_{\theta \to 0^+} \Psi(\theta) = 0$.
    \item As $\theta \to 1^-$: $\frac{\theta}{1-\theta} \to +\infty$ and $\ln(1-\theta) \to -\infty$, but the first term dominates (it grows like $1/(1-\theta)$ while the second decreases like $\ln(1-\theta)$), so $\lim_{\theta \to 1^-} \Psi(\theta) = +\infty$.
\end{itemize}

\textbf{Step 3: Existence and uniqueness.} Since $\Psi$ is continuous and strictly increasing on $(0,1)$ with $\Psi(0^+) = 0$ and $\Psi(1^-) = +\infty$, by the intermediate value theorem, for any $\gamma > 0$, there exists a unique $\theta_0 \in (0,1)$ such that $\Psi(\theta_0) = \gamma$.
\end{proof}

\subsection{Parameter Independence}

\begin{corollary}
The optimal threshold $\theta_0$ depends only on the ratio $p_0\alpha_p/\alpha_d$ and is independent of $\beta_d$, $\beta_p$, $N$, and $\mu_L$.
\end{corollary}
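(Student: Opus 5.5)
The corollary follows by reading off the structure of the first-order condition derived just above, so I will argue directly from \eqref{eq:cfr_condition_final} together with the existence-and-uniqueness lemma. Define $\gamma = p_0\alpha_p/\alpha_d$ and $\Psi(\theta)=\frac{\theta}{1-\theta}+\ln(1-\theta)$. The lemma shows that $\Psi$ is a strictly increasing bijection from $(0,1)$ onto $(0,\infty)$. So $\theta_0=\Psi^{-1}(\gamma)$ is a well-defined function of the single scalar $\gamma$. Since $\Psi$ contains no model parameters at all, $\theta_0$ can depend on $(p_0,\alpha_p,\alpha_d,\beta_p,\beta_d,N,\mu_L)$ only through $\gamma$. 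In particular it is independent of $\beta_d,\beta_p,N,\mu_L$.

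To make this airtight, I would revisit where those parameters enter the cycle time $\mathcal{T}(\theta)$. They appear only in the term $\left(\frac{\beta_d N}{p_0}+\beta_p N\mu_L\right)\theta$, which is linear in $\theta$ and has no constant part. For any function of the form $c\theta$, the quantity $c\theta-\theta\cdot c$ vanishes identically. Hence this term drops out of the condition $\mathcal{T}(\theta)=\theta\mathcal{T}'(\theta)$, which is the cancellation already displayed above. Dividing the remaining equation by the positive constant $\alpha_d/p_0$ leaves $\Psi(\theta)=\gamma$.

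I would also confirm that the stationary point is the maximizer of $R(\theta)=\theta/\mathcal{T}(\theta)$, so that ``optimal threshold'' is justified. The sign of $R'(\theta)$ equals the sign of $\mathcal{T}-\theta\mathcal{T}'=\frac{\alpha_d}{p_0}\bigl(\gamma-\Psi(\theta)\bigr)$. Because $\Psi$ is increasing, this sign is positive for $\theta<\theta_0$ and negative for $\theta>\theta_0$. So $\theta_0$ is the unique global maximizer on $(0,1)$.

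Finally, to connect with $\lim_{N\to\infty}k^*/N$, note that $R(\theta)$ does depend on $N$, but its unique maximizer does not. The integer optimizer $k^*$ therefore satisfies $|k^*/N-\theta_0|\le 1/N$, up to rounding, and the limit follows.

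The main obstacle is only the interpretive point that \emph{independence} must hold for the maximizer and not for $R$ itself. The sign argument in the previous step handles this cleanly, since it shows the location of the maximum is set by $\gamma$ alone even though the height of $R$ varies with the other parameters.
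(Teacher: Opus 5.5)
Your proposal is correct and follows the paper's own argument: $\beta_d,\beta_p,N,\mu_L$ enter $\mathcal{T}(\theta)$ only through a term linear in $\theta$ with no constant part, so they cancel in $\mathcal{T}=\theta\mathcal{T}'$ and leave $\Psi(\theta)=\gamma$. Your extra steps are sound and make explicit what the paper handles in its surrounding lemmas: writing $\theta_0=\Psi^{-1}(\gamma)$ via the uniqueness lemma, the sign argument $\mathcal{T}-\theta\mathcal{T}'=\frac{\alpha_d}{p_0}(\gamma-\Psi(\theta))$ for global optimality, and the unimodality-based bound $|k^*/N-\theta_0|\le 1/N$ for the limit.
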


\begin{proof}
This follows directly from the cancellation observed in the derivation: the terms involving $\beta_d$, $\beta_p$, $N$, and $\mu_L$ appear linearly in $\theta$ on both sides of the optimality condition and cancel exactly.

Intuitively, this occurs because:
\begin{enumerate}[nosep]
    \item The per-token decode cost $\beta_d N \theta / p_0$ depends on the \emph{number} of requests that complete, which equals $\theta N$ regardless of \emph{when} they complete.
    \item The prefill cost $\beta_p N \mu_L \theta$ similarly depends only on the total tokens prefilled, not the timing.
\end{enumerate}

Thus, the marginal cost-benefit trade-off at the optimal threshold depends only on the timing-related parameters $(\alpha_d, \alpha_p, p_0)$.
\end{proof}

\subsection{Second-Order Condition}

To confirm $\theta_0$ is a maximum (not a minimum), we verify the second-order condition.

\begin{lemma}
The critical point $\theta_0$ satisfying~\eqref{eq:cfr_condition_final} is a global maximum of $R(\theta)$.
\end{lemma}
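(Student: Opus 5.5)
We need to show that $\theta_0$ is a global maximum of $R(\theta)=\theta/\mathcal{T}(\theta)$ on $(0,1)$. Rather than differentiating twice, the plan is to study the sign of $dR/d\theta$ directly. Its numerator is
\[
G(\theta) := \mathcal{T}(\theta)-\theta\mathcal{T}'(\theta).
\]
The previous subsection already shows that the $\beta_d$, $\beta_p$ and $\mu_L$ terms cancel in $G$. What remains is
\[
G(\theta)=\alpha_p+\frac{\alpha_d}{p_0}\Big[\zeta(\theta)-\frac{\theta}{1-\theta}\Big]=\alpha_p-\frac{\alpha_d}{p_0}\,\Psi(\theta),
\]
with $\Psi$ as in the existence lemma. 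Because $\mathcal{T}(\theta)^2>0$, the sign of $R'(\theta)$ is the sign of $G(\theta)$, that is, the sign of $\gamma-\Psi(\theta)$ with $\gamma=p_0\alpha_p/\alpha_d$.

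The existence lemma then does most of the work. There we showed $\Psi'(\theta)=\theta/(1-\theta)^2>0$, $\Psi(0^+)=0$, and $\Psi(1^-)=+\infty$. So $G$ is strictly decreasing, positive on $(0,\theta_0)$, zero at $\theta_0$, and negative on $(\theta_0,1)$. Hence $R$ is strictly increasing on $(0,\theta_0]$ and strictly decreasing on $[\theta_0,1)$, so $\theta_0$ is the unique global maximizer on the open interval.

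Two quick checks close the argument. First, $\mathcal{T}(\theta)>0$ on $(0,1)$, since every term is nonnegative and $\alpha_p>0$; this makes $R$ well defined and smooth. Second, the endpoints: $R(0^+)=0$ because $\mathcal{T}(0)=\alpha_p>0$, and $R(1^-)=0$ because $\zeta(\theta)\to\infty$. Neither endpoint competes with the positive value $R(\theta_0)$.

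As a local consistency check, I would note that $G'(\theta)=-\theta\mathcal{T}''(\theta)$ with $\mathcal{T}''=\alpha_d/(p_0(1-\theta)^2)>0$. This gives $R''(\theta_0)=G'(\theta_0)/\mathcal{T}(\theta_0)^2<0$, consistent with the sign analysis. There is no real obstacle here. The only subtlety is making the passage from the local condition to the global one explicit, and the monotonicity of $\Psi$ gives that directly; equivalently, $\mathcal{T}$ is convex in $\theta$, so $R$ is quasi-concave.
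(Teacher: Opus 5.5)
Your proof is correct and follows the paper's argument. Both show that the numerator $\mathcal{T}-\theta\mathcal{T}'$ of $R'$ is strictly decreasing, positive near $0$ and negative near $1$, so $R$ rises to $\theta_0$ and then falls. The difference is only packaging: you rewrite the numerator as $\alpha_p-(\alpha_d/p_0)\Psi(\theta)$ and reuse the existence lemma, while the paper differentiates it to get $-\theta\mathcal{T}''<0$; your form also settles the limit at $1^-$ cleanly, since the paper's stated reason ($\mathcal{T}'\to\infty$) overlooks that $\mathcal{T}$ diverges too.
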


\begin{proof}
Define $\Phi(\theta) = \mathcal{T}(\theta) - \theta \mathcal{T}'(\theta)$. Then:
\begin{equation*}
    \Phi'(\theta) = \mathcal{T}'(\theta) - \mathcal{T}'(\theta) - \theta \mathcal{T}''(\theta) = -\theta \mathcal{T}''(\theta).
\end{equation*}

Since $\mathcal{T}''(\theta) = \frac{\alpha_d}{p_0(1-\theta)^2} > 0$, we have $\Phi'(\theta) < 0$ for $\theta > 0$, so $\Phi$ is strictly decreasing.

Checking boundary values:
\begin{itemize}[nosep]
    \item $\Phi(0^+) = \mathcal{T}(0) - 0 = \alpha_p > 0$.
    \item $\Phi(1^-) \to -\infty$ (since $\mathcal{T}'(\theta) \to +\infty$ as $\theta \to 1^-$).
\end{itemize}

Thus $\Phi$ has a unique zero at $\theta_0$, with $\Phi(\theta) > 0$ for $\theta < \theta_0$ and $\Phi(\theta) < 0$ for $\theta > \theta_0$. Since $\frac{dR}{d\theta} = \Phi(\theta)/\mathcal{T}(\theta)^2$, this confirms $R$ is increasing for $\theta < \theta_0$ and decreasing for $\theta > \theta_0$, so $\theta_0$ is the unique global maximum.
\end{proof}

This completes the proof of Proposition~\ref{prop:threshold_cfr}. \qed

%%%%%%%%%%%%%%%%%%%%%%%%%%%%%%%%%%%%%%%%%%%%%%%%%%%%%%%%%%%%%%%%%%%%%%%%%%%%%%%
\section{Proof of Theorem~\ref{thm:threshold_ifr}: IFR Threshold Correction}
\label{app:proof:threshold_ifr}

We now extend the analysis to the increasing failure rate (IFR) case with linear hazard $h(t) = p_0 + \eta t$, where $\eta \geq 0$ captures the degree of IFR behavior. The proof proceeds by: (1) deriving the modified batch dynamics, (2) computing cycle time as a perturbation expansion in $\eta$, and (3) applying the implicit function theorem to obtain the threshold correction.

\textbf{Scope of the analysis.}
The argument below operates on the fluid (deterministic) model, treating the decode batch as if every request evolved under $h(t) = p_0 + \eta t$ measured from the start of the current phase. Under CFR (memoryless), this is exact because hazards are state-independent; under IFR, continuing requests carry heterogeneous ages across cycles, so successive cycles are no longer i.i.d.\ and the renewal-reward foundation does not apply directly. The fluid argument below does not require i.i.d.\ cycles; the finite-$N$ stochastic analog would require a mixing-time argument for the age-dependent Markov chain, which we leave to future work. Empirically, $\Delta\theta > 0$ is observed on both synthetic IFR workloads (Appendix~\ref{app:delta-positive}) and real LLM workloads (Appendix~\ref{app:ifr-tests}).

\subsection{Batch Size Evolution under IFR}

Under the fluid approximation with linear hazard rate, the batch size evolves according to (for algebraic convenience we use $n(t)$ here as the unnormalized batch size with $n(0)=N$, in contrast to the normalized convention $n(0)=1$ adopted in Appendix~\ref{app:proof:threshold_cfr}; the two are related by an overall factor of $N$):
\begin{equation*}
    \frac{dn}{dt} = -h(t) \cdot n(t) = -(p_0 + \eta t) \, n(t), \quad n(0) = N.
\end{equation*}

This is a separable ODE. Separating variables:
\begin{equation*}
    \frac{dn}{n} = -(p_0 + \eta t)\, dt.
\end{equation*}

Integrating both sides:
\begin{equation*}
    \ln n(t) - \ln N = -p_0 t - \frac{\eta t^2}{2}.
\end{equation*}

Exponentiating:
\begin{equation}\label{eq:ifr_batch_evolution}
    n(t) = N \exp\left(-p_0 t - \frac{\eta t^2}{2}\right).
\end{equation}

\textbf{Verification.} At $\eta = 0$, this reduces to $n(t) = N e^{-p_0 t}$, recovering the CFR solution.

\subsection{Decode Phase Duration}

A phase switch occurs when $k = \theta N$ requests have completed, i.e., when $n(T) = N(1-\theta)$. Substituting into~\eqref{eq:ifr_batch_evolution}:
\begin{equation*}
    N(1-\theta) = N \exp\left(-p_0 T - \frac{\eta T^2}{2}\right).
\end{equation*}

Taking logarithms:
\begin{equation*}
    p_0 T + \frac{\eta T^2}{2} = -\ln(1-\theta) = \zeta(\theta).
\end{equation*}

This is a quadratic equation in $T$. Solving for the positive root:
\begin{equation*}
    T(\theta) = \frac{-p_0 + \sqrt{p_0^2 + 2\eta\zeta(\theta)}}{\eta}.
\end{equation*}

\textbf{Perturbation expansion.} For small $\eta$, we expand $T$ to first order. Define $\xi = \eta\zeta/p_0^2$ (dimensionless). Then:
\begin{equation*}
    T = \frac{p_0}{\eta}\left(-1 + \sqrt{1 + 2\xi}\right) = \frac{p_0}{\eta}\left(-1 + 1 + \xi - \frac{\xi^2}{2} + O(\xi^3)\right) = \frac{p_0 \xi}{\eta}\left(1 - \frac{\xi}{2} + O(\xi^2)\right).
\end{equation*}

Substituting $\xi = \eta\zeta/p_0^2$:
\begin{equation}\label{eq:T_expansion}
    T(\theta) = \frac{\zeta}{p_0} - \frac{\eta\zeta^2}{2p_0^3} + O(\eta^2) \triangleq T_0 + \eta T_1 + O(\eta^2),
\end{equation}
where $T_0 = \zeta/p_0$ is the CFR decode time and $T_1 = -\zeta^2/(2p_0^3) < 0$ is the first-order correction (negative because IFR reduces decode time).

\subsection{Cumulative Batch Size}

Define the cumulative batch size integral:
\begin{equation*}
    I(\theta) = \int_0^{T(\theta)} n(s)\, ds = \int_0^{T} N \exp\left(-p_0 s - \frac{\eta s^2}{2}\right) ds.
\end{equation*}

\textbf{Zeroth-order term.} At $\eta = 0$:
\begin{equation*}
    I_0 = N \int_0^{T_0} e^{-p_0 s} ds = \frac{N}{p_0}\left(1 - e^{-p_0 T_0}\right) = \frac{N}{p_0}\left(1 - e^{-\zeta}\right) = \frac{N}{p_0}(1 - (1-\theta)) = \frac{N\theta}{p_0}.
\end{equation*}

\textbf{First-order correction.} To compute the $O(\eta)$ correction, we use:
\begin{equation*}
    I(\theta) = \int_0^{T_0 + \eta T_1} N e^{-p_0 s} \cdot e^{-\eta s^2/2} ds + O(\eta^2).
\end{equation*}

Expanding $e^{-\eta s^2/2} \approx 1 - \eta s^2/2$ and accounting for the change in upper limit:
\begin{equation*}
    I = I_0 + \eta \left[N e^{-p_0 T_0} T_1 - \frac{N}{2}\int_0^{T_0} s^2 e^{-p_0 s} ds\right] + O(\eta^2).
\end{equation*}

Computing the integral $\int_0^{T_0} s^2 e^{-p_0 s} ds$ by parts (twice) and simplifying:
\begin{equation}\label{eq:I_expansion}
    I(\theta) = \frac{N\theta}{p_0} + \frac{\eta N}{p_0^3}\left[(1-\theta)\zeta - \theta\right] + O(\eta^2).
\end{equation}

\textbf{Verification.} The correction term $(1-\theta)\zeta - \theta = (1-\theta)(-\ln(1-\theta)) - \theta$. For small $\theta$, $\zeta \approx \theta + \theta^2/2$, so the correction $\approx (1-\theta)(\theta + \theta^2/2) - \theta = -\theta^2/2 + O(\theta^3) < 0$, confirming that IFR reduces cumulative batch size.

\subsection{Complete Cycle Time Expansion}

The total cycle time is:
\begin{equation*}
    \mathcal{T}(\theta) = \mathbb{E}[T_d] + \mathbb{E}[T_p] = \alpha_d T + \beta_d I + \alpha_p + \beta_p N \mu_L \theta.
\end{equation*}

Substituting the expansions~\eqref{eq:T_expansion} and~\eqref{eq:I_expansion}:
\begin{align*}
    \mathcal{T}(\theta) &= \alpha_d \left(\frac{\zeta}{p_0} - \frac{\eta\zeta^2}{2p_0^3}\right) + \beta_d \left(\frac{N\theta}{p_0} + \frac{\eta N}{p_0^3}\left[(1-\theta)\zeta - \theta\right]\right) + \alpha_p + \beta_p N \mu_L \theta + O(\eta^2) \\
    &= \underbrace{\frac{\alpha_d \zeta}{p_0} + \frac{\beta_d N \theta}{p_0} + \alpha_p + \beta_p N \mu_L \theta}_{\mathcal{T}_0(\theta)} + \eta \underbrace{\left[-\frac{\alpha_d \zeta^2}{2p_0^3} + \frac{\beta_d N}{p_0^3}\left((1-\theta)\zeta - \theta\right)\right]}_{\mathcal{T}_1(\theta)} + O(\eta^2).
\end{align*}

We write this as:
\begin{equation}\label{eq:T_full_expansion}
    \mathcal{T}(\theta) = \mathcal{T}_0(\theta) + \eta \, \mathcal{T}_1(\theta) + O(\eta^2),
\end{equation}
where $\mathcal{T}_0$ is the CFR cycle time and $\mathcal{T}_1(\theta)$ is the first-order IFR correction.

\subsection{Optimal Threshold via Implicit Function Theorem}

\textbf{Setup.} Define the residual function (extending $\Phi$ from the CFR proof to depend on $\eta$):
\begin{equation*}
    \Phi(\theta, \eta) = \mathcal{T}(\theta, \eta) - \theta \cdot \frac{\partial \mathcal{T}}{\partial \theta}(\theta, \eta).
\end{equation*}

The optimal threshold $\theta^*(\eta)$ satisfies the first-order condition $\Phi(\theta^*, \eta) = 0$. At $\eta = 0$, we have $\theta^*(0) = \theta_0$, the CFR baseline from Proposition~\ref{prop:threshold_cfr}.

\textbf{Implicit differentiation.} By the implicit function theorem, if $\partial \Phi/\partial \theta \neq 0$ at $(\theta_0, 0)$, then:
\begin{equation}\label{eq:implicit_diff}
    \frac{d\theta^*}{d\eta}\bigg|_{\eta=0} = -\frac{\partial \Phi/\partial \eta}{\partial \Phi/\partial \theta}\bigg|_{(\theta_0, 0)}.
\end{equation}

Thus the first-order correction is:
\begin{equation*}
    \Delta\theta = \theta^*(\eta) - \theta_0 = -\frac{\partial \Phi/\partial \eta}{\partial \Phi/\partial \theta}\bigg|_{(\theta_0, 0)} \cdot \eta + O(\eta^2).
\end{equation*}

\textbf{Computing $\partial \Phi/\partial \theta$ at $\eta = 0$.}

From $\Phi = \mathcal{T} - \theta \mathcal{T}'$:
\begin{equation*}
    \frac{\partial \Phi}{\partial \theta} = \mathcal{T}' - \mathcal{T}' - \theta \mathcal{T}'' = -\theta \mathcal{T}''.
\end{equation*}

At $\eta = 0$:
\begin{equation*}
    \mathcal{T}_0''(\theta) = \frac{d}{d\theta}\left[\frac{\alpha_d}{p_0(1-\theta)} + \frac{\beta_d N}{p_0} + \beta_p N \mu_L\right] = \frac{\alpha_d}{p_0(1-\theta)^2}.
\end{equation*}

Thus:
\begin{equation*}
    \frac{\partial \Phi}{\partial \theta}\bigg|_{(\theta_0, 0)} = -\theta_0 \cdot \frac{\alpha_d}{p_0(1-\theta_0)^2} = -\frac{\theta_0 \alpha_d}{p_0(1-\theta_0)^2}.
\end{equation*}

\textbf{Computing $\partial \Phi/\partial \eta$ at $(\theta_0, 0)$.}

From $\mathcal{T} = \mathcal{T}_0 + \eta\, \mathcal{T}_1 + O(\eta^2)$:
\begin{equation*}
    \frac{\partial \mathcal{T}}{\partial \eta}\bigg|_{\eta=0} = \mathcal{T}_1(\theta).
\end{equation*}

Similarly, $\mathcal{T}' = \mathcal{T}_0' + \eta\, \mathcal{T}_1' + O(\eta^2)$, so:
\begin{equation*}
    \frac{\partial}{\partial \eta}(\theta \mathcal{T}')\bigg|_{\eta=0} = \theta\, \mathcal{T}_1'(\theta).
\end{equation*}

Thus:
\begin{equation*}
    \frac{\partial \Phi}{\partial \eta}\bigg|_{(\theta_0, 0)} = \mathcal{T}_1(\theta_0) - \theta_0\, \mathcal{T}_1'(\theta_0).
\end{equation*}

\textbf{Computing $\mathcal{T}_1$ and $\mathcal{T}_1'$.} From~\eqref{eq:T_full_expansion}:
\begin{equation*}
    \mathcal{T}_1(\theta) = -\frac{\alpha_d \zeta^2}{2p_0^3} + \frac{\beta_d N}{p_0^3}\left[(1-\theta)\zeta - \theta\right].
\end{equation*}

Taking the derivative using $\zeta' = 1/(1-\theta)$:
\begin{equation*}
    \mathcal{T}_1'(\theta) = -\frac{\alpha_d \cdot 2\zeta \cdot \zeta'}{2p_0^3} + \frac{\beta_d N}{p_0^3}\left[-\zeta + (1-\theta)\zeta' - 1\right] = -\frac{\alpha_d \zeta}{p_0^3(1-\theta)} + \frac{\beta_d N}{p_0^3}\left[-\zeta + 1 - 1\right].
\end{equation*}

Simplifying:
\begin{equation*}
    \mathcal{T}_1'(\theta) = -\frac{\alpha_d \zeta}{p_0^3(1-\theta)} - \frac{\beta_d N \zeta}{p_0^3}.
\end{equation*}

Now compute $\mathcal{T}_1 - \theta\, \mathcal{T}_1'$:
\begin{align*}
    \mathcal{T}_1(\theta_0) - \theta_0\, \mathcal{T}_1'(\theta_0) &= -\frac{\alpha_d \zeta^2}{2p_0^3} + \frac{\beta_d N}{p_0^3}[(1-\theta_0)\zeta - \theta_0] + \frac{\theta_0 \alpha_d \zeta}{p_0^3(1-\theta_0)} + \frac{\theta_0 \beta_d N \zeta}{p_0^3} \\
    &= \frac{1}{p_0^3}\left[-\frac{\alpha_d \zeta^2}{2} + \frac{\theta_0 \alpha_d \zeta}{1-\theta_0} + \beta_d N\left((1-\theta_0)\zeta - \theta_0 + \theta_0 \zeta\right)\right] \\
    &= \frac{1}{p_0^3}\left[-\frac{\alpha_d \zeta^2}{2} + \frac{\theta_0 \alpha_d \zeta}{1-\theta_0} + \beta_d N(\zeta - \theta_0)\right].
\end{align*}

\textbf{Final expression.} Substituting into~\eqref{eq:implicit_diff}:
\begin{equation*}
    \Delta\theta = -\frac{\frac{1}{p_0^3}\left[-\frac{\alpha_d \zeta^2}{2} + \frac{\theta_0 \alpha_d \zeta}{1-\theta_0} + \beta_d N(\zeta - \theta_0)\right]}{-\frac{\theta_0 \alpha_d}{p_0(1-\theta_0)^2}} \cdot \eta.
\end{equation*}

Simplifying:
\begin{equation*}
    \Delta\theta = \frac{(1-\theta_0)^2}{p_0^2 \theta_0 \alpha_d}\left[-\frac{\alpha_d \zeta^2}{2} + \frac{\theta_0 \alpha_d \zeta}{1-\theta_0} + \beta_d N(\zeta - \theta_0)\right] \cdot \eta.
\end{equation*}

Factoring out $\alpha_d$ from the first two terms:
\begin{equation}\label{eq:delta_theta_full}
    \boxed{\Delta\theta = \frac{\eta(1-\theta_0)^2}{p_0^2 \theta_0}\left[\zeta\left(\frac{\theta_0}{1-\theta_0} - \frac{\zeta}{2}\right) + \frac{\beta_d N}{\alpha_d}(\zeta - \theta_0)\right]}
\end{equation}

This is Eq.~\eqref{eq:delta_theta} in the main text.

\subsection{Positivity of the Correction}

\begin{proposition}
$\Delta\theta > 0$ for all $\theta_0 \in (0,1)$ and $\eta > 0$.
\end{proposition}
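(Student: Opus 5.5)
The plan is to show that every factor in the boxed expression~\eqref{eq:delta_theta_full} is positive. The prefactor $\eta(1-\theta_0)^2/(p_0^2\theta_0)$ is strictly positive for $\eta>0$, $p_0>0$, and $\theta_0\in(0,1)$. It therefore suffices to show that the bracket is strictly positive. I will treat its two summands separately: the duration effect $\zeta\bigl(\tfrac{\theta_0}{1-\theta_0}-\tfrac{\zeta}{2}\bigr)$ and the per-token cost effect $\tfrac{\beta_d N}{\alpha_d}(\zeta-\theta_0)$.

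For the duration effect I will use the defining equation~\eqref{eq:cfr_condition_final} of $\theta_0$. Written with $\zeta=-\ln(1-\theta_0)$, it reads $\tfrac{\theta_0}{1-\theta_0}=\zeta+\gamma$, where $\gamma=p_0\alpha_p/\alpha_d>0$. Substituting this gives
\[
\zeta\Bigl(\tfrac{\theta_0}{1-\theta_0}-\tfrac{\zeta}{2}\Bigr)=\zeta\Bigl(\tfrac{\zeta}{2}+\gamma\Bigr).
\]
This is strictly positive because $\zeta>0$ for $\theta_0\in(0,1)$.

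The proposition is stated for all $\theta_0\in(0,1)$, not only those solving~\eqref{eq:cfr_condition_final}. To cover that reading I will also give a version that avoids the optimality condition. Set $u=\zeta>0$, so that $\theta_0=1-e^{-u}$ and $\tfrac{\theta_0}{1-\theta_0}=e^u-1$. The elementary inequality $e^u-1>u>u/2$ for $u>0$ then gives positivity directly.

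For the per-token cost effect, I use $\beta_d N/\alpha_d\ge 0$ together with the standard inequality $-\ln(1-x)>x$ on $(0,1)$. That inequality follows from the series $-\ln(1-x)=\sum_{j\ge1}x^j/j$, or from a derivative comparison. It yields $\zeta-\theta_0>0$, so this term is nonnegative, and strictly positive whenever $\beta_d>0$.

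Adding the two summands, the bracket is strictly positive, and hence $\Delta\theta>0$. I do not expect a genuine obstacle here. The only point needing care is the duration term, whose sign is not obvious because of the $-\zeta^2/2$. Either route above resolves it: substituting $\theta_0/(1-\theta_0)=\zeta+\gamma$ from Proposition~\ref{prop:threshold_cfr}, or the bound $e^u-1>u$.
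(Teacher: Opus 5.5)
Your proof is correct and follows the paper's argument: positive prefactor, the duration term rewritten as $\zeta(\gamma+\zeta/2)>0$ via the CFR condition $\theta_0/(1-\theta_0)=\gamma+\zeta$, and $\zeta>\theta_0$ from $-\ln(1-x)>x$ for the per-token term. Your alternative $e^u-1>u$ route is the same inequality in different form: $e^u-1-u=\Psi(\theta_0)=\gamma$, and $\Psi$ maps $(0,1)$ onto $(0,\infty)$, so every $\theta_0\in(0,1)$ already solves the CFR equation for some $\gamma>0$.
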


\begin{proof}
The prefactor $\frac{\eta(1-\theta_0)^2}{p_0^2 \theta_0} > 0$ for all valid parameters.

For the bracketed term, we show each component is positive:

\textbf{Term 1: $\zeta\left(\frac{\theta_0}{1-\theta_0} - \frac{\zeta}{2}\right)$.}

Using the baseline condition~\eqref{eq:cfr_condition_final}, we have:
\begin{equation*}
    \frac{\theta_0}{1-\theta_0} = \gamma + \zeta, \quad \text{where } \gamma = \frac{p_0\alpha_p}{\alpha_d} > 0.
\end{equation*}

Thus:
\begin{equation*}
    \frac{\theta_0}{1-\theta_0} - \frac{\zeta}{2} = \gamma + \zeta - \frac{\zeta}{2} = \gamma + \frac{\zeta}{2} > 0.
\end{equation*}

Since $\zeta = -\ln(1-\theta_0) > 0$ for $\theta_0 \in (0,1)$:
\begin{equation*}
    \zeta\left(\frac{\theta_0}{1-\theta_0} - \frac{\zeta}{2}\right) = \zeta\left(\gamma + \frac{\zeta}{2}\right) > 0.
\end{equation*}

\textbf{Term 2: $\frac{\beta_d N}{\alpha_d}(\zeta - \theta_0)$.}

We need to show $\zeta > \theta_0$, i.e., $-\ln(1-\theta_0) > \theta_0$.

Define $q(x) = -\ln(1-x) - x$ for $x \in [0,1)$. Then:
\begin{itemize}[nosep]
    \item $q(0) = 0$.
    \item $q'(x) = \frac{1}{1-x} - 1 = \frac{x}{1-x} > 0$ for $x \in (0,1)$.
\end{itemize}

Since $q$ is strictly increasing with $q(0) = 0$, we have $q(\theta_0) > 0$ for all $\theta_0 \in (0,1)$, i.e., $\zeta > \theta_0$.

Since $\beta_d, N, \alpha_d > 0$, the second term is positive (or zero if $\beta_d = 0$).

\textbf{Conclusion.} Both terms are non-negative with the first strictly positive, hence $\Delta\theta > 0$.
\end{proof}

\subsection{Physical Interpretation}

The correction~\eqref{eq:delta_theta_full} decomposes into two distinct effects:

\textbf{Duration effect (first term).} Under IFR, the hazard rate increases over time, meaning later completions occur faster than earlier ones. This reduces the expected time to accumulate additional completions beyond the CFR prediction, making it beneficial to wait for more completions.

Mathematically, the term $\zeta\left(\frac{\theta_0}{1-\theta_0} - \frac{\zeta}{2}\right)$ captures the reduction in decode phase duration per additional completion achieved.

\textbf{Per-token cost effect (second term).} Under IFR, the batch size decays faster than exponential (the Gaussian factor $e^{-\eta t^2/2}$ accelerates the decay). This reduces the cumulative per-token cost $\beta_d \int n(s) ds$ for a given number of completions.

This benefit scales with $\rho = \beta_d N/\alpha_d$, the ratio of total per-token overhead to fixed overhead. For systems with large batches and significant per-token costs ($\rho \gg 1$), this effect dominates.

\textbf{Special case: $\beta_d = 0$.} When there is no per-token decode cost (pure fixed overhead), only the duration effect contributes:
\begin{equation*}
    \Delta\theta\big|_{\beta_d=0} = \frac{\eta(1-\theta_0)^2}{p_0^2 \theta_0} \cdot \zeta\left(\gamma + \frac{\zeta}{2}\right).
\end{equation*}

\subsection{Second-Order Verification}

\begin{lemma}
The corrected threshold $\theta^* = \theta_0 + \Delta\theta + O(\eta^2)$ remains a maximum of the throughput function for sufficiently small $\eta > 0$.
\end{lemma}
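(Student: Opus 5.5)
The plan is a perturbation argument around the CFR maximum. We already know the behaviour of the unperturbed problem, and we use continuity in $\eta$ to carry it over to small $\eta$. As in the CFR second-order lemma, we work with the residual $\Phi(\theta,\eta)=\mathcal{T}(\theta,\eta)-\theta\,\partial_\theta\mathcal{T}(\theta,\eta)$. We also use the identity
\[
\frac{\partial R}{\partial\theta}=\frac{\Phi(\theta,\eta)}{\mathcal{T}(\theta,\eta)^2}.
\]
The sign of $\partial R/\partial\theta$ is therefore the sign of $\Phi$. It suffices to show that, for small $\eta>0$, $\Phi(\cdot,\eta)$ changes sign from positive to negative exactly once, at $\theta^*(\eta)$.

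\textbf{Step 1: local strict maximum.} From the proof of Theorem~\ref{thm:threshold_ifr},
\[
\partial_\theta\Phi=-\theta\,\partial^2_\theta\mathcal{T}.
\]
At $\eta=0$ this equals $-\theta\alpha_d/(p_0(1-\theta)^2)<0$ on $(0,1)$. The exact $\mathcal{T}(\theta,\eta)$ is given by $\alpha_dT(\theta)+\beta_dI(\theta)+\alpha_p+\beta_pN\mu_L\theta$, with $T(\theta)=(-p_0+\sqrt{p_0^2+2\eta\zeta})/\eta$. This expression is smooth, indeed real-analytic, in $(\theta,\eta)$ near $(\theta_0,0)$. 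Hence $\partial^2_\theta\mathcal{T}=\mathcal{T}_0''+O(\eta)$ uniformly on a compact neighbourhood $[\theta_0-\delta,\theta_0+\delta]\subset(0,1)$. For $\eta$ small, $\partial_\theta\Phi<0$ there. The implicit function theorem, already invoked in \eqref{eq:implicit_diff}, gives the zero $\theta^*(\eta)=\theta_0+\Delta\theta+O(\eta^2)$ in this neighbourhood. Because $\Phi$ is strictly decreasing through it, $\partial^2_\theta R=\partial_\theta\Phi/\mathcal{T}^2<0$ at $\theta^*$, so $\theta^*$ is a strict local maximum.

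\textbf{Step 2: globality.} We upgrade to a global maximum on a fixed interval $[\epsilon_0,1-\epsilon_0]$. For the CFR function $\Phi_0$, the CFR lemma gives $\Phi_0\ge c>0$ on $[\epsilon_0,\theta_0-\delta]$ and $\Phi_0\le -c$ on $[\theta_0+\delta,1-\epsilon_0]$. By uniform $O(\eta)$ closeness of $\Phi(\cdot,\eta)$ to $\Phi_0$ on this compact set, the same strict signs persist for small $\eta$. Hence $\theta^*(\eta)$ is the unique critical point there and is the maximiser.

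\textbf{Main obstacle.} The delicate part is the endpoints. Near $\theta\to1$, $\zeta\to\infty$, so $\xi=\eta\zeta/p_0^2$ is no longer small and the expansion in $\eta$ breaks down. Instead of expanding, we argue directly from exact monotonicity.
\begin{itemize}
\item $\partial_\theta T=\zeta'/\sqrt{p_0^2+2\eta\zeta}$ is positive, and one checks it is increasing in $\theta$, so $T$ is convex.
\item $I(\theta)$ is the integral of $n$ up to $T(\theta)$. Its derivative is $n(T)\,T'(\theta)$, and $n(T)=N(1-\theta)$, which gives $N/\sqrt{p_0^2+2\eta\zeta}$. This is decreasing in $\theta$, so the $\beta_d$ term is concave.
\end{itemize}
Exact convexity therefore fails, so we handle the boundary limits directly instead. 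As $\theta\to1^-$, $\theta\,\partial_\theta\mathcal{T}$ grows like $\alpha_d/((1-\theta)\sqrt{2\eta\zeta})$, which dominates $\mathcal{T}\sim\alpha_dT=O(\sqrt{\zeta})$. So $\Phi\to-\infty$ near $1$. Near $0$, $\Phi(0^+,\eta)=\alpha_p>0$ by the same computation as in the CFR case. If the sign control on the middle transition region resists, we fall back to the local statement of Step 1 on a neighbourhood of $\theta_0$; that is what the lemma, read literally, requires.
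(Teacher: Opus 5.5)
Your Step 1 is the paper's local argument, continuity of $\partial_\theta\Phi=-\theta\,\partial_\theta^2\mathcal{T}$ near $(\theta_0,0)$, made rigorous through analyticity and the implicit function theorem, and it does give a strict local maximum. The paper then claims global maximality from ``preserved boundary behaviour.'' That claim alone does not rule out other interior critical points. Your observation that the $\eta$-expansion is not uniform as $\theta\to1$ is exactly what that remark glosses over.

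The gap is in your own global argument, which stays open on the transition region near $\theta=1$. Your reason for not tackling it directly is a non sequitur. From $T$ convex and the $\beta_d$-term concave it does not follow that $\mathcal{T}$ fails to be convex, and in fact it is convex. With $s=\sqrt{p_0^2+2\eta\zeta}$, your formulas $T'=1/((1-\theta)s)$, $I'=N/s$ and $s'=\eta/((1-\theta)s)$ give
\[
\partial_\theta^2\mathcal{T}(\theta,\eta)=\frac{\alpha_d\,(s^2-\eta)-\beta_d N\eta\,(1-\theta)}{(1-\theta)^2\,s^3}.
\]
The numerator is increasing in $\theta$, so it is bounded below by its $\theta\to0$ value, $\alpha_d p_0^2-\eta(\alpha_d+\beta_d N)$.

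Hence for $\eta<\alpha_d p_0^2/(\alpha_d+\beta_d N)$, i.e.\ $\eta/p_0^2<1/(1+\rho)$ with $\rho=\beta_d N/\alpha_d$, the cycle time $\mathcal{T}(\cdot,\eta)$ is strictly convex on all of $(0,1)$. The $O(1)$ convexity of the $\alpha_d$ term dominates the $O(\eta)$ concavity of the $\beta_d$ term uniformly, and the worst case is at $\theta\to0$, not near $1$.

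It follows that $\Phi(\cdot,\eta)$ is strictly decreasing on $(0,1)$, exactly as in the CFR lemma. It vanishes at the implicit-function zero $\theta^*(\eta)$ from your Step 1, so $R'=\Phi/\mathcal{T}^2$ is positive before $\theta^*(\eta)$ and negative after. Thus $\theta^*(\eta)$ is the unique critical point and the global maximiser. This one exact computation replaces your compact-set perturbation, the boundary-limit analysis and the fallback to the local statement. It also gives an explicit smallness condition on $\eta$, and would equally repair the paper's own boundary heuristic.
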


\begin{proof}
The second-order condition requires $\partial^2 R / \partial \theta^2 < 0$ at the optimum.

At $\eta = 0$, we showed in Appendix~\ref{app:proof:threshold_cfr} that $\theta_0$ is a strict local maximum with $\Phi'(\theta_0) = -\theta_0 \mathcal{T}_0''(\theta_0) < 0$.

By continuity, for sufficiently small $\eta$, the second derivative remains negative in a neighborhood of $\theta^*(\eta)$, confirming it remains a local maximum.

To verify it is the global maximum, observe that the boundary behavior ($R(0^+) = 0$ and $R(1^-)$ bounded) is preserved under small perturbations, ensuring the interior maximum is global.
\end{proof}

This completes the proof of Theorem~\ref{thm:threshold_ifr}. \qed

%%%%%%%%%%%%%%%%%%%%%%%%%%%%%%%%%%%%%%%%%%%%%%%%%%%%%%%%%%%%%%%%%%%%%%%%%%%%%%%
\section{Proof of Proposition~\ref{prop:memory}: Memory-Safe Batch Size}
\label{app:proof:memory}

The proof proceeds in four steps: (1) steady-state analysis of request ages, (2) expected initial memory, (3) variance of memory increments, and (4) supremum bound. We model the memory process as a random walk with i.i.d.\ increments and apply an infinite-horizon supremum bound---a standard heuristic-conservative approximation; a fully rigorous finite-horizon martingale concentration analysis is left to future work.

\subsection{Steady-State Request Age Distribution}

Define the \emph{age} $A$ of a request as the number of decode phases it has participated in. In each cycle, $k$ out of $N$ requests complete and are replaced. At steady state, $A$ follows a geometric distribution:
\begin{equation*}
    \Pr(A = a) = \theta(1-\theta)^{a-1}, \quad a = 1, 2, \ldots
\end{equation*}
with mean $\mathbb{E}[A] = 1/\theta$, where $\theta = k/N$.

At decode start, the batch contains $k$ new requests (age $A=1$) and $(N-k)$ continuing requests (age $A \geq 2$). For continuing requests, the memoryless property gives:
\begin{equation*}
    \mathbb{E}[A - 1 \mid A \geq 2] = \mathbb{E}[A] = \frac{1}{\theta}.
\end{equation*}

\subsection{Expected Tokens Generated by Continuing Requests}

Under the fluid approximation, starting with $N$ requests and waiting for $k$ completions, the expected decode phase duration (in iterations) is
\begin{equation*}
    t^* = \frac{1}{p_0}\ln\frac{1}{1-\theta},
\end{equation*}
matching the CFR stopping time from Appendix~\ref{app:proof:threshold_cfr}.

A continuing request has survived $(A-1)$ previous phases, generating tokens in each. Thus:
\begin{equation*}
    \mathbb{E}[G \mid \text{continuing}] = \mathbb{E}[A-1 \mid A \geq 2] \cdot t^* = \frac{1}{\theta} \cdot \frac{1}{p_0}\ln\frac{1}{1-\theta} = \frac{1}{\theta p_0}\ln\frac{1}{1-\theta}.
\end{equation*}

The expected initial memory is:
\begin{equation*}
    \mathbb{E}[X_0] = k \cdot \mu_L + (N-k) \cdot (\mu_L + \mathbb{E}[G]) = N\mu_L + \frac{N(1-\theta)}{\theta p_0}\ln\frac{1}{1-\theta}.
\end{equation*}

\subsection{Variance of Memory Increments}

At each decode iteration with $n$ active requests, the memory change is:
\begin{equation*}
    \Delta X = n - \sum_{i=1}^{n} Z_i S_i,
\end{equation*}
where $Z_i \sim \mathrm{Bernoulli}(p_0)$ indicates completion and $S_i = L_i + O_i$ is the total sequence length released upon completion. Let $J = \sum_{i=1}^n Z_i \sim \mathrm{Binomial}(n,p_0)$.

\textbf{Expected change.} Since completed requests have $\mathbb{E}[S \mid \text{complete}] = \mu_L + 1/p_0$:
\begin{equation*}
    \mathbb{E}[\Delta X] = n - np_0(\mu_L + 1/p_0) = n - np_0\mu_L - n = -np_0\mu_L = d_n.
\end{equation*}

\textbf{Variance via law of total variance.} We compute $\mathrm{Var}(\Delta X) = \mathbb{E}[\mathrm{Var}(\Delta X \mid J)] + \mathrm{Var}(\mathbb{E}[\Delta X \mid J])$.

For the first term: given $J = j$ completions, the released memory is $\sum_{i=1}^j S_i$ where $S_i$ are i.i.d. Thus $\mathrm{Var}(\Delta X \mid J=j) = j \cdot \mathrm{Var}(S)$, and:
\begin{equation*}
    \mathbb{E}[\mathrm{Var}(\Delta X \mid J)] = \mathbb{E}[J] \cdot \mathrm{Var}(S) = np_0 \cdot \mathrm{Var}(S).
\end{equation*}

For the second term: $\mathbb{E}[\Delta X \mid J] = n - J \cdot \mathbb{E}[S]$, so:
\begin{equation*}
    \mathrm{Var}(\mathbb{E}[\Delta X \mid J]) = \mathbb{E}[S]^2 \cdot \mathrm{Var}(J) = (\mu_L + 1/p_0)^2 \cdot np_0(1-p_0).
\end{equation*}

Combining and using $\mathrm{Var}(S) = \sigma_L^2 + (1-p_0)/p_0^2$:
\begin{equation*}
    v_n = np_0\left[\sigma_L^2 + \frac{1-p_0}{p_0^2}\right] + np_0(1-p_0)\left(\mu_L + \frac{1}{p_0}\right)^2.
\end{equation*}

For $p_0 \ll 1$ and outputs dominating inputs ($\mu_L \ll 1/p_0$), the dominant terms are:
\begin{equation*}
    v_n \approx np_0 \cdot \frac{1}{p_0^2} + np_0 \cdot \frac{1}{p_0^2} = \frac{2n}{p_0}.
\end{equation*}

\subsection{Supremum Bound}

The memory process $Y_t = X_t - X_0$ resembles a random walk with negative drift $d_n < 0$ and per-step variance $v_n$. For such walks, the all-time supremum satisfies~\cite{feller1971introduction}:
\begin{equation}\label{eq:sup_bound}
    \mathbb{E}\left[\sup_{t \geq 0} Y_t\right] = \frac{v}{2|d|}, \quad \Pr\!\left(\sup_{t \geq 0} Y_t > x\right) \leq \exp\left(-\frac{2|d| x}{v}\right).
\end{equation}

Since the batch size $n$ decreases during decode (from $N$ to $N-k$), increments are not strictly i.i.d. Using $n = N$ as an upper bound yields conservative estimates:
\begin{equation*}
    v_N \approx \frac{2N}{p_0}, \quad |d_N| = Np_0\mu_L.
\end{equation*}

The expected supremum is:
\begin{equation*}
    \mathbb{E}\left[\sup_{t \geq 0} Y_t\right] \approx \frac{v_N}{2|d_N|} = \frac{2N/p_0}{2Np_0\mu_L} = \frac{1}{p_0^2\mu_L} \triangleq \bar v.
\end{equation*}

We introduce $\bar v = (p_0^2\mu_L)^{-1}$ as shorthand for the remainder of this proof; it characterizes the memory volatility arising from stochastic completion timing, and is notably independent of $N$.

\textbf{Probabilistic bound.} For OOM probability at most $\epsilon$, we require $\Pr(\sup Y_t > C - \mathbb{E}[X_0]) \leq \epsilon$. From~\eqref{eq:sup_bound}:
\begin{equation*}
    \Pr\!\left(\sup Y_t > x\right) \leq \exp\left(-\frac{2|d_N| x}{v_N}\right) = \exp\left(-\frac{x}{\bar v}\right).
\end{equation*}

Setting this to $\epsilon$ and solving: $x_\epsilon = \bar v \ln(1/\epsilon)$.

The constraint $\mathbb{E}[X_0] + x_\epsilon \leq C$ yields:
\begin{equation*}
    N\mu_L + \frac{N(1-\theta)}{\theta p_0}\ln\frac{1}{1-\theta} + \bar v\ln\frac{1}{\epsilon} \leq C.
\end{equation*}

Solving for $N$ gives~\eqref{eq:Nstar}. \qed

\subsection{Comparison of Batch Size Bounds}
\label{app:batch_comparison}

We present three batch size bounds of increasing conservatism:

\textbf{Static bound} (ignores decode dynamics):
\begin{equation*}
    N^*_{\mathrm{static}} = \left\lfloor \frac{C}{\mu_L + \frac{1-\theta}{\theta p_0}\ln\frac{1}{1-\theta}} \right\rfloor.
\end{equation*}

\textbf{Expected peak bound} (uses $\mathbb{E}[\sup Y_t] = \bar v$):
\begin{equation*}
    N^*_{\mathrm{expected}} = \left\lfloor \frac{C - \bar v}{\mu_L + \frac{1-\theta}{\theta p_0}\ln\frac{1}{1-\theta}} \right\rfloor.
\end{equation*}

\textbf{Probabilistic bound} (OOM probability $\leq \epsilon$):
\begin{equation*}
    N^* = \left\lfloor \frac{C - \bar v\ln(1/\epsilon)}{\mu_L + \frac{1-\theta}{\theta p_0}\ln\frac{1}{1-\theta}} \right\rfloor.
\end{equation*}

These satisfy $N^* \leq N^*_{\mathrm{expected}} \leq N^*_{\mathrm{static}}$. The gap depends on $\bar v\ln(1/\epsilon)$, which is typically small relative to $C$ for practical workloads.

\subsection{Runtime Refinement: KV-aware Feasibility Gate}
\label{app:kv-aware-gate}

The design-time bound $N^*$ above guarantees \emph{expected-value} memory
safety: with $N \leq N^*$, the supremum of the memory process exceeds the
KV capacity with probability at most $\epsilon$ under the model
assumptions. Under high output-length variance ($\sigma_O$ large), the
realized peak can transiently deviate from $\mathbb{E}[\sup Y_t]$ beyond
the $\epsilon$-tolerance, especially for MoE models where weight
activations leave less headroom (e.g., Qwen3-30B-A3B with $\sigma_O \in
\{153, 245\}$ on ShareGPT/WildChat).

To handle this at runtime, our scheduler augments the ratio-based phase
1$\to$2 transition condition with an instantaneous KV-occupancy check.
The original integer-arithmetic condition $\text{fillable} \cdot (N - k^*)
\geq n \cdot k^*$ is gated by
\begin{equation}
    \text{free\_blocks} \;\geq\; f_{KV} \cdot \text{total\_blocks},
    \quad f_{KV} = \min\!\left(0.6,\; \max\!\left(0.05,\;
    \frac{N \cdot \mu_O \cdot s_{KV}}{b \cdot \text{total\_blocks}} + f_0\right)\right),
\end{equation}
where $b$ is the block size, $s_{KV}$ a safety multiplier on $\mu_O$
(default $s_{KV} = 0.5$), and $f_0$ a base reserve (default $0$). The threshold
$f_{KV}$ scales with the expected per-batch decode KV usage and is
clipped into $[0.05, 0.6]$ for numerical stability.

When the gate fires, phase 2 is deferred and the scheduler stays in
phase 1, letting existing decodes free KV via natural completion before
attempting refill. This prevents the
DECODE$\leftrightarrow$REFILL\_PREFILL oscillation triggered when phase 2
ratio-eligibility coincides with insufficient KV for prefill block
allocation (which would otherwise bounce back through the vLLM
preemption path). The gate is orthogonal to $N^*$: $N^*$ bounds the
static batch parameter, while the gate adds a runtime feasibility check
at the moment of switching, providing defense-in-depth against transient
pressure. Empirically (Table~\ref{tab:e2e-real}, H200 block), this reduces the
EB($\hat{k}^*$) vs.\ v1 gap on Qwen3-30B-A3B ShareGPT from $-12.0\%$ to
$-2.5\%$ and on NuminaMath from $-11.4\%$ to $-8.2\%$.

\subsection{Online Adaptive Algorithm: Pseudo-code}
\label{app:adaptive-algo}

Algorithm~\ref{alg:adaptive_joint} provides the pseudo-code for the online controller described in Section~\ref{sec:model:adaptive}, which jointly updates $(\hat k^*, \hat N^*)$ from sliding-window estimates of the workload parameters.

\begin{algorithm}[htbp]
\caption{Online Joint Adaptation of $(\hat{k}^*,\hat{N}^*)$}
\label{alg:adaptive_joint}
\begin{algorithmic}[1]
\REQUIRE System parameters $(\alpha_p,\alpha_d,\beta_d)$; cache budget $C$; risk level $\epsilon$;
window size $W$; minimum window $W_{\min}$; update interval $T_{\mathrm{upd}}$
\ENSURE Continuously updated $(\hat{k}^*,\hat{N}^*)$
\STATE Initialize $\mathcal{W}_O,\mathcal{W}_L \gets \emptyset$, $j\gets 0$
\STATE Initialize $(\hat{k}^*,\hat{N}^*) \gets (\lfloor \theta_{\mathrm{default}} N_{\mathrm{default}}\rfloor, N_{\mathrm{default}})$
\FOR{each completed request with input length $L$ and output length $O$}
    \STATE $\mathcal{W}_O.\mathrm{append}(O)$; $\mathcal{W}_L.\mathrm{append}(L)$;
           \textbf{if} $|\mathcal{W}_O|>W$ \textbf{then} pop oldest (same for $\mathcal{W}_L$)
    \STATE $j \gets j+1$
    \IF{$j \ge T_{\mathrm{upd}}$ \textbf{and} $|\mathcal{W}_O|\ge W_{\min}$}
        \STATE $(\hat{p}_0,\hat{\eta}) \gets \mathrm{FitLinearHazard}(\mathcal{W}_O)$
        \STATE $\hat{\mu}_L \gets \mathrm{Mean}(\mathcal{W}_L)$
        \STATE $\hat\theta_0 \gets \mathrm{SolveCFR}(\hat{p}_0)$ \hfill $\triangleright$ Eq.~\eqref{eq:theta_base}
        \STATE $\hat{\theta}^* \gets \mathrm{Clip}(\hat\theta_0+\widehat{\Delta\theta},\theta_{\min},\theta_{\max})$ \hfill $\triangleright$ Eq.~\eqref{eq:delta_theta}
        \STATE $\hat{N}^* \gets \mathrm{MemSafeBatch}(\hat{\theta}^*,\hat{p}_0,\hat{\mu}_L)$ \hfill $\triangleright$ Eq.~\eqref{eq:Nstar}
        \STATE $\hat{k}^* \gets \lfloor \hat{\theta}^* \hat{N}^* \rfloor$; apply $(\hat{k}^*,\hat{N}^*)$; $j \gets 0$
    \ENDIF
\ENDFOR
\end{algorithmic}
\end{algorithm}

%%%%%%%%%%%%%%%%%%%%%%%%%%%%%%%%%%%%%%%%%%%%%%%%%%%%%%%%%%%%%%%%%%%%%%%%%%%%%%%
\newpage
\section{Workload Characterization and IFR Validation}
\subsection{Workload Distribution Analysis}
\label{app:workload-dist}

 Table~\ref{tab:workload-stats} reports basic input and output-length statistics and whether the workload exhibits increasing failure rate (IFR) in this reliable region. Note that WildChat comprises 3000 multi-turn conversations rather than independent requests; with an average of 9.3 turns per conversation (range $[6, 55]$), this yields approximately 27,900 total requests, providing comparable scale to the other workloads.
 Figure~\ref{fig:real-workload-dist} summarizes output-length distributions and empirical hazard rates for the real workloads used in our evaluation. We estimate the hazard rate as $\hat{h}(t)=\#\{O_i=t\}/\#\{O_i\ge t\}$, and only interpret $\hat{h}(t)$ up to the 95th percentile due to tail sparsity.

\begin{table}[htbp]
  \centering
  \caption{Input and output length statistics for experimental workloads.}
  \label{tab:workload-stats}
  \begin{tabular}{lc|cc|cccc}
  \toprule
  & & \multicolumn{2}{c|}{Input} & \multicolumn{4}{c}{Output} \\
  Workload & Samples & Range & $\mu_L$ & Range & $\mu_O$ & $\sigma_O$ & IFR \\
  \midrule
  ShareGPT & 4000 & $[1, 20448]$ & 105 & $[2, 500]$ & 280 & 153 & \checkmark \\
  LongBench & 4000 & $[1000, 4000]$ & 2492 & $[1, 19]$ & 11 & 4 & \checkmark \\
  NuminaMath & 4000 & $[16, 1395]$ & 122 & $[800, 3149]$ & 1039 & 228 & \checkmark \\
  WildChat & 3000$^\dagger$ & $[6, 55]^\ddagger$ & 9.3$^\ddagger$ & $[1, 1934]$ & 310 & 245 & \checkmark \\
  \bottomrule
  \multicolumn{8}{l}{\footnotesize $^\dagger$Conversations. $^\ddagger$Turns per conversation.}
  \end{tabular}
\end{table}
\begin{figure}[htbp]
    \centering
    \includegraphics[width=0.9\linewidth]{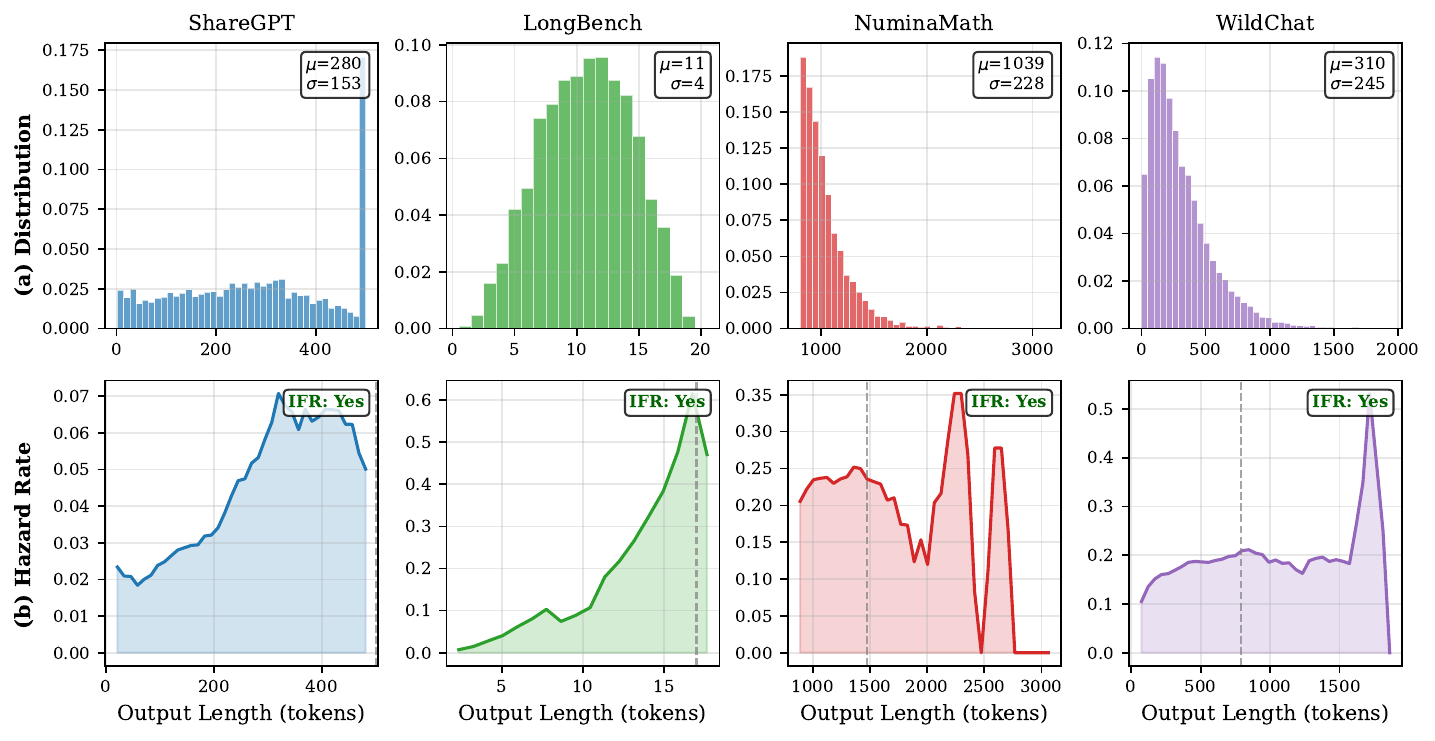}
    \caption{Output length distributions and hazard rates of real workloads
    (Gemma-3-1B-IT on NVIDIA RTX PRO 6000).
    (a) Output length histograms.
    (b) Empirical hazard rate $\hat{h}(t)=\#\{O_i=t\}/\#\{O_i\ge t\}$.
    Dashed lines mark the 95th percentile; beyond this point estimates become unreliable due to data sparsity.}
    \label{fig:real-workload-dist}
\end{figure}

\subsection{Baseline Threshold Validation}
\label{app:baseline-validation}

This appendix validates the structural properties of the CFR baseline normalized threshold $\theta_0$ implied by Proposition~\ref{prop:threshold_cfr}:
\begin{itemize}[nosep]
    \item \textbf{(P1) Scale invariance:} $\theta_0$ is invariant to batch size $N$.
    \item \textbf{(P2) Input-length independence:} $\theta_0$ is insensitive to mean input length $\mu_L$.
    \item \textbf{(P3) Output-length dependence:} $\theta_0$ decreases with mean output length $\mu_O$.
\end{itemize}
We empirically study how the optimal $\hat{\theta}^*$ varies with these parameters under geometric (CFR) output lengths. All experiments use Qwen3-8B on H200 with 4{,}000 requests per configuration (including a 100-request warmup).

\subsubsection{Scale Invariance and Output-Length Dependence (P1, P3)}

\textbf{Setup.} We sweep $N \in \{64,128,256,512\}$ and $\mu_O \in \{64,128,192,256,320,384\}$ with $\mu_L=1$. For each configuration, we run a threshold sweep (3 runs) and report the empirically optimal $\hat{\theta}^*=\hat{k}^*/N$ in Table~\ref{tab:structural-validation}.

\textbf{Results.} Consistent with Proposition~\ref{prop:threshold_cfr}:
\begin{itemize}[nosep]
    \item \textbf{(P1)} For fixed $\mu_O$, $\hat{\theta}^*$ shows no systematic dependence on $N$; the row summaries $\bar{\theta}^*\pm\mathrm{std}$ exhibit no trend with $N$.
    \item \textbf{(P3)} The row means decrease from $0.46$ at $\mu_O=64$ to $0.23$ at $\mu_O=384$, supporting the predicted monotone relationship.
\end{itemize}
The remaining variance is expected due to discrete $k$ choices and measurement noise.

\begin{table}[htbp]
\centering
\caption{Empirical optimal threshold $\hat{\theta}^*$ across batch sizes and output lengths (geometric outputs, $\mu_L = 1$).}
\label{tab:structural-validation}
\small
\begin{tabular}{ccccc|c}
\toprule
& \multicolumn{4}{c|}{Batch Size $N$} & \\
$\mu_O$ & 64 & 128 & 256 & 512 & $\bar{\theta}^* \pm \text{std}$ \\
\midrule
64  & 0.53 & 0.46 & 0.40 & 0.46 & $0.46 \pm 0.05$ \\
128 & 0.53 & 0.33 & 0.33 & 0.40 & $0.40 \pm 0.09$ \\
192 & 0.39 & 0.20 & 0.13 & 0.60 & $0.33 \pm 0.21$ \\
256 & 0.39 & 0.27 & 0.27 & 0.40 & $0.33 \pm 0.07$ \\
320 & 0.33 & 0.27 & 0.33 & 0.13 & $0.26 \pm 0.09$ \\
384 & 0.33 & 0.20 & 0.13 & 0.27 & $0.23 \pm 0.08$ \\
\bottomrule
\end{tabular}
\end{table}

\subsubsection{Input-Length Independence (P2)}

\textbf{Setup.} To probe dependence on $\mu_L$, we fix $N=128$ and $\mu_O=64$, vary $\mu_L \in \{1,32,64,128\}$, and sweep $k$ subject to $k\mu_L \le 16384$. Table~\ref{tab:input_length_sweep} reports throughput for each $k$ and highlights the best $k^*$ per $\mu_L$.

\textbf{Results.} While the optimal discrete $k^*$ shifts with $\mu_L$ under the fixed token-budget constraint, the corresponding normalized optimum $\theta^*$ remains in a comparable range (0.438--0.625), indicating weak sensitivity to $\mu_L$ as predicted by Proposition~\ref{prop:threshold_cfr}.

\begin{table}[h]
\centering
\caption{Throughput (tokens/s) for varying $k^*$ and $\mu_L$ with $N=128$, $\mu_O=64$.
Bold indicates optimal $k^*$ for each $\mu_L$. All experiments satisfy $k \times \mu_L \leq 16384$.}
\label{tab:input_length_sweep}
\small
\begin{tabular}{cc|cccc}
\toprule
$k^*$ & $\theta$ & $\mu_L=1$ & $\mu_L=32$ & $\mu_L=64$ & $\mu_L=128$ \\
\midrule
8 & 0.062 & 3036 & 3366 & 3331 & 3047 \\
16 & 0.125 & 4223 & 4276 & 4273 & 3867 \\
24 & 0.188 & 5091 & 4949 & 4554 & 4278 \\
32 & 0.250 & 5266 & 5413 & 4776 & 4421 \\
40 & 0.312 & 6094 & 5652 & 5356 & 4427 \\
48 & 0.375 & 5943 & 5801 & 5494 & 4578 \\
56 & 0.438 & \textbf{6627} & 5735 & 5460 & 4701 \\
64 & 0.500 & 6368 & 5990 & 5522 & 4580 \\
72 & 0.562 & 5830 & 6059 & \textbf{5553} & \textbf{4825} \\
80 & 0.625 & 5986 & \textbf{6103} & 5342 & 4701 \\
88 & 0.688 & 6337 & 6001 & 5488 & 4637 \\
96 & 0.750 & 6330 & 5892 & 5338 & 4563 \\
104 & 0.812 & 6150 & 5751 & 5242 & 4460 \\
112 & 0.875 & 5874 & 5461 & 4998 & 4241 \\
120 & 0.938 & 5543 & 5184 & 4906 & 4300 \\
128 & 1.000 & 5626 & 5313 & 4995 & 4315 \\
\midrule
\multicolumn{2}{c|}{Optimal $k^*$} & 56 & 80 & 72 & 72 \\
\multicolumn{2}{c|}{Optimal $\theta^*$} & 0.438 & 0.625 & 0.562 & 0.562 \\
\bottomrule
\end{tabular}
\end{table}

\subsection{Empirical Evidence for $\Delta\theta > 0$ under IFR}
\label{app:delta-positive}

Theorem~\ref{thm:threshold_ifr} predicts that the IFR correction satisfies $\Delta\theta>0$, i.e., IFR workloads favor later switching than the CFR baseline. We empirically validate this prediction through controlled experiments.

\textbf{Experimental Setup.}
We generate synthetic output lengths from Gamma distributions with varying shape parameters to simulate different hazard rate behaviors. Specifically, we use $O \sim \text{Gamma}(a, \lambda)$ where:
\begin{itemize}[nosep]
    \item \textbf{CFR case} ($a=1$): The Gamma distribution reduces to an exponential distribution, which exhibits a constant hazard rate $h(t) = \lambda$.
    \item \textbf{IFR case} ($a=2$): The Gamma distribution with shape parameter $a > 1$ exhibits an increasing failure rate, where the hazard rate $h(t)$ increases monotonically with $t$.
\end{itemize}
To ensure a fair comparison, we adjust the rate parameter $\lambda$ such that the mean output length $\mu_O = a/\lambda$ remains constant across both settings. This isolates the effect of the hazard rate structure from differences in average workload characteristics.
For each configuration, we sweep the normalized threshold $\theta = k^*/N$ from 0 to 1 and measure the resulting throughput. All experiments are repeated three times with different random seeds to assess variability.

\textbf{Results.}
Figure~\ref{fig:cfr-ifr} illustrates the throughput as a function of $\theta$ under both CFR and IFR output-length distributions. The shaded regions represent 95\% confidence intervals across the three runs. Under the CFR setting (left panel, $a=1$), the optimal threshold is $\theta^* = 0.16$. Under the IFR setting (right panel, $a=2$), the optimal threshold shifts to $\theta^* = 0.36$, representing a substantial increase of $\Delta\theta = 0.20$. This empirical observation confirms the theoretical prediction that IFR workloads benefit from higher switching thresholds, as later switching allows the system to exploit the accelerating completion rate characteristic of IFR distributions.

\begin{figure}[htbp]
    \centering
    \includegraphics[width=0.8\linewidth]{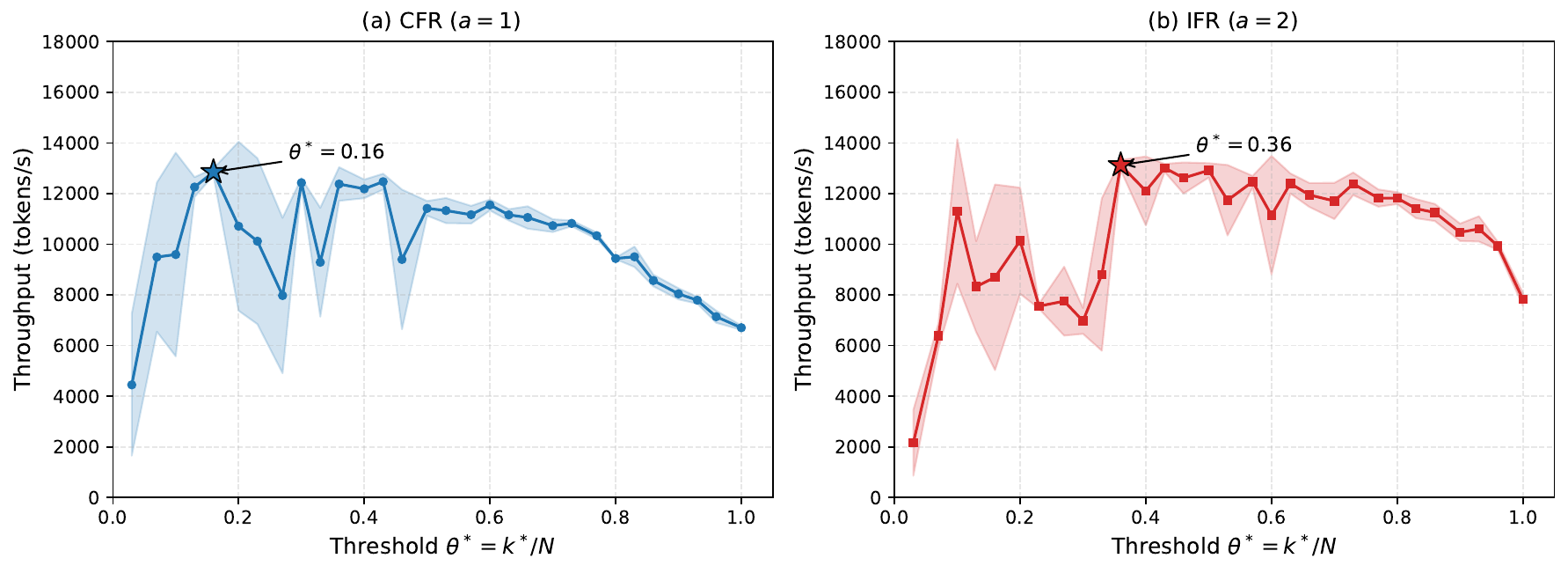}
    \caption{Throughput vs.\ switching threshold $\theta$ under CFR ($a=1$, left) and IFR ($a=2$, right) output-length distributions modeled by Gamma distributions with identical mean output lengths. The shaded regions indicate 95\% confidence intervals over three runs. The optimal $\theta^*$ shifts from 0.16 (CFR) to 0.36 (IFR), consistent with $\Delta\theta > 0$ in Theorem~\ref{thm:threshold_ifr}.}
    \label{fig:cfr-ifr}
\end{figure}

\subsection{Statistical Tests for IFR in Real Workloads}
\label{app:ifr-tests}

Section~\ref{sec:model:threshold} (Theorem~\ref{thm:threshold_ifr}) and Appendix~\ref{app:workload-dist} report empirical hazard rates $\hat h(t)$ that visually appear monotonically increasing for our real-world workloads. Here we provide formal statistical tests of CFR vs.\ IFR. We apply two complementary tests on the empirical $\hat h(t)$ for each workload, restricted to the reliable region $t\in[1,t_{95}]$ where $t_{95}$ is the 95th percentile of output length:
(i)~the \textbf{Mann--Kendall} non-parametric trend test ($H_0$: no monotonic trend);
(ii)~a parametric \textbf{linear regression} $h(t) = p_0 + \eta t$ ($H_0$: $\eta = 0$).

\begin{table}[htbp]
\centering
\caption{Statistical tests for IFR on real-world workloads. All three reject CFR ($H_0$: $\eta=0$ or no monotonic trend) at $p<10^{-5}$ under both tests, supporting Theorem~\ref{thm:threshold_ifr}'s IFR assumption. LongBench is excluded as $\mu_O\approx 12$ is too short for hazard-rate analysis.}
\label{tab:ifr-tests}
\begin{tabular}{lc cc}
\toprule
Workload   & $n$    & $p$ (Mann--Kendall) & $p$ (regression) \\
\midrule
ShareGPT   & 4{,}000 & $6.6\times10^{-10}$ & $2.4\times10^{-13}$ \\
NuminaMath & 4{,}000 & $1.0\times10^{-8}$  & $5.2\times10^{-10}$ \\
WildChat   & 3{,}000 & $1.9\times10^{-5}$  & $3.7\times10^{-7}$  \\
\bottomrule
\end{tabular}
\end{table}

\subsection{Sensitivity to Hazard-Rate Estimation Accuracy}
\label{app:hazard-sensitivity}

Theorem~\ref{thm:threshold_ifr} and Algorithm~\ref{alg:adaptive_joint} estimate $\hat{p}_0$ online from a sliding window of recent completions; if the estimate drifts during transient periods, the resulting $\hat\theta^*$ may move off the true optimum. We characterize this sensitivity by sweeping fixed $\theta^*\in[0.1,0.9]$ and comparing against the adaptive controller on Qwen3-8B, RTX PRO 6000, with 3{,}000 requests per configuration.

Table~\ref{tab:hazard-sensitivity} reports throughput under (i)~a synthetic Gamma($a{=}2$) workload with $\mu_O=256$, and (ii)~ShareGPT ($\mu_O\approx 280$). Two observations support robustness to estimation error:
\textbf{(1) Wide throughput plateau.}
The plateau spans $\theta^*\in[0.6,0.9]$ on Gamma ($\geq 87\%$ of peak) and $[0.5,0.8]$ on ShareGPT ($\geq 95\%$). The optimal $\theta^*$ shifts across workloads (0.7 vs.\ 0.6), but any choice within the plateau yields near-peak throughput, so moderate estimation error has limited impact.
\textbf{(2) Adaptive estimator stays in the plateau.}
The online controller achieves \textbf{98\% of peak} on both workloads. Online $\hat p_0$ shows $2.8\times 10^{-4}$ absolute error (15\% relative) on Gamma versus $7\times 10^{-6}$ (0.5\% relative) on ShareGPT---a $30\times$ gap in relative error---yet both attain $98\%$ peak. Combined with the threshold clipping $\hat\theta^*\in[\theta_{\min},\theta_{\max}]$ and vLLM's preemption mechanism, this provides defense-in-depth against transient mis-estimation.

\begin{table}[htbp]
\centering
\caption{Sensitivity to hazard-rate estimation accuracy. ``Adaptive'' uses the online controller (Algorithm~\ref{alg:adaptive_joint}); other columns sweep fixed $\theta^*$. Throughput in tok/s; ``\% of peak'' below each row.}
\label{tab:hazard-sensitivity}
\setlength{\tabcolsep}{3pt}
\begin{tabular}{ll ccccccccc c}
\toprule
Workload & $\theta^*$ & 0.1 & 0.2 & 0.3 & 0.4 & 0.5 & 0.6 & 0.7 & 0.8 & 0.9 & Adaptive \\
\midrule
Gamma($a{=}2$) & Throughput & 3{,}303 & 3{,}351 & 3{,}399 & 3{,}435 & 3{,}472 & 3{,}585 & \textbf{4{,}129} & 3{,}991 & 3{,}877 & \textbf{4{,}056} \\
($\mu_O{=}256$)& \% peak    & 80\%   & 81\%   & 82\%   & 83\%   & 84\%   & 87\%   & \textbf{100\%} & 97\%   & 94\%   & \textbf{98\%} \\
\midrule
ShareGPT       & Throughput & 6{,}133 & 6{,}347 & 6{,}620 & 6{,}676 & 7{,}023 & \textbf{7{,}180} & 6{,}988 & 6{,}853 & 6{,}544 & \textbf{7{,}048} \\
($\mu_O{\approx}280$)& \% peak & 85\%  & 88\%   & 92\%   & 93\%   & 98\%   & \textbf{100\%} & 97\%   & 95\%   & 91\%   & \textbf{98\%} \\
\bottomrule
\end{tabular}
\end{table}

%%%%%%%%%%%%%%%%%%%%%%%%%%%%%%%%%%%%%%%%%%%%%%%%%%%%%%%%%%%%%%%%%%%%%%%%%%%%%%%
\section{Additional End-to-End Comparisons}

\subsection{Sensitivity Analysis Across Workloads and Platforms}
\label{app:sensitivity}
This appendix provides comprehensive sensitivity analyses for all workloads (ShareGPT, LongBench, WildChat, NuminaMath) across H200 and RTX PRO 6000 platforms. For each workload-platform combination, we perform a full grid search over token budget $B$ and batch size $N$ for v0, v1, and EB($\hat{k}^*$) schedulers. Tables~\ref{tab:optimal-config-a6000} and~\ref{tab:optimal-config-h200} below report four robustness metrics: coefficient of variation (CV), range ratio, $B$-Sens, and $N$-Sens, with lower values indicating better stability across configurations.

\subsubsection{Parameter Sensitivity Metrics} \label{app:sensitivity-metrics}
We define the following metrics to quantify scheduling robustness across deployment configurations. Let $\mathrm{TP}(B, N)$ denote the measured throughput (requests/second) for a scheduler under token budget $B$ and batch size $N$, and let $\mathcal{B}$ and $\mathcal{N}$ be the sets of evaluated token budgets and batch sizes, respectively.

\textbf{Coefficient of Variation (CV).} The CV measures overall throughput variability across all configurations:
$$\text{CV} = \frac{\mathrm{std}(\mathrm{TP}(B, N) : B \in \mathcal{B}, N \in \mathcal{N})}{\mathrm{mean}(\mathrm{TP}(B, N) : B \in \mathcal{B}, N \in \mathcal{N})}$$
where $\mathrm{std}(\cdot)$ and $\mathrm{mean}(\cdot)$ denote standard deviation and mean, respectively. Lower values indicate greater stability.

\textbf{Range Ratio.} The range ratio captures the extremes of performance variation:
$$\text{Range} = \frac{\max_{B,N} \mathrm{TP}(B, N)}{\min_{B,N} \mathrm{TP}(B, N)}$$
Values closer to 1.0 indicate more consistent performance across configurations.

\textbf{Token Budget Sensitivity ($B$-Sens).} This metric isolates the effect of varying $B$ while holding $N$ fixed:
$$B\text{-Sens} = \frac{1}{|\mathcal{N}|} \sum_{N \in \mathcal{N}} \frac{\mathrm{std}(\mathrm{TP}(B, N) : B \in \mathcal{B})}{\mathrm{mean}(\mathrm{TP}(B, N) : B \in \mathcal{B})}$$

\textbf{Batch Size Sensitivity ($N$-Sens).} This metric isolates the effect of varying $N$ while holding $B$ fixed:
$$N\text{-Sens} = \frac{1}{|\mathcal{B}|} \sum_{B \in \mathcal{B}} \frac{\mathrm{std}(\mathrm{TP}(B, N) : N \in \mathcal{N})}{\mathrm{mean}(\mathrm{TP}(B, N) : N \in \mathcal{N})}$$

\subsubsection{Optimal Configuration Details}
\label{app:optimal-config}

Tables~\ref{tab:optimal-config-a6000} and~\ref{tab:optimal-config-h200} report the
optimal token budget ($B$) and batch size ($N$) for each scheduler across all
workloads on RTX PRO 6000 and H200 respectively. These configurations were selected
via grid search to maximize throughput.

\begin{table*}[htbp]
    \centering
    \caption{Optimal configurations and sensitivity metrics for each scheduler across workloads on RTX PRO 6000. Lower sensitivity values indicate greater robustness.}
    \label{tab:optimal-config-a6000}
    \resizebox{\textwidth}{!}{
    \begin{tabular}{ll|ccc|ccc|ccc|ccc}
    \toprule
    & & \multicolumn{3}{c|}{ShareGPT} & \multicolumn{3}{c|}{LongBench} & \multicolumn{3}{c|}{WildChat} & \multicolumn{3}{c}{NuminaMath} \\
    Scheduler & Metric & Qwen3-8B & 30B-A3B & Gemma & Qwen3-8B & 30B-A3B & Gemma & Qwen3-8B & 30B-A3B & Gemma & Qwen3-8B & 30B-A3B & Gemma \\
    \midrule
    \multirow{6}{*}{v0}
    & $B$ & 14336 & 4096 & 18432 & 16384 & 14336 & 16384 & 18432 & 10240 & 14336 & 8192 & 10240 & 16384 \\
    & $N$ & 1536 & 1024 & 512 & 512 & 256 & 1024 & 1024 & 512 & 1536 & 256 & 512 & 512 \\
    & CV & 4.7\% & 12.5\% & 6.3\% & 1.2\% & 3.9\% & 4.7\% & 7.8\% & 9.4\% & 4.6\% & 2.3\% & 5.7\% & 7.2\% \\
    & Range & 1.17$\times$ & 1.56$\times$ & 1.23$\times$ & 1.05$\times$ & 1.14$\times$ & 1.18$\times$ & 1.29$\times$ & 1.38$\times$ & 1.17$\times$ & 1.08$\times$ & 1.21$\times$ & 1.26$\times$ \\
    & $B$-Sens & 1.3\% & 0.5\% & 2.1\% & 1.1\% & 3.6\% & 4.7\% & 3.2\% & 4.1\% & 2.8\% & 0.9\% & 1.7\% & 3.4\% \\
    & $N$-Sens & 4.9\% & 18.7\% & 6.8\% & 0.7\% & 1.8\% & 1.3\% & 8.3\% & 10.2\% & 4.3\% & 2.4\% & 6.1\% & 7.9\% \\
    \midrule
    \multirow{6}{*}{v1}
    & $B$ & 16384 & 8192 & 4096 & 10240 & 14336 & 14336 & 18432 & 14336 & 8192 & 14336 & 14336 & 14336 \\
    & $N$ & 1536 & 1536 & 256 & 1024 & 2048 & 512 & 1024 & 1024 & 256 & 256 & 256 & 256 \\
    & CV & 3.3\% & 15.2\% & 3.7\% & 1.0\% & 2.8\% & 3.9\% & 6.5\% & 13.0\% & 2.8\% & 3.6\% & 6.5\% & 6.7\% \\
    & Range & 1.12$\times$ & 1.57$\times$ & 1.13$\times$ & 1.04$\times$ & 1.09$\times$ & 1.13$\times$ & 1.24$\times$ & 1.54$\times$ & 1.12$\times$ & 1.11$\times$ & 1.20$\times$ & 1.22$\times$ \\
    & $B$-Sens & 0.8\% & 0.4\% & 0.5\% & 1.1\% & 2.8\% & 4.2\% & 4.9\% & 2.0\% & 0.8\% & 0.5\% & 0.2\% & 0.6\% \\
    & $N$-Sens & 3.6\% & 16.8\% & 4.0\% & 0.5\% & 1.5\% & 0.6\% & 5.8\% & 14.3\% & 2.9\% & 3.9\% & 7.2\% & 7.4\% \\
    \midrule
    \multirow{6}{*}{EB($\hat{k}^*$)}
    & $B$ & 14336 & 16384 & 16384 & 16384 & 16384 & 14336 & 10240 & 18432 & 14336 & 4096 & 16384 & 10240 \\
    & $N$ & 1536 & 512 & 512 & 512 & 256 & 1024 & 1024 & 512 & 2048 & 256 & 256 & 256 \\
    & CV & 12.0\% & 22.0\% & 5.1\% & 0.8\% & 3.6\% & 4.5\% & 8.5\% & 10.8\% & 2.2\% & 2.7\% & 8.2\% & 2.3\% \\
    & Range & 1.40$\times$ & 2.12$\times$ & 1.19$\times$ & 1.03$\times$ & 1.14$\times$ & 1.16$\times$ & 1.28$\times$ & 1.53$\times$ & 1.09$\times$ & 1.09$\times$ & 1.28$\times$ & 1.10$\times$ \\
    & $B$-Sens & 0.4\% & 1.9\% & 0.6\% & 0.9\% & 3.7\% & 4.9\% & 4.5\% & 5.7\% & 1.7\% & 0.8\% & 1.3\% & 1.4\% \\
    & $N$-Sens & 13.2\% & 24.1\% & 5.6\% & 0.4\% & 1.4\% & 0.5\% & 9.1\% & 10.7\% & 2.1\% & 2.9\% & 8.9\% & 2.3\% \\
    \bottomrule
    \end{tabular}
    }
    \end{table*}

    \begin{table*}[htbp]
    \centering
    \caption{Optimal configurations and sensitivity metrics for each scheduler across workloads on H200. Lower sensitivity values indicate greater robustness.}
    \label{tab:optimal-config-h200}
    \resizebox{\textwidth}{!}{
    \begin{tabular}{ll|ccc|ccc|ccc|ccc}
    \toprule
    & & \multicolumn{3}{c|}{ShareGPT} & \multicolumn{3}{c|}{LongBench} & \multicolumn{3}{c|}{WildChat} & \multicolumn{3}{c}{NuminaMath} \\
    Scheduler & Metric & Qwen3-8B & 30B-A3B & Gemma & Qwen3-8B & 30B-A3B & Gemma & Qwen3-8B & 30B-A3B & Gemma & Qwen3-8B & 30B-A3B & Gemma \\
    \midrule
    \multirow{6}{*}{v0}
    & $B$ & 18432 & 14336 & 16384 & 18432 & 18432 & 18432 & 18432 & 16384 & 16384 & 10240 & 10240 & 14336 \\
    & $N$ & 2048 & 1536 & 1024 & 256 & 256 & 512 & 1536 & 1024 & 1024 & 256 & 1024 & 512 \\
    & CV & 2.9\% & 35.2\% & 5.4\% & 0.9\% & 4.5\% & 3.8\% & 2.9\% & 5.5\% & 4.9\% & 3.5\% & 2.0\% & 6.3\% \\
    & Range & 1.12$\times$ & 3.47$\times$ & 1.19$\times$ & 1.03$\times$ & 1.15$\times$ & 1.13$\times$ & 1.20$\times$ & 1.21$\times$ & 1.18$\times$ & 1.11$\times$ & 1.08$\times$ & 1.22$\times$ \\
    & $B$-Sens & 2.7\% & 35.7\% & 3.9\% & 0.8\% & 4.9\% & 4.1\% & 2.0\% & 1.6\% & 4.7\% & 1.1\% & 1.3\% & 4.6\% \\
    & $N$-Sens & 2.4\% & 29.5\% & 4.8\% & 0.5\% & 0.4\% & 0.6\% & 1.9\% & 5.9\% & 4.2\% & 3.8\% & 1.9\% & 5.7\% \\
    \midrule
    \multirow{6}{*}{v1}
    & $B$ & 10240 & 8192 & 14336 & 14336 & 14336 & 16384 & 4096 & 4096 & 18432 & 14336 & 8192 & 16384 \\
    & $N$ & 1024 & 2048 & 512 & 256 & 2048 & 512 & 2048 & 1536 & 256 & 256 & 512 & 1024 \\
    & CV & 3.5\% & 23.0\% & 6.1\% & 0.9\% & 4.7\% & 4.8\% & 3.3\% & 8.6\% & 6.4\% & 3.8\% & 2.4\% & 7.7\% \\
    & Range & 1.12$\times$ & 2.02$\times$ & 1.24$\times$ & 1.04$\times$ & 1.16$\times$ & 1.18$\times$ & 1.16$\times$ & 1.30$\times$ & 1.24$\times$ & 1.12$\times$ & 1.08$\times$ & 1.26$\times$ \\
    & $B$-Sens & 0.9\% & 1.4\% & 4.5\% & 0.9\% & 5.0\% & 5.1\% & 2.9\% & 1.3\% & 6.1\% & 0.7\% & 0.6\% & 5.4\% \\
    & $N$-Sens & 3.8\% & 25.3\% & 6.5\% & 0.5\% & 0.6\% & 1.0\% & 2.6\% & 9.4\% & 5.4\% & 4.2\% & 2.6\% & 7.7\% \\
    \midrule
    \multirow{6}{*}{EB($\hat{k}^*$)}
    & $B$ & 16384 & 4096 & 10240 & 14336 & 16384 & 14336 & 16384 & 14336 & 18432 & 18432 & 10240 & 18432 \\
    & $N$ & 1536 & 1536 & 256 & 2048 & 1024 & 1024 & 1024 & 1024 & 1536 & 256 & 512 & 512 \\
    & CV & 9.3\% & 29.1\% & 9.4\% & 1.1\% & 4.4\% & 4.5\% & 3.5\% & 12.3\% & 3.7\% & 2.5\% & 2.3\% & 6.4\% \\
    & Range & 1.33$\times$ & 2.50$\times$ & 1.32$\times$ & 1.04$\times$ & 1.16$\times$ & 1.16$\times$ & 1.12$\times$ & 1.60$\times$ & 1.14$\times$ & 1.08$\times$ & 1.07$\times$ & 1.23$\times$ \\
    & $B$-Sens & 0.8\% & 1.2\% & 3.2\% & 1.1\% & 4.7\% & 4.9\% & 2.0\% & 3.4\% & 3.6\% & 0.6\% & 0.9\% & 3.6\% \\
    & $N$-Sens & 10.2\% & 32.0\% & 10.2\% & 0.6\% & 0.7\% & 0.9\% & 3.7\% & 13.0\% & 3.6\% & 2.7\% & 2.5\% & 6.7\% \\
    \bottomrule
    \end{tabular}
    }
    \end{table*}

\textbf{Configuration variability.} Optimal configurations vary substantially across workloads, models, and hardware. Token budgets range from 4,096 to 18,432, while batch sizes span 256 to 2,048. No single configuration universally dominates, highlighting the importance of workload-aware tuning.

\textbf{Sensitivity patterns.} Prefill-heavy workloads (LongBench) exhibit the lowest sensitivity (CV $<$ 5\%, range ratio $<$ 1.2$\times$), as throughput is dominated by prefill computation with limited scheduling flexibility. In contrast, balanced workloads (ShareGPT) show higher sensitivity, particularly for larger models (Qwen3-30B-A3B: CV up to 35\%, range ratio up to 3.47$\times$), where the prefill--decode balance is more delicate. Decode-heavy workloads (NuminaMath) demonstrate moderate sensitivity, with most configurations achieving CV $<$ 8\%.

\textbf{Scheduler comparison.} All three schedulers exhibit comparable sensitivity profiles within each workload category. EB($\hat{k}^*$) does not introduce additional tuning burden: its average CV and range ratio are similar to v0 and v1 across most configurations. The primary differentiator remains throughput rather than robustness.

\subsubsection{Optimal Configurations for Synthetic Workloads (Figure~\ref{fig:e2e-simulated})}
\label{app:optimal-config-synthetic}

Tables~\ref{tab:optimal-config-a6000} and~\ref{tab:optimal-config-h200} cover the four real-world workloads. The three synthetic workloads used in Figure~\ref{fig:e2e-simulated} (decode-heavy, balanced, prefill-heavy) have separate optima, obtained from the same grid search over $B\in\{4096,\dots,18432\}$ and $N\in\{256,\dots,2048\}$ and reported in Table~\ref{tab:optimal-config-synthetic}.

\begin{table}[h]
\centering
\caption{Best-throughput $(B, N)$ per scheduler for each synthetic workload (Qwen3-8B). These configurations underlie Figure~\ref{fig:e2e-simulated}; selected from grid search.}
\label{tab:optimal-config-synthetic}
\setlength{\tabcolsep}{4pt}
\begin{tabular}{l l cc cc}
\toprule
& & \multicolumn{2}{c}{RTX PRO 6000} & \multicolumn{2}{c}{H200} \\
\cmidrule(lr){3-4} \cmidrule(lr){5-6}
Workload & Scheduler & $B$ & $N$ & $B$ & $N$ \\
\midrule
\multirow{2}{*}{Decode-heavy}   & v1           & 4096  & 256  & 18432 & 512  \\
                                & EB($\hat{k}^*$) & 4096  & 512  & 14336 & 1536 \\
\midrule
\multirow{2}{*}{Balanced}       & v1           & 8192  & 256  & 10240 & 512  \\
                                & EB($\hat{k}^*$) & 14336 & 512  & 18432 & 1024 \\
\midrule
\multirow{2}{*}{Prefill-heavy}  & v1           & 10240 & 256  & 18432 & 1024 \\
                                & EB($\hat{k}^*$) & 18432 & 512  & 10240 & 512  \\
\bottomrule
\end{tabular}
\end{table}

\subsection{End-to-End Results for Gemma-3-1B-IT}
\label{app:gemma_e2e}

Table~\ref{tab:e2e-gemma} reports end-to-end throughput for Gemma-3-1B-IT on both GPUs. The 1B-parameter model sits at a different operating point than the Qwen models in the main text: its lightweight Attention avoids bandwidth-induced interference (Appendix~\ref{app:additional_marginal_cost}), so the marginal-cost gap $\beta_{\mathrm{MB}}^e - \beta_{\mathrm{EB}}^w$ on the LHS of~\eqref{eq:comparison_condition} is small. The RHS is also small at this scale because per-iteration fixed overhead $\alpha$ scales with model size. Both sides shrink together, leaving a narrow but consistent margin in EB's favor. On the RTX PRO 6000, EB($\hat{k}^*$) achieves consistent improvements over v1 across all workloads (average $+$3.9\%), primarily from better scheduling of decode-heavy phases. On the H200, the advantage is larger on ShareGPT ($+$10.6\%) and LongBench ($+$4.4\%), while WildChat shows a slight regression ($-$1.5\%).

\begin{table}[htbp]
\centering
\caption{Throughput (RPS) for Gemma-3-1B-IT on real-world workloads. \% Diff.\ as defined in Table~\ref{tab:e2e-real}.}
\label{tab:e2e-gemma}
\setlength{\tabcolsep}{4pt}
\begin{tabular}{l cccc cccc}
\toprule
& \multicolumn{4}{c}{RTX PRO 6000} & \multicolumn{4}{c}{H200} \\
\cmidrule(lr){2-5} \cmidrule(lr){6-9}
Workload & v0 & v1 & EB($\hat{k}^*$) & \% Diff. & v0 & v1 & EB($\hat{k}^*$) & \% Diff. \\
\midrule
ShareGPT   & 49.31 & 48.94 & \textbf{50.84} & +3.9\%  & 51.24 & 52.73 & \textbf{58.33} & +10.6\% \\
LongBench  & 51.09 & 55.08 & \textbf{55.31} & +0.4\%  & 92.74 & 92.84 & \textbf{96.93} & +4.4\% \\
WildChat   & 55.97 & 53.36 & \textbf{56.26} & +5.4\%  & 71.82 & \textbf{75.17} & 74.05 & $-$1.5\% \\
NuminaMath & 7.01  & 8.36  & \textbf{8.87}  & +6.1\%  & 11.25 & 12.49 & \textbf{12.80} & +2.5\% \\
\midrule
\textit{Average} & & & & \textit{+3.9\%} & & & & \textit{+4.0\%} \\
\bottomrule
\end{tabular}
\end{table}

\subsection{SLO-Constrained Goodput for EB\textsuperscript{+}}
\label{app:eb_plus_slo}

Section~\ref{sec:exp:eb_plus} summarizes the SLO-attainment behavior of EB\textsuperscript{+} at moderate load. Here we report per-request SLO attainment (fraction of requests meeting both TTFT and TPOT targets simultaneously) on RTX PRO 6000 with Qwen3-8B at $c=512$.
Under a strict TPOT target ($<\!50$\,ms), all schedulers fail because mean TPOT exceeds the target on this bandwidth-constrained GPU. Under a relaxed target ($<\!100$\,ms), v1 collapses to $\sim\!5\%$ attainment because its TPOT distribution is wide; EB\textsuperscript{+} achieves the best balance, attaining $80.3\%$ at TTFT$<\!10$\,s and $48.4\%$ at TTFT$<\!5$\,s.
On H200 at $c=512$, mean TPOT is universally low ($<\!53$\,ms) so TTFT dominates the SLO; EB\textsuperscript{+} selects v1 there and matches its attainment at all targets.

\begin{table}[htbp]
\centering
\caption{SLO attainment (\% of requests meeting both TTFT and TPOT targets) on RTX PRO 6000 (Qwen3-8B, $c=512$). Under strict TPOT all schedulers fail; under relaxed TPOT, EB\textsuperscript{+} dominates at moderate-to-loose TTFT targets.}
\label{tab:eb_plus_slo}
\small
\begin{tabular}{l ccc ccc}
\toprule
& \multicolumn{3}{c}{TPOT $<\!50$\,ms} & \multicolumn{3}{c}{TPOT $<\!100$\,ms} \\
\cmidrule(lr){2-4}\cmidrule(lr){5-7}
TTFT target & v1 & EB($\hat{k}^*$) & EB\textsuperscript{+} & v1 & EB($\hat{k}^*$) & EB\textsuperscript{+} \\
\midrule
$<\!2$\,s   & \textbf{0.6} & 0.0 & 0.0          & \textbf{4.9} & 0.0  & 1.2  \\
$<\!5$\,s   & 0.6          & 0.0 & \textbf{1.1} & 5.0          & 6.2  & \textbf{48.4} \\
$<\!10$\,s  & 0.6          & 0.5 & \textbf{1.3} & 5.8          & 77.3 & \textbf{80.3} \\
\bottomrule
\end{tabular}
\end{table}

\subsection{Comparison with Prefill--Decode Disaggregation}
\label{app:eb_plus_disagg}

Disaggregated serving (DistServe~\cite{zhong2024distserve}, Splitwise~\cite{patel2024splitwise}) is an alternative to MB that physically separates prefill and decode onto dedicated GPU pools. We compare EB\textsuperscript{+} (which separates phases temporally on the same GPUs under data parallelism) against vLLM's built-in disaggregation scheduler.

\textbf{Two-GPU setting (DP=2 vs.\ 1P+1D).}\label{app:disagg_2gpu}
Table~\ref{tab:disagg-2gpu} compares v1, EB\textsuperscript{+}, and vLLM's 1P+1D disaggregation scheduler on $2\times$ RTX PRO 6000 and $2\times$ H200 (Qwen3-8B, $\mu_L=512$, $\mu_O=256$). EB\textsuperscript{+} matches or exceeds v1 throughput at every concurrency on both GPUs. On RTX PRO 6000, EB\textsuperscript{+} outperforms disaggregation by $+31.8\%$ throughput with $3.3\times$ lower TTFT at $c=512$ and by $+22.8\%$ over v1 at $c=2048$. On H200, EB\textsuperscript{+} leads disaggregation by $+17.7\%$ at $c=64$ and $+14.8\%$ at $c=512$, with $+4.3\%$ over v1 at $c=2048$. Disaggregation OOMs at $c=2048$ on both GPUs (``--'') because KV blocks remain pinned during prefill-to-decode transfer and the scheduler lacks admission backpressure.

\begin{table}[htbp]
\centering
\caption{Two-GPU comparison: v1, EB\textsuperscript{+} (DP=2), and vLLM 1P+1D disaggregation. Throughput in tok/s. ``--'' denotes OOM.}
\label{tab:disagg-2gpu}
\setlength{\tabcolsep}{4pt}
\begin{tabular}{ll ccc ccc}
\toprule
& & \multicolumn{3}{c}{$2\times$ H200} & \multicolumn{3}{c}{$2\times$ RTX PRO 6000} \\
\cmidrule(lr){3-5}\cmidrule(lr){6-8}
$c$ & Metric & v1 & EB\textsuperscript{+} & Disagg & v1 & EB\textsuperscript{+} & Disagg \\
\midrule
\multirow{3}{*}{64}    & Throughput & 26{,}545 & \textbf{26{,}857} & 22{,}825 & 8{,}529 & \textbf{8{,}691} & 8{,}606 \\
                       & TTFT (s)   & 0.042    & \textbf{0.040}   & 0.091    & 0.087   & \textbf{0.084}   & 0.164 \\
                       & TPOT (ms)  & 7.11     & \textbf{7.03}    & 8.08     & 22.30    & 22.10             & \textbf{21.10} \\
\midrule
\multirow{3}{*}{512}   & Throughput & \textbf{45{,}731} & 45{,}539 & 39{,}649 & 21{,}526 & \textbf{21{,}710} & 16{,}467 \\
                       & TTFT (s)   & 0.788  & \textbf{0.779}   & 3.608    & 1.900 & \textbf{1.900} & 6.219 \\
                       & TPOT (ms)  & 29.18  & 29.44             & \textbf{22.81} & 58.90     & \textbf{58.40} & 58.64 \\
\midrule
\multirow{3}{*}{2048}  & Throughput & 46{,}042 & \textbf{48{,}015} & --       & 16{,}228 & \textbf{19{,}926} & -- \\
                       & TTFT (s)   & 7.601    & \textbf{6.330}   & --       & 45.900     & \textbf{45.300}    & -- \\
                       & TPOT (ms)  & \textbf{97.44} & 99.71      & --       & 150.00    & \textbf{63.80}    & -- \\
\bottomrule
\end{tabular}
\end{table}

\textbf{Four-GPU setting across P:D ratios.}\label{app:disagg_4gpu}
We further compare DP=4 against disaggregation with 1P+3D, 2P+2D, and 3P+1D allocations on $4\times$ RTX PRO 6000 and $4\times$ H200 (Qwen3-8B, vLLM P2P NCCL KV transfer; ``--'' denotes OOM/timeout on the decode GPU). Three workload mixes span the prefill--decode spectrum: \emph{prefill-heavy} ($\mu_L=1024,\mu_O=128$), \emph{balanced} ($\mu_L=512,\mu_O=512$), and \emph{decode-heavy} ($\mu_L=128,\mu_O=1024$); 2k requests per setting at $c\in\{128,256,512\}$.

Three observations emerge from Table~\ref{tab:disagg-4gpu}:

\textbf{(1) The optimal P:D ratio is workload-dependent and choosing wrong is costly.}
At $c=128$ on $4\times$ RTX PRO 6000: prefill-heavy favors 2P+2D ($+63\%$ over EB\textsuperscript{+}), balanced favors 1P+3D ($+14\%$), and decode-heavy favors EB\textsuperscript{+} itself. No single ratio dominates: 2P+2D is best for prefill-heavy but worst for decode-heavy among disagg options; 1P+3D is best for balanced but worst for prefill-heavy.

\textbf{(2) Disaggregation exhibits structural memory fragility.}
Under disaggregation, a request's entire prefill KV cache ($L$ tokens) is transferred to the decode GPU in bulk upon prefill completion. Multiple prefill GPUs produce completed requests faster than a single decode GPU can retire them, causing concurrent KV occupancy on the decode GPU to scale as $O(L\,\cdot\,n_{\text{pending}})$ with no backpressure mechanism. This explains the OOM pattern: at $c=256$, 3P+1D OOMs on prefill-heavy ($L=1024$) but not on decode-heavy ($L=128$)---the same number of pending requests consumes $8\times$ more KV memory. At $c=512$, 3P+1D OOMs on balanced ($L=512$) as well; at $c=2048$ (Appendix~\ref{app:disagg_2gpu}) all disagg configurations OOM.

\textbf{(3) EB\textsuperscript{+} is consistently competitive without manual tuning.}
EB\textsuperscript{+} achieves the best or near-best throughput in 7 of 9 (workload $\times$ $c$) settings on RTX PRO 6000, with 2P+2D excelling on prefill-heavy at $c=128$ and $c=256$. Crucially, EB\textsuperscript{+} maintains $3$--$18\times$ lower TTFT than every disagg configuration across every setting and never OOMs. In a deployment where workload mix is unknown or shifts over time, EB\textsuperscript{+} provides robust performance without operators predicting the optimal P:D ratio.

\begin{table*}[htbp]
\centering
\caption{4-GPU disaggregation comparison across concurrencies $c\in\{128,256,512\}$ (Qwen3-8B, 2k prompts). Throughput in tok/s. ``--'' denotes OOM/timeout. v1 and EB\textsuperscript{+} both use DP=4.}
\label{tab:disagg-4gpu}
\setlength{\tabcolsep}{3pt}
\resizebox{\textwidth}{!}{
\begin{tabular}{lll ccccc ccccc}
\toprule
& & & \multicolumn{5}{c}{$4\times$ RTX PRO 6000} & \multicolumn{5}{c}{$4\times$ H200} \\
\cmidrule(lr){4-8} \cmidrule(lr){9-13}
$c$ & Workload & Metric & v1 & EB\textsuperscript{+} & 1P+3D & 2P+2D & 3P+1D & v1 & EB\textsuperscript{+} & 1P+3D & 2P+2D & 3P+1D \\
\midrule
\multirow{9}{*}{128}
 & \multirow{3}{*}{Prefill-heavy}
 & Throughput & 25{,}994 & 26{,}207 & 23{,}245 & \textbf{42{,}577} & 31{,}490 & 67{,}278 & 65{,}297 & 40{,}535 & \textbf{82{,}400} & 70{,}255 \\
 & & TTFT (ms)  & 185 & \textbf{171} & 4{,}385 & 476 & 482 & 133 & \textbf{124} & 2{,}763 & 662 & 412 \\
 & & TPOT (ms)  & 43.0 & 42.8 & \textbf{14.6} & 23.1 & 32.6 & 16.1 & 16.7 & \textbf{6.4} & 8.5 & 13.1 \\
\cmidrule(lr){2-13}
 & \multirow{3}{*}{Balanced}
 & Throughput & 11{,}013 & 11{,}063 & \textbf{12{,}629} & 11{,}640 & 9{,}023 & 28{,}646 & 28{,}686 & \textbf{31{,}341} & 29{,}388 & 21{,}963 \\
 & & TTFT (ms)  & 121 & \textbf{116} & 316 & 229 & 272 & 75 & \textbf{74} & 249 & 178 & 233 \\
 & & TPOT (ms)  & 22.6 & 22.5 & \textbf{19.2} & 21.1 & 27.3 & 8.6 & 8.6 & \textbf{7.5} & 8.2 & 11.0 \\
\cmidrule(lr){2-13}
 & \multirow{3}{*}{Decode-heavy}
 & Throughput & 7{,}723 & \textbf{7{,}690} & 7{,}469 & 6{,}848 & 5{,}450 & 18{,}245 & \textbf{19{,}336} & 18{,}861 & 17{,}227 & 13{,}536 \\
 & & TTFT (ms)  & 65 & \textbf{62} & 223 & 166 & 220 & 46 & \textbf{44} & 233 & 179 & 220 \\
 & & TPOT (ms)  & 18.1 & 18.2 & \textbf{18.6} & 20.4 & 25.6 & 7.6 & \textbf{7.2} & \textbf{7.2} & 7.9 & 10.2 \\
\midrule
\multirow{9}{*}{256}
 & \multirow{3}{*}{Prefill-heavy}
 & Throughput & 37{,}623 & 37{,}208 & 23{,}102 & \textbf{44{,}892} & -- & \textbf{88{,}488} & 87{,}268 & 40{,}638 & 77{,}117 & -- \\
 & & TTFT (ms)  & 380 & \textbf{381} & 10{,}283 & 3{,}369 & -- & \textbf{304} & 307 & 6{,}103 & 2{,}614 & -- \\
 & & TPOT (ms)  & 57.8 & 58.5 & \textbf{14.5} & 23.3 & -- & 23.4 & 23.8 & \textbf{6.4} & 8.3 & -- \\
\cmidrule(lr){2-13}
 & \multirow{3}{*}{Balanced}
 & Throughput & 16{,}913 & 16{,}870 & \textbf{19{,}831} & 17{,}551 & 11{,}402 & 43{,}648 & 43{,}729 & \textbf{47{,}433} & 42{,}016 & 26{,}455 \\
 & & TTFT (ms)  & 235 & \textbf{229} & 844 & 505 & 664 & 158 & \textbf{152} & 681 & 395 & 596 \\
 & & TPOT (ms)  & 28.6 & 28.7 & \textbf{22.9} & 26.9 & 42.1 & 10.9 & 10.9 & \textbf{8.9} & 10.9 & 17.6 \\
\cmidrule(lr){2-13}
 & \multirow{3}{*}{Decode-heavy}
 & Throughput & 12{,}678 & \textbf{12{,}699} & 12{,}176 & 10{,}595 & 7{,}021 & \textbf{30{,}696} & 29{,}763 & 29{,}495 & 24{,}820 & 16{,}682 \\
 & & TTFT (ms)  & 125 & \textbf{119} & 719 & 404 & 592 & 108 & \textbf{110} & 715 & 438 & 599 \\
 & & TPOT (ms)  & 21.4 & \textbf{21.3} & 21.7 & 25.4 & 38.9 & 8.7 & 8.9 & \textbf{8.5} & 10.6 & 16.0 \\
\midrule
\multirow{9}{*}{512}
 & \multirow{3}{*}{Prefill-heavy}
 & Throughput & \textbf{47{,}947} & 47{,}322 & 22{,}531 & 43{,}326 & -- & \textbf{106{,}324} & 105{,}432 & 39{,}925 & 78{,}486 & -- \\
 & & TTFT (ms)  & \textbf{1{,}141} & 1{,}205 & 21{,}412 & 9{,}245 & -- & \textbf{805} & 843 & 12{,}357 & 5{,}741 & -- \\
 & & TPOT (ms)  & 85.3 & 86.1 & \textbf{14.6} & 23.9 & -- & 36.2 & 36.2 & \textbf{6.4} & 8.2 & -- \\
\cmidrule(lr){2-13}
 & \multirow{3}{*}{Balanced}
 & Throughput & 23{,}914 & 24{,}037 & \textbf{25{,}080} & 20{,}397 & -- & \textbf{61{,}106} & 60{,}988 & 46{,}261 & 48{,}895 & -- \\
 & & TTFT (ms)  & \textbf{697} & 697 & 4{,}013 & 1{,}676 & -- & 464 & \textbf{413} & 5{,}753 & 1{,}128 & -- \\
 & & TPOT (ms)  & 38.7 & 38.6 & \textbf{29.9} & 43.4 & -- & \textbf{14.7} & 14.9 & 9.2 & 17.3 & -- \\
\cmidrule(lr){2-13}
 & \multirow{3}{*}{Decode-heavy}
 & Throughput & 18{,}414 & \textbf{18{,}396} & 17{,}224 & 13{,}169 & 7{,}579 & 46{,}622 & \textbf{46{,}782} & 38{,}287 & 30{,}567 & 17{,}544 \\
 & & TTFT (ms)  & 319 & \textbf{284} & 1{,}842 & 1{,}074 & 2{,}295 & \textbf{251} & 314 & 1{,}908 & 1{,}216 & 1{,}720 \\
 & & TPOT (ms)  & \textbf{28.2} & 28.2 & 28.8 & 39.1 & 69.4 & \textbf{10.9} & 10.8 & 11.9 & 16.2 & 29.4 \\
\bottomrule
\end{tabular}
}
\end{table*}

\end{document}